\documentclass[10pt]{article}

\usepackage{a4wide}
\usepackage[utf8]{inputenc}
\usepackage{microtype}
\usepackage{times}
\usepackage{newtxtext}
\usepackage{soul}
\usepackage{url}
\usepackage{xcolor}
\usepackage{framed}
\usepackage[hidelinks]{hyperref}

\definecolor{shadecolor}{RGB}{220,235,250}
\renewenvironment{abstract}
  {\begin{shaded}\noindent\textbf{Abstract.\ }\ignorespaces}
  {\end{shaded}}

\usepackage{amsmath,amssymb,amsthm,mathtools}
\usepackage{bbm}
\usepackage[small]{caption}
\usepackage{subcaption}
\usepackage{graphicx}
\usepackage{booktabs}
\usepackage{enumitem}
\usepackage{algorithm}
\usepackage{algorithmic}
\usepackage{pgfplots}
\pgfplotsset{compat=1.18}
\usetikzlibrary{positioning,arrows.meta,calc,shapes.geometric,fit,backgrounds}
\usepackage{float}

\DeclareMathOperator{\sign}{sign}
\newcommand{\sgn}{\operatorname{sign}}
\newcommand{\polar}{\operatorname{polar}}
\newcommand{\op}{\mathrm{op}}
\newcommand{\F}{\mathrm{F}}

\newcommand{\R}{\mathbb{R}}
\newcommand{\E}{\mathbb{E}}
\newcommand{\Prob}{\mathbb{P}}
\newcommand{\Var}{\operatorname{Var}}
\newcommand{\tr}{\operatorname{tr}}

\newcommand{\argmin}{\operatorname*{arg\,min}}

\newcommand{\ip}[2]{\left\langle #1, #2 \right\rangle}
\newcommand{\normF}[1]{\left\lVert #1 \right\rVert_{\mathrm{F}}}
\newcommand{\normop}[1]{\left\lVert #1 \right\rVert_{\mathrm{op}}}
\newcommand{\normstar}[1]{\left\lVert #1 \right\rVert_{*}}
\newcommand{\normone}[1]{\left\lVert #1 \right\rVert_{1}}
\newcommand{\dd}{\mathrm{d}}
\providecommand{\coloneqq}{\mathrel{\mathop:}=}
\newcommand{\dSignMuon}{\textsc{Dist-SignMuon}}
\newcommand{\signsgd}{\textsc{SignSGD}}
\newcommand{\muon}{\textsc{Muon}}

\providecommand{\keywords}[1]{\par\noindent\textbf{Keywords:} #1}

\usepackage{mdframed}
\mdfdefinestyle{thmstyle}{%
  linecolor=blue!35,
  backgroundcolor=blue!7,
  linewidth=0.5pt,
  roundcorner=3pt,
  skipabove=6pt,
  skipbelow=6pt,
  innertopmargin=5pt,
  innerbottommargin=5pt,
  innerleftmargin=7pt,
  innerrightmargin=7pt,
}
\newmdtheoremenv[style=thmstyle]{theorem}{Theorem}
\newmdtheoremenv[style=thmstyle]{lemma}[theorem]{Lemma}
\newmdtheoremenv[style=thmstyle]{corollary}[theorem]{Corollary}
\newmdtheoremenv[style=thmstyle]{proposition}[theorem]{Proposition}
\newmdtheoremenv[style=thmstyle]{assumption}[theorem]{Assumption}
\theoremstyle{definition}
\newmdtheoremenv[style=thmstyle]{definition}[theorem]{Definition}
\newmdtheoremenv[style=thmstyle]{example}[theorem]{Example}
\theoremstyle{remark}
\newmdtheoremenv[style=thmstyle]{remark}[theorem]{Remark}

\title{SignMuon: Communication-Efficient Distributed Muon Optimization}
\author{
Neel Mishra$^{1,\dagger}$ \quad Kushagara Trivedi$^{2,\dagger}$ \quad Pawan Kumar$^{2}$ \\[0.3em]
\small $^1$Microsoft \quad $^2$IIIT Hyderabad \\[0.1em]
\small $^{\dagger}$Equal contribution.
}
\date{}

\begin{document}

\maketitle

\begin{abstract}
Distributed training of large neural networks is bottlenecked by full-precision gradient communication and by coordinatewise optimizers that ignore the matrix structure of weight tensors. We propose Sign-Muon, a 1-bit, matrix-aware optimizer that combines majority-vote sign aggregation from signSGD with the polar-step framework of Muon. Each worker forms a Muon-style direction by taking the polar factor of its momentum via a Newton--Schulz iteration, transmits only the entrywise signs, and aggregates by majority vote; an optional local polar step further enforces orthogonality at no extra communication cost.

Under spectral-norm smoothness and bounded-variance stochastic gradients, the spectral-norm normalized sign step yields an $\mathcal{O}(1/\sqrt{T})$ nonconvex rate for an $\ell_1$-based stationarity measure. With unimodal symmetric noise, majority vote across $M$ workers cuts the stochastic term by $1/\sqrt{M}$, matching signSGD. In the $\alpha$-$\beta$ model, distributed Sign-Muon needs only one integer sum-allreduce per iteration; all orthogonalization is local, giving a $32\times$ bandwidth reduction over float32 ($4\times$ for int8).

Across 330 CIFAR-10/ResNet-50 configurations Sign-Muon attains the best validation accuracy (92.15\%); its 4-GPU majority-vote variant reaches 92.02\% with 37\% less training time at matched effective batch. On nanoGPT, Sign-Muon achieves lower perplexity and better anytime performance than other sign-based baselines, with favorable weak-scaling up to 16 GPUs.
\end{abstract}

\keywords{distributed optimization; communication-efficient training; sign methods; matrix-structured updates; polar decomposition}

%================== Section: Introduction ================
\section{Introduction}
\label{sec:intro}

The scale of the neural networks like large language models (LLMs) often runs into the limitations of distributed systems, when training large neural networks. In synchronous data-parallel training, numerous workers must constantly sync large amounts of data. Gradients, updates, or the state of the optimiser, using collective communication, and this is expensive in terms of bandwidth and latency, eating away at the time available per iteration. Two fairly recent approaches try to combat different aspects of this bottleneck.

\paragraph{Why better optimizers matter across applications.}
Stochastic optimization sits at the heart of nearly every modern learning system: transformer-based language and sequence models \cite{vaswani2023attentionneed}, structured tokenization and curriculum design for mathematics and science problem solving \cite{mandlecha2022hybrid}, dynamic learning-rate schedules for plain gradient descent \cite{mishra2023anglebaseddynamiclearning}, second-order updates for adversarial min--max problems in generative models \cite{mishra2024gaussnewtonapproachminmaxoptimization,danisetty2023adaptiveconsensusoptimizationmethod}, light-weight deep architectures for extreme multi-label classification \cite{mishra2023lightweightdeepextrememultilabel}, and stable training in multi-agent reinforcement learning via spectral normalization \cite{mehta2023effectsspectralnormalizationmultiagent}. In each of these settings the optimizer is invoked millions of times, so any improvement in its per-step communication or its convergence behavior translates directly into shorter wall-clock training and lower energy use. This makes communication-efficient, geometry-aware optimizers like Sign-Muon broadly relevant beyond a single benchmark.

\paragraph{Sign-based communication.}
Communication between workers is often a major bottleneck, and methods like signSGD can help, when training deep neural networks. This is because signSGD replaces each gradient entry with its sign, and aggregates the signs of all workers, which basically boils down to 1-bit communication per parameter. As a result, we get convergence guarantees in nonconvex settings under very weak noise assumptions, thanks to \cite{bernstein2018signsgd}. The majority vote used in signSGD also makes it more robust to random noise and outliers, because it can effectively cancel out the outliers with the votes from the other workers.

\paragraph{Matrix-aware update geometry.}
Many parameters such as MLP weights and attention projections are in the form of matrices, but standard optimizers view them as one-dimensional vectors, when training deep networks. Muon fixes this discrepancy by orthogonalizing momentum matrices via the polar decomposition, which is then approximated using the Newton-Schulz algorithm, and leads to update directions that are more favorable in the early stages of optimisation. However, distributed variants of Muon carry over the communication cost associated with full-precision synchronisation.

\paragraph{Our goal and approach.}
We seek an optimizer that (i) retains Muon's matrix-aware geometry and its spectral-norm analysis framework, while (ii) achieving \textsc{signSGD}-style extreme communication compression.
We propose \textsc{Sign-Muon}: each worker computes a Muon-style direction locally and transmits only its entrywise signs.
Workers obtain the aggregated direction by majority vote and apply a signed update whose spectral norm is explicitly controlled.
Orthogonalization (via SVD or Newton-Schulz) can be applied either to the local momentum (before sign transmission) or to the aggregated sign matrix (after majority vote); in both cases, it is computed locally and does not increase communication.

\paragraph{Theory of Sign-Muon.}
Sign methods measure progress through an $\ell_1$-aligned proxy that reflects coordinatewise sign reliability \cite{bernstein2018signsgd}, while Muon's descent lemma is naturally expressed under spectral-norm smoothness \cite{jordan6muon}.
By normalizing the signed direction to have spectral norm at most one, we obtain a spectral descent inequality whose leading term is an $\ell_1$ quantity.
This yields us an $\mathcal{O}(1/\sqrt{T})$ nonconvex convergence rate for an averaged $\ell_1$ stationarity measure.
Under unimodal symmetric noise, the majority vote approach improves the stochastic term by $1/\sqrt{M}$ with $M$ workers, matching the classical \textsc{signSGD} benefit. Detailed theory is presented in appendix.

\paragraph{Empirical evaluation of Sign-Muon.}
We evaluate Sign-Muon on two workloads: CIFAR-10 \cite{krizhevsky2009learning} image classification and nanoGPT language modeling.
For CIFAR-10/ResNet-50 \cite{he2016deep}, a sweep over 330 configurations shows that SignMuon achieves the best validation accuracy (92.15\%), and the 4-GPU majority-vote variant reaches 92.02\% while reducing training time by 37\% in the matched-batch setting.
For nanoGPT, Sign-Muon achieves lower perplexity and better anytime performance than other sign-based baselines, and we report weak-scaling behavior up to 16 GPUs.

\paragraph{Our Contributions.}
\begin{itemize}
\item \textbf{Algorithm.} We introduce \textsc{Sign-Muon}, a 1-bit, matrix-aware optimizer that embeds majority-vote sign aggregation inside Muon's polar-step framework.
\item \textbf{Convergence and communication analysis.} Under spectral-norm smoothness and bounded-variance stochastic gradients, we prove $\mathcal{O}(1/\sqrt{T})$ convergence in an $\ell_1$-based stationarity measure and show a $1/\sqrt{M}$ improvement from majority vote under unimodal symmetric noise.
We also give an $\alpha$--$\beta$ communication model showing that distributed Sign-Muon requires a single sign allreduce per iteration.
\item \textbf{Experiments.} We report results on vision and language workloads, including hyperparameter sweeps, time/memory measurements, and distributed scaling.
\end{itemize}

\section{Related Work}
\label{sec:related_work}

\noindent\textbf{Sign-based optimization and robust aggregation.}
Sign methods cut through the noise by sending just the sign of a stochastic gradient, when communicating and applying updates. Well-known signSGD algorithm from \cite{bernstein2018signsgd} provides non-convex guarantees and proposed the idea of distributed majority voting which gives us a more reliable sign. Building off that work, subsequent studies have looked at the robustness and ability of majority voting to withstand errors in collaborative and adversarial settings. One major drawback of these methods is that they can be very ``coordinate-based'', and don't take into account the matrix structure that exists in neural network parameters, but Sign-Muon gets around this issue by using matrix-aware directions that are generated from Muon-style orthogonalization.

\begin{algorithm}[t!]
\caption{\textsc{Sign-Muon} (single worker)}
\label{alg:sign-muon}
\begin{algorithmic}[1]
\STATE \textbf{Input:} stepsizes $\{\eta_t\}$, momentum $\beta\in[0,1)$, weight decay $\lambda\ge 0$,
NS iterations $K$, stability constant $\varepsilon>0$, scaling $\textsc{scale}\in\{\textsc{spectral},\textsc{fro}\}$, power iters $P$.
\STATE Initialize momentum $M_0 \gets 0$.
\FOR{$t=0,1,\ldots,T-1$}
  \STATE Draw mini-batch; compute stochastic gradient $G_t$ at $W_t$.
  \STATE $\widetilde{G}_t \gets G_t + \lambda W_t$.
  \STATE $M_{t+1} \gets \beta M_t + (1-\beta)\widetilde{G}_t$.
  \STATE $U_t \gets \mathrm{PolarNS}(M_{t+1};K,\varepsilon,\textsc{scale},P)$ \hfill
  \STATE $D_t \gets \sign(U_t)$ \hfill (entrywise)
  \STATE $W_{t+1} \gets W_t - \eta_t D_t$.
\ENDFOR
\end{algorithmic}
\end{algorithm}

\begin{algorithm}[t!]
\caption{\textsc{Sign-Muon} (distributed with all-reduce)}
\label{alg:sign-muon-dist}
\begin{algorithmic}[1]
\STATE \textbf{Input:} Learning rates $\{\eta_t\}$, $M$ workers, momentum $\beta\in[0,1)$, weight decay $\lambda\ge 0$, NS iterations $K$, tolerance $\varepsilon>0$, scale $\in\{\textsc{spectral},\textsc{fro}\}$, power iterations $P$
\STATE \textbf{Initialize:} $M_0^{(m)} \gets 0$ for all workers $m \in \{1,\ldots,M\}$
\FOR{$t=0,1,\ldots,T-1$}
  \STATE \textbf{Each worker $m$ independently:}
  \STATE \quad Compute local gradient $G_t^{(m)}$ at current parameters $W_t$
  \STATE \quad Apply weight decay: $\widetilde{G}_t^{(m)} \gets G_t^{(m)} + \lambda W_t$
  \STATE \quad Update local momentum: $M_{t+1}^{(m)} \gets \beta M_t^{(m)} + (1-\beta)\widetilde{G}_t^{(m)}$
  \STATE \quad Compute polar decomposition: $U_t^{(m)} \gets \mathrm{PolarNS}(M_{t+1}^{(m)};K,\varepsilon,\textsc{scale},P)$
  \STATE \quad Extract local sign: $S_t^{(m)} \gets \sign(U_t^{(m)}) \in \{-1,+1\}^d$
  \STATE \textbf{All-reduce (collective):}
  \STATE \quad All workers participate in sum-reduce:
  \STATE \quad \quad $\mathrm{sum\_signs} \gets \sum_{m=1}^M S_t^{(m)}$ \hfill \textcolor{gray}{\textit{(all-reduce with SUM, int8)}}
  \STATE \quad All workers compute majority vote locally:
  \STATE \quad \quad $\bar{S}_t \gets \sign(\mathrm{sum\_signs}) \in \{-1,+1\}^d$ \hfill \textcolor{gray}{\textit{(ties default to +1)}}
  \STATE \textbf{Each worker $m$ independently:}
  \STATE \quad Update parameters: $W_{t+1} \gets W_t - \eta_t \bar{S}_t$
\ENDFOR
\STATE \textbf{Communication cost:} $d$ bytes per step (int8 encoding, $d$ = dimension)
\end{algorithmic}
\end{algorithm}

\begin{figure}[t!]
\centering
\begin{tikzpicture}[
  font=\small,
  >=Stealth,
  worker/.style={draw,rounded corners=2pt,fill=blue!8,inner sep=3pt,align=center,minimum width=2.6cm,minimum height=0.55cm},
  pipe/.style={draw,rounded corners=2pt,fill=gray!10,inner sep=2pt,align=center,minimum width=2.5cm,minimum height=0.5cm},
  outsign/.style={draw,rounded corners=2pt,fill=blue!18,inner sep=2pt,align=center,minimum width=2.5cm,minimum height=0.5cm},
  collective/.style={draw,rounded corners=3pt,fill=orange!28,thick,inner sep=5pt,align=center,minimum width=9.0cm,minimum height=0.85cm},
  reduced/.style={draw,rounded corners=2pt,fill=green!18,inner sep=3pt,align=center,minimum width=5.2cm,minimum height=0.55cm},
  upd/.style={draw,rounded corners=2pt,fill=red!12,inner sep=3pt,align=center,minimum width=6.5cm,minimum height=0.55cm},
  arr/.style={->,semithick},
  bus/.style={->,semithick,color=orange!75!black}
]

% ---------- Worker columns ----------
\def\xA{-2.8} \def\xC{0} \def\xD{2.8}
\def\yhead{4.8} \def\yA{4.0} \def\yB{3.2} \def\yC{2.4} \def\yD{1.6} \def\yS{0.8}

\foreach \x/\name/\lab in {\xA/wA/\textbf{Worker $1$},\xD/wM/\textbf{Worker $M$}} {
  \node[worker] (\name) at (\x,\yhead) {\lab};
  \node[pipe]   (G\name) at (\x,\yA) {$\widetilde G_t^{(m)} \!=\! G_t^{(m)} \!+\! \lambda W_t$};
  \node[pipe]   (M\name) at (\x,\yB) {$M_{t+1}^{(m)}\!=\!\beta M_t^{(m)} \!+\! (1{-}\beta)\widetilde G_t^{(m)}$};
  \node[pipe]   (P\name) at (\x,\yC) {$U_t^{(m)} = \mathrm{PolarNS}(M_{t+1}^{(m)})$};
  \node[outsign](S\name) at (\x,\yS) {$S_t^{(m)} = \sign(U_t^{(m)}) \in \{\pm1\}^d$};
  \draw[arr] (\name)  -- (G\name);
  \draw[arr] (G\name) -- (M\name);
  \draw[arr] (M\name) -- (P\name);
  \draw[arr] (P\name) -- (S\name);
}
\node at (\xC,\yhead) {$\cdots$};
\foreach \y in {\yA,\yB,\yC,\yS} { \node at (\xC,\y) {$\cdots$}; }

% ---------- Collective ----------
\node[collective] (col) at (0,-0.4) {%
  \textsc{AllReduce} (\textsf{SUM}, \texttt{int8}): \quad
  $\displaystyle \sum_{m=1}^{M} S_t^{(m)}$};

\foreach \name in {wA,wM} {
  \draw[bus] (S\name.south) -- (S\name.south |- col.north);
}

% ---------- Reduced sign + update ----------
\node[reduced] (red) at (0,-2.0) {$\bar S_t = \sign\!\left(\sum_{m=1}^M S_t^{(m)}\right) \in \{\pm 1\}^d$};
\draw[arr] (col) -- (red);
\node[upd] (upd) at (0,-3.0) {Each worker (locally): \ $W_{t+1} = W_t - \eta_t\,\bar S_t$};
\draw[arr] (red) -- (upd);

% legend / annotation -- placed to the upper right, clear of worker columns
\node[anchor=west,font=\footnotesize,align=left,
      draw=orange!60,fill=orange!6,rounded corners=2pt,inner sep=3pt]
  at (4.5,4.0) {%
  \textcolor{orange!75!black}{\small$\blacksquare$} 1 collective / iter\\
  \textcolor{orange!75!black}{\small$\blacksquare$} payload $s_8 = d$ B};

\end{tikzpicture}
\caption{Schematic of distributed \textsc{Sign-Muon} with \textsf{SUM} \textsc{AllReduce} (Algorithm~\ref{alg:sign-muon-dist}). Each worker independently computes its momentum $M_{t+1}^{(m)}$, the Newton--Schulz polar direction $U_t^{(m)}$, and the entrywise sign $S_t^{(m)}\in\{-1,+1\}^d$. A single integer \textsf{SUM} \textsc{AllReduce} aggregates the sign buffers across workers; each worker then thresholds locally with $\sign(\cdot)$ to recover the majority vote $\bar S_t$, and applies the parameter update with no further communication. Spectral normalization and Newton--Schulz iterations are local; only one \texttt{int8} collective per iteration crosses the network.}
\label{fig:schematic-allreduce}
\end{figure}

\noindent\textbf{Communication-efficient distributed training.}
The usual techniques involve quantisation, sparsification and low-dimensional representations, or sketches, when compressing communication in machine learning. QSGD \cite{alistarh2017qsgd} and TernGrad \cite{wen2017terngrad} are two examples of methods that quantify gradients with a fine-tuned balance between bias and variance. Deep Gradient Compression \cite{lin2017deep} uses simple heuristics to cut down the size of the updates, and PowerSGD \cite{vogels2019powersgd} zeroes in on the most important low-rank features of the matrix, using its structure.
To combat the negative side effects of overly aggressive compression, error-feedback mechanisms can be employed to restore the typical convergence rate of SGD, as demonstrated in Stich's research \cite{stich2018sparsified}. These methods commonly compress either the raw gradients or the low-rank factorisation of them. Sign-Muon goes a step further, sending communications using extreme 1-bit signals, and introduces a form of matrix normalisation, called polar normalisation, before sending the sign.

\begin{figure}[t]
  \centering

  % ---------------- Row 1 ----------------
  \begin{subfigure}[t]{0.235\textwidth}
    \centering
    \includegraphics[width=\linewidth]{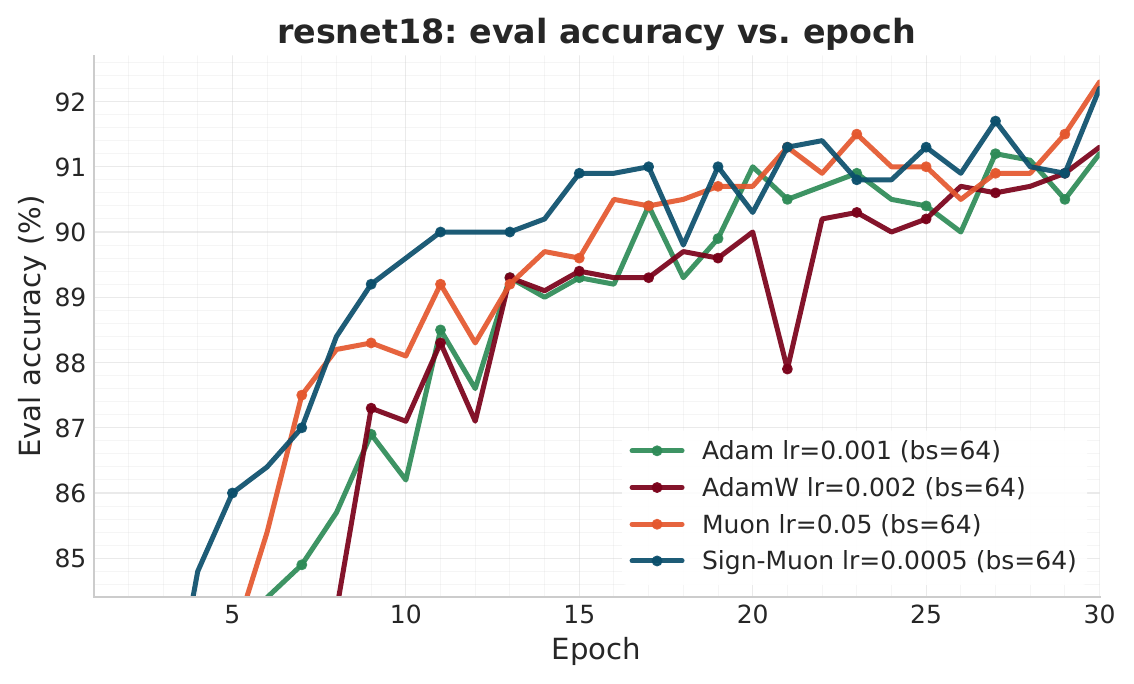}
    \caption{ResNet-18 eval}
    \label{fig:r18-eval-single-worker}
  \end{subfigure}\hfill
  \begin{subfigure}[t]{0.235\textwidth}
    \centering
    \includegraphics[width=\linewidth]{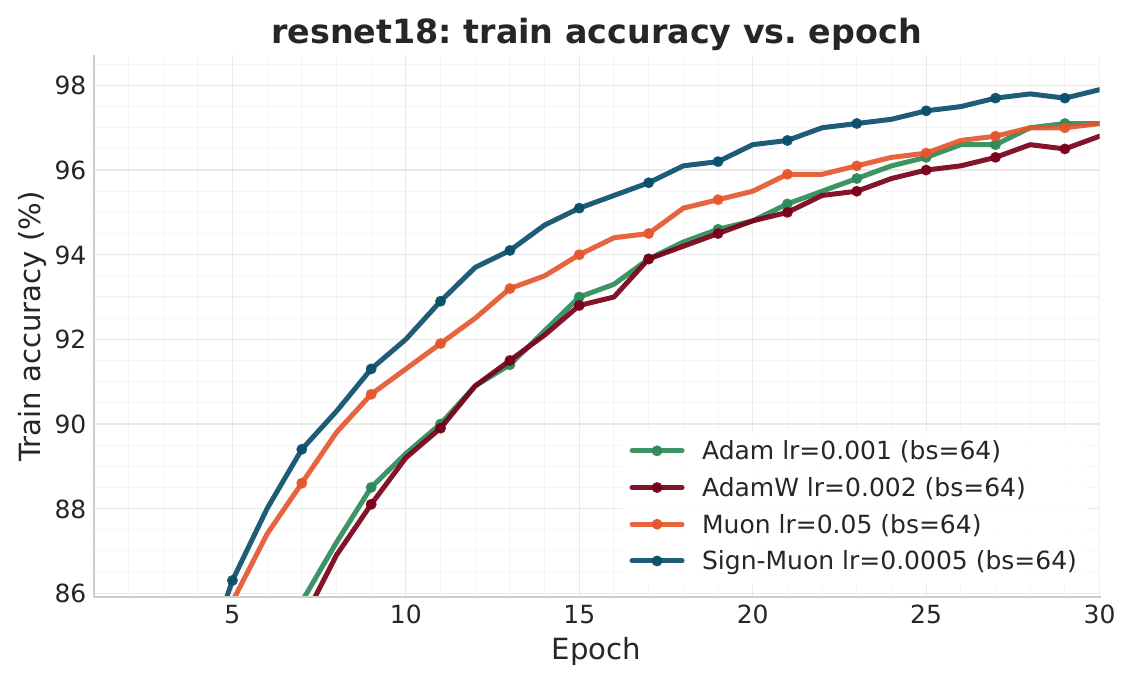}
    \caption{ResNet-18 train}
    \label{fig:r18-train-single-worker}
  \end{subfigure}\hfill
  \begin{subfigure}[t]{0.235\textwidth}
    \centering
    \includegraphics[width=\linewidth]{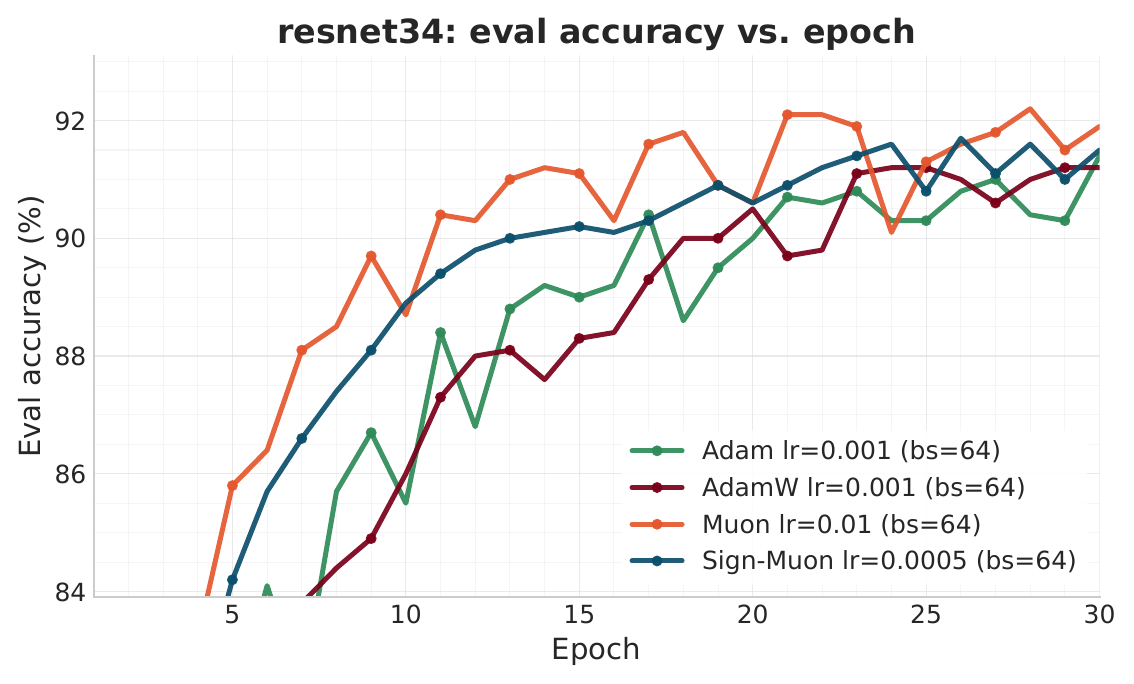}
    \caption{ResNet-34 eval}
    \label{fig:r34-eval-single-worker}
  \end{subfigure}\hfill
  \begin{subfigure}[t]{0.235\textwidth}
    \centering
    \includegraphics[width=\linewidth]{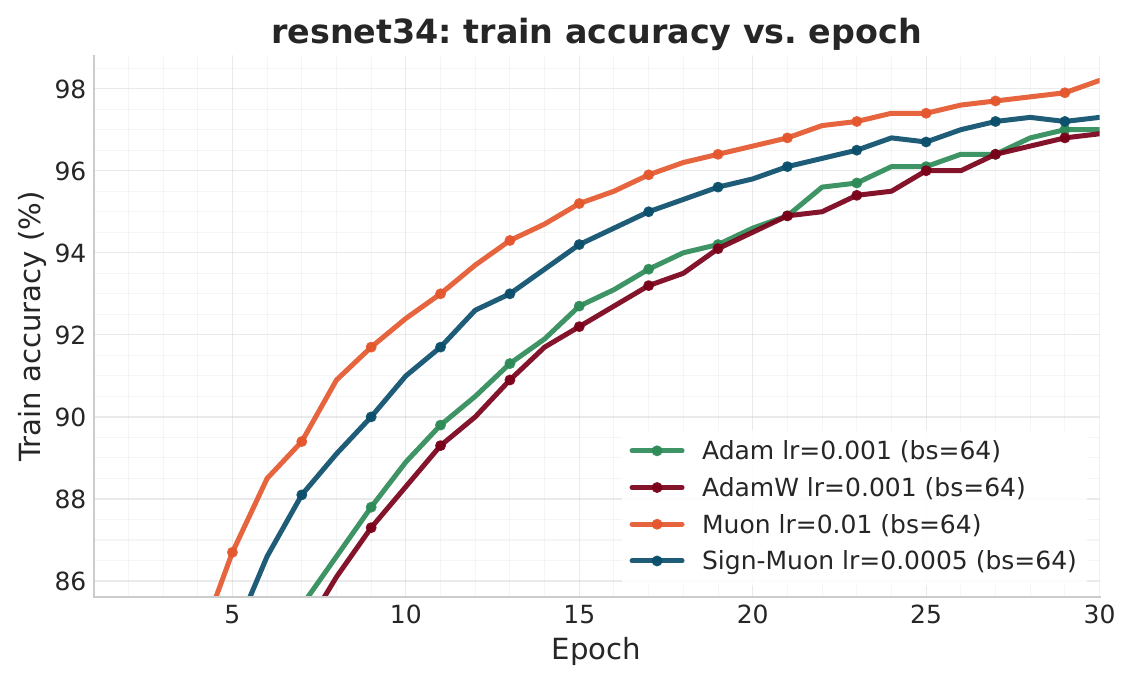}
    \caption{ResNet-34 train}
    \label{fig:r34-train-single-worker}
  \end{subfigure}

  \vspace{0.6em}

  % ---------------- Row 2 ----------------
  \begin{subfigure}[t]{0.235\textwidth}
    \centering
    \includegraphics[width=\linewidth]{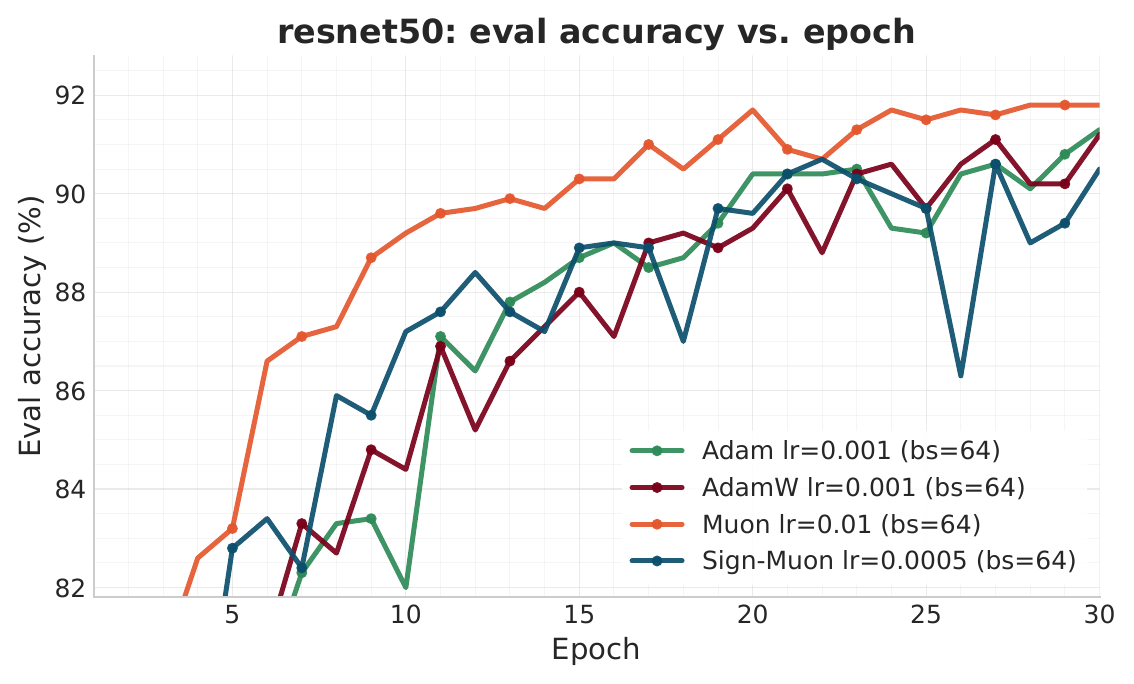}
    \caption{ResNet-50 eval}
    \label{fig:r50-eval-single-worker}
  \end{subfigure}\hfill
  \begin{subfigure}[t]{0.235\textwidth}
    \centering
    \includegraphics[width=\linewidth]{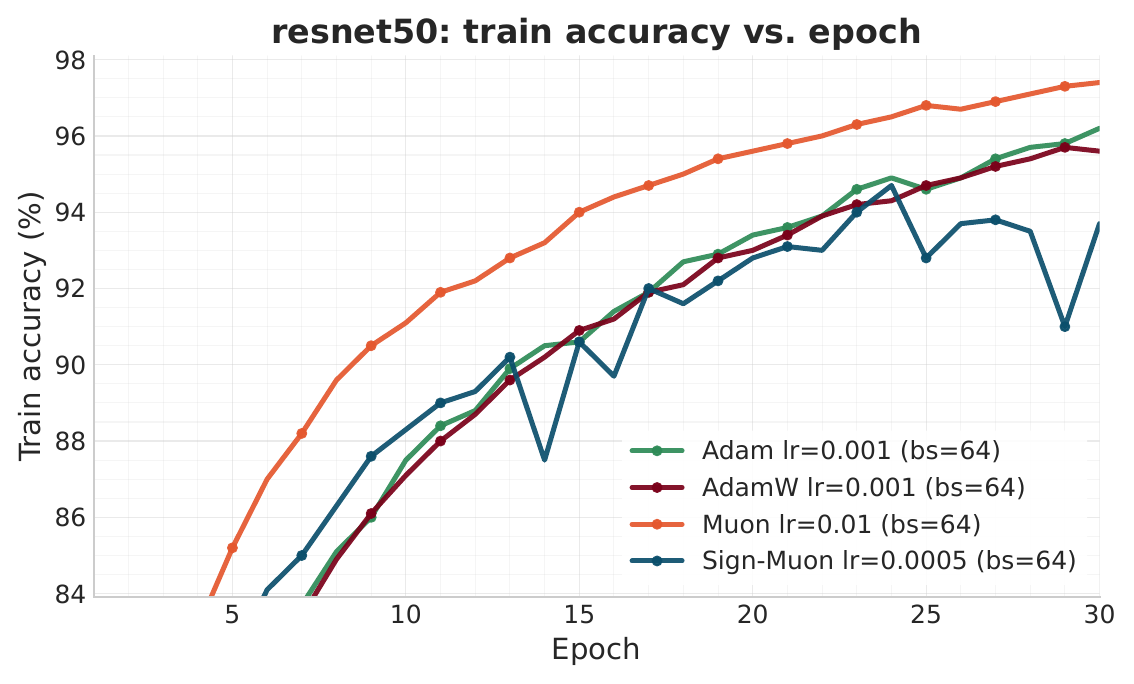}
    \caption{ResNet-50 train}
    \label{fig:r50-train-single-worker}
  \end{subfigure}\hfill
  \begin{subfigure}[t]{0.235\textwidth}
    \centering
    \includegraphics[width=\linewidth]{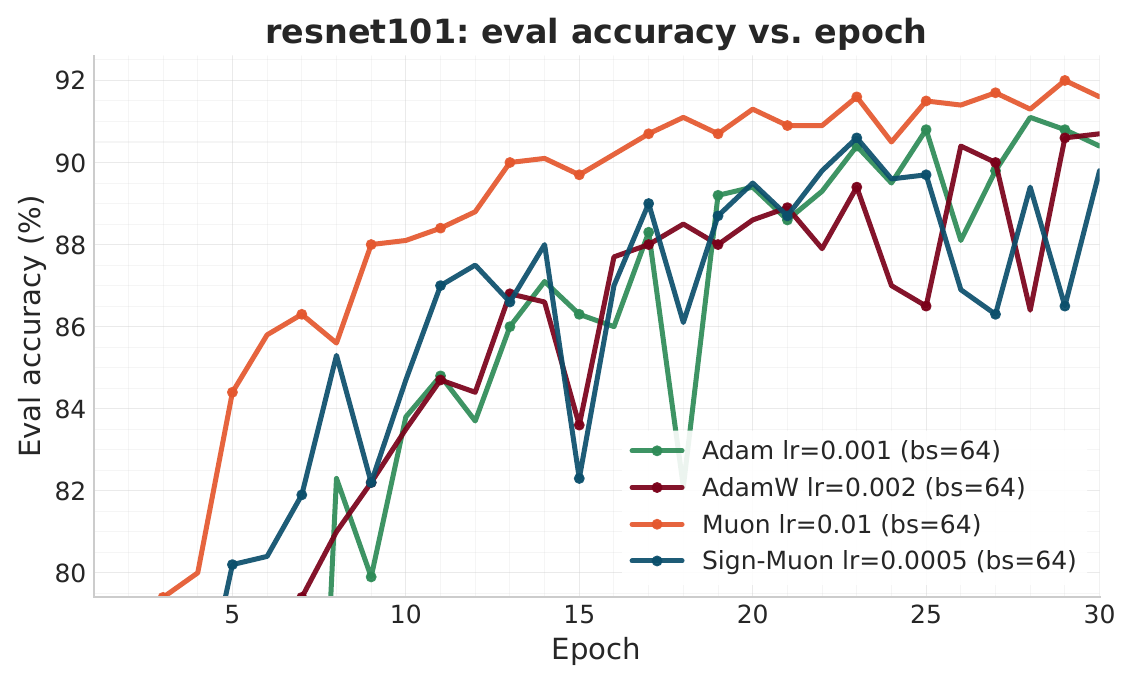}
    \caption{ResNet-101 eval}
    \label{fig:r101-eval-single-worker}
  \end{subfigure}\hfill
  \begin{subfigure}[t]{0.235\textwidth}
    \centering
    \includegraphics[width=\linewidth]{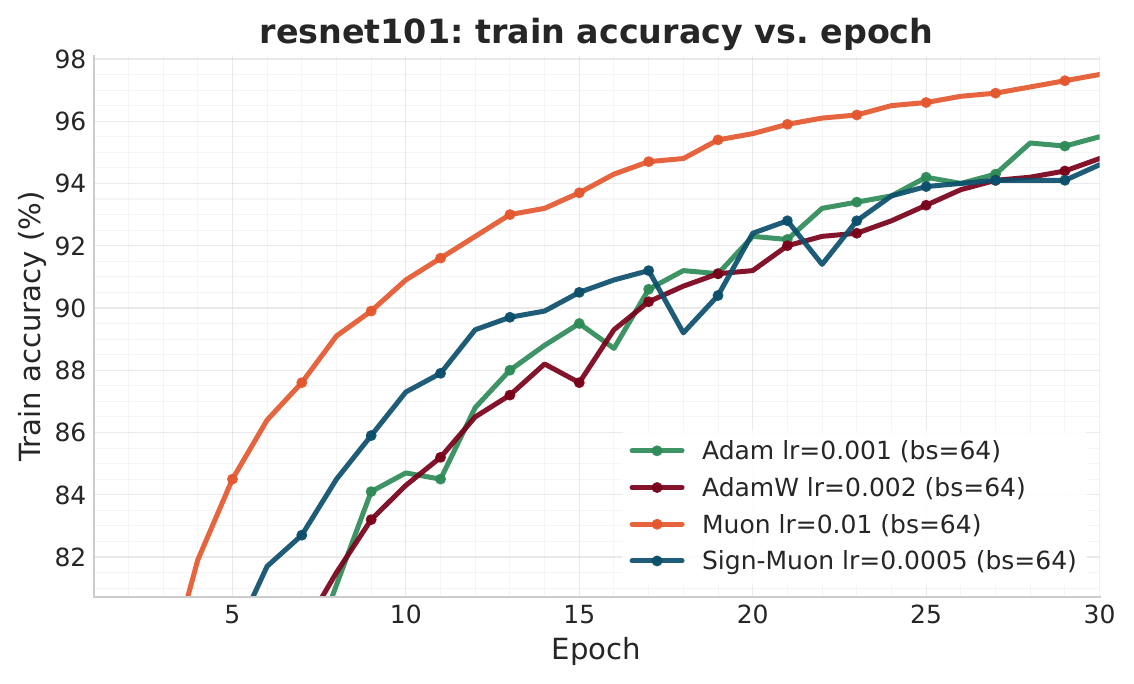}
    \caption{ResNet-101 train}
    \label{fig:r101-train-single-worker}
  \end{subfigure}

  \caption{Single-worker CIFAR-10 accuracy vs.\ epochs.
  Top: ResNet-18/34; Bottom: ResNet-50/101.
  For each architecture, evaluation (left) and training (right).}
  \label{fig:single-worker-arch-pairs}
\end{figure}

\noindent\textbf{Matrix-/geometry-aware optimizers.}
Concerning the way we update parameters in our neural networks, structure-aware optimizers like $K$-FAC \cite{martens2015optimizing} and Shampoo \cite{gupta2018shampoopreconditionedstochastictensor} leverages the shape of the parameters. Moreover, they use curvature-inspired preconditioners.
Meanwhile, Muon \cite{jordan6muon,shen2025muon} uses polar decomposition to orthogonalise momentum directions in a way that's inspired by classical matrix-iteration theory \cite{article,bowie1971}. Coming from a different direction are the momentum/EMA methods \cite{POLYAK19641,pmlr-v28-sutskever13} and decoupled weight decay \cite{loshchilov2017decoupled}, still very much part of the mainstay of contemporary training. Our proposed method, Sign-Muon, which brings together the streamlined communication of sign aggregation with the matrix-aware insights of Muon is another more recent addition.

\noindent\textbf{Recent sign-based designs.}
Sign updates also appear in non-communication settings (e.g., Lion \cite{chen2023symbolic}), and recent work strengthens Byzantine-robust sign aggregation \cite{mengoli2025byzantine}.
Overall, prior work tends to separate (i) sign-based methods emphasizing extreme compression and robustness and (ii) matrix-aware methods emphasizing structured directions under full precision.
Sign-Muon bridges these lines by embedding majority-vote sign aggregation within Muon's polar-step framework.

%========== Section: Proposed Method and Algorithms ===========
\section{Proposed Method and Algorithms}
\label{sec:algorithms}

We describe \textsc{Sign-Muon} and its distributed majority-vote implementation (Alg.~\ref{alg:sign-muon}, \ref{alg:sign-muon-dist}, \ref{alg:sign-muon-dist-1bit}).
Throughout, a matrix-shaped parameter block is $W\in\mathbb{R}^{m\times n}$ with $d=mn$ entries.
Stochastic gradients on a mini-batch at iteration $t$ are denoted $G_t$ (or $G_t^{(p)}$ on worker $p$).
We use $\sign(\cdot)$ entrywise, the spectral norm $\|\cdot\|_{\op}$, and the Frobenius norm $\|\cdot\|_{\F}$.
For matrix $M=U\Sigma V^\top$ (thin SVD), we write $\mathrm{polar}(M)=UV^\top$ for its polar factor.

\subsection{From \textsc{signSGD} and Muon to \textsc{Sign-Muon}}
\label{sec:method_overview}

Sign-Muon combines (i) majority-vote sign aggregation from \textsc{signSGD} \cite{bernstein2018signsgd} with (ii) Muon's matrix-aware polar update directions \cite{jordan6muon,shen2025muon}.
We briefly recall both ingredients and then describe their integration.

\paragraph{Sign aggregation (majority vote).}
Given local sign matrices $S_t^{(p)}\in\{-1,+1\}^{m\times n}$, the distributed aggregate is computed entrywise as follows
\[
\bar{S}_t \;=\; \sign\!\Big(\sum_{p=1}^M S_t^{(p)}\Big)
\]
which can be implemented by an integer \textsf{SUM} reduction followed by local thresholding.

\paragraph{Polar directions.}
When muon updates its momentum matrix, it uses the polar factor $U_t = \mathrm{polar}(M_{t+1})$ which can be calculated either by Singular Value Decomposition (SVD) or a Newton-Schulz (NS) iteration (Alg.~\ref{alg:polar-ns}), and this process gives it a controlled spectrum of the update direction that is in sync with spectral-norm regularisation.

\paragraph{Sign-Muon update.}
Each worker computes a local polar direction $U_t^{(p)}$ from its momentum, transmits $S_t^{(p)}=\sign(U_t^{(p)})$, aggregates $\bar{S}_t$ by majority vote, and updates with $\bar{S}_t$ (equivalently $\bar{S}_t/\sqrt{mn}$ after rescaling the stepsize).
Optionally, each worker applies a local polar step $\mathrm{polar}(\bar{S}_t)$ after aggregation; since $\bar{S}_t$ is already shared, this does not change communication.

\label{sec:algorithms_pseudocode}

\begin{figure}[t!]
    \centering
    \includegraphics[width=0.98\linewidth]{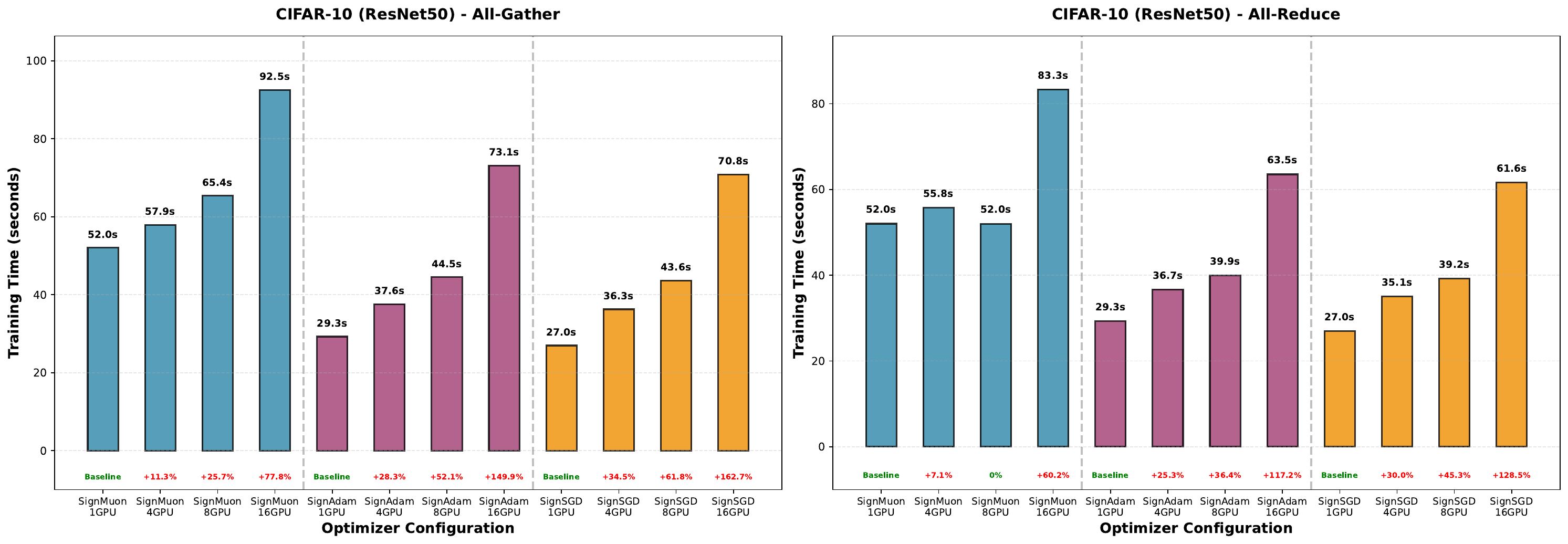}
    \caption{Weak scaling comparison of All-Gather and All-Reduce for CIFAR-10 with ResNet-50.}
    \label{fig:weak-scaling-cifar10}
\end{figure}

%================ Section: Analysis ==============
\section{Convergence and Communication Analysis}
\label{sec:analysis}

We summarize the main communication and convergence properties of Sign-Muon; detailed derivations appear in Appendices~\ref{app:comm}--\ref{sec:main-convergence}.

\paragraph{Update and notation.}
For a matrix block $W\in\mathbb{R}^{m\times n}$ with $d=mn$, let $g_t=\nabla f(W_t)$.
In the distributed memory setting with $M$ workers, each worker forms a local sign matrix $S_t^{(p)}\in\{-1,+1\}^{m\times n}$ and workers compute the majority vote
\begin{equation}
\label{eq:mv_def}
\bar{S}_{t,ij} \;=\; \sign\!\Big(\sum_{p=1}^{M} S_{t,ij}^{(p)}\Big),\qquad \bar{S}_t=(\bar{S}_{t,ij})_{ij}.
\end{equation}
We analyze the normalized update
\begin{equation}
\label{eq:signmuon_update}
D_t \;=\; \frac{\bar{S}_t}{\sqrt{mn}},\qquad W_{t+1}=W_t-\eta_t D_t,
\end{equation}
noting that $\|D_t\|_{\op}\le \|D_t\|_{\F}=1$.
Updating with $\bar{S}_t$ directly is equivalent after absorbing $\sqrt{mn}$ into $\eta_t$.

\paragraph{Assumptions.}
We need some basic assumptions (i) $f(W)\ge f^\ast$ is lower bounded, (ii) spectral-norm smoothness
$\|\nabla f(W)-\nabla f(W')\|_\ast \le L_\ast \|W-W'\|_{\op}$,
and (iii) unbiased stochastic gradients with coordinatewise variance $\mathrm{Var}(\widetilde{G}_{t,ij}\mid W_t)\le \sigma_{ij}^2/n_b$.
For the majority-vote improvement we additionally assume unimodal symmetric per-coordinate noise, following similar assumptions in \textsc{signSGD} \cite{bernstein2018signsgd}.

\paragraph{Stationarity metric.}
We measure stationarity via
\begin{equation}
\label{eq:l1_proxy_main}
\mathcal{G}_T
\;:=\;
\frac{1}{T}\sum_{t=0}^{T-1}\mathbb{E}\!\left[\frac{\|g_t\|_1}{\sqrt{mn}}\right].
\end{equation}

Theorem~\ref{thm:signmuon_rate} below states the cleanest specialization obtained for the \emph{gradient-sign} instantiation (where the communicated sign comes directly from an unbiased stochastic gradient). The generic bound for an arbitrary sign oracle (including Muon-direction-sign) is given in Theorem~\ref{thm:generic_signmuon} in Appendix~\ref{sec:main-convergence}.

\begin{theorem}[Sign-Muon convergence (gradient-sign instantiation; single worker and distributed)]
\label{thm:signmuon_rate}
Under the assumptions above and constant stepsize $\eta_t\equiv\eta$, we have the following
\begin{align}
\mathcal{G}_T
&\;\le\;
\frac{f(W_0)-f^\ast}{\eta T}
\;+\;\frac{L_\ast}{2}\eta \notag \\
& \quad \;+\;\frac{2\|\sigma\|_1}{\sqrt{mn}}\times
\begin{cases}
\frac{1}{\sqrt{n_b}}, & \text{single worker},\\[0.3em]
\frac{1}{\sqrt{M n_b}}, & \text{$M$-worker majority vote},
\end{cases}
\end{align}
where $\|\sigma\|_1=\sum_{i,j}\sigma_{ij}$.
Choosing $\eta=\sqrt{\tfrac{2(f(W_0)-f^\ast)}{L_\ast T}}$ yields an $\mathcal{O}(1/\sqrt{T})$ optimization term and a noise floor that improves as $1/\sqrt{M n_b}$ under majority vote.
\end{theorem}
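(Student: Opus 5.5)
The proof follows the standard signSGD template, adapted to spectral-norm geometry. First I would derive a one-step descent inequality from spectral-norm smoothness. Since $\|\nabla f(W)-\nabla f(W')\|_\ast\le L_\ast\|W-W'\|_{\op}$ and the nuclear norm is dual to the operator norm, the usual integration argument gives
\[
f(W_{t+1}) \le f(W_t) - \eta\,\langle g_t, D_t\rangle + \frac{L_\ast}{2}\eta^2\|D_t\|_{\op}^2 .
\]
The bound $\|D_t\|_{\op}\le\|D_t\|_{\F}=1$ then makes the curvature term at most $\frac{L_\ast}{2}\eta^2$.

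Next I would split the linear term coordinatewise. With $D_t=\bar S_t/\sqrt{mn}$,
\[
\langle g_t, D_t\rangle = \frac{\|g_t\|_1}{\sqrt{mn}} - \frac{2}{\sqrt{mn}}\sum_{i,j}|g_{t,ij}|\,\mathbbm{1}\bigl[\bar S_{t,ij}\neq \sign(g_{t,ij})\bigr].
\]
Taking conditional expectation, the error term is $\frac{2}{\sqrt{mn}}\sum_{ij}|g_{t,ij}|\,\Prob[\bar S_{t,ij}\neq\sign(g_{t,ij})\mid W_t]$. Summing over $t$, telescoping $f(W_0)-\E f(W_T)\le f(W_0)-f^\ast$, and dividing by $\eta T$ produces the first two terms of the theorem plus the average of this sign-error term. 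Finally, substituting the stated $\eta$ balances the first two terms into $\sqrt{2L_\ast(f(W_0)-f^\ast)/T}$.

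The main work is bounding $|g_{ij}|\,\Prob[\text{sign error}]$ by $\sigma_{ij}/\sqrt{n_b}$ in the single-worker case, and by $\sigma_{ij}/\sqrt{Mn_b}$ under majority vote.

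\emph{Single worker.} A sign error requires $|\widetilde G_{ij}-g_{ij}|\ge|g_{ij}|$. Markov's inequality applied to the absolute deviation, followed by Jensen, gives
\[
|g_{ij}|\,\Prob[\text{err}] \le \E|\widetilde G_{ij}-g_{ij}| \le \sigma_{ij}/\sqrt{n_b}.
\]
This yields the $2\|\sigma\|_1/\sqrt{mn\,n_b}$ term directly and needs no distributional assumption.

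\emph{Majority vote.} Here I would follow \cite{bernstein2018signsgd}. Let $p_{ij}$ be the per-worker error probability. By Gauss's inequality for unimodal symmetric noise with $S=|g_{ij}|\sqrt{n_b}/\sigma_{ij}$, we have $p_{ij}\le\frac{1}{2}-\frac{S}{2\sqrt3}$ when $S$ is small, and $p_{ij}\le\frac{2}{9S^2}$ otherwise. The majority is wrong only if at least half of the $M$ independent workers err. A one-sided Chebyshev (Cantelli) bound on this binomial count gives
\[
\Prob[\bar S\text{ wrong}] \le \frac{1}{1+M(1-2p)^2/(4p(1-p))} .
\]
Combining this with the Gauss estimates and maximizing $|g|\cdot\Prob$ over $S$ yields $|g_{ij}|\,\Prob[\text{err}]\le\sigma_{ij}/\sqrt{Mn_b}$. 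The constant may come out somewhat larger; if so, I would absorb it into the stated factor of $2$ or check the case split carefully.

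I expect this majority-vote calculation to be the main obstacle. The case split in Gauss's inequality and the optimization over $S$ must produce exactly the $1/\sqrt M$ scaling without extra constants. Independence of the workers' stochastic gradients given $W_t$ is also needed for the binomial argument.

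A further subtlety concerns momentum. Because this is the gradient-sign instantiation, I take $S_t^{(p)}=\sign(\widetilde G^{(p)}_t)$ with no momentum or polar step, so that the noise is unbiased with the assumed variance. The general oracle would instead be handled by Theorem~\ref{thm:generic_signmuon}.
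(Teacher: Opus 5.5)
Your proposal is correct and follows the paper's skeleton: the spectral descent lemma with $\|D_t\|_{\op}\le 1$, the coordinatewise sign-error decomposition of $\langle g_t,D_t\rangle$, telescoping, and Markov plus Jensen for the single worker. The one substantive difference is the binomial tail bound in the majority-vote step. The paper uses Hoeffding, $q^{\mathrm{MV}}\le\exp(-2M(\tfrac12-q)^2)$; you use Cantelli, as signSGD does. Yours is the route that actually delivers the stated bound.

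Hoeffding ignores the variance $q(1-q)$ of the error count, so its bound is never below $e^{-M/2}$ and cannot give $q^{\mathrm{MV}}\le 1/(\sqrt{M}S)$ once $S$ is large. In the paper this shows up as a leftover $e^{-M/18}$ branch. That branch is dismissed via $1\le 1/(\sqrt{M}S)$, which is false in its own regime $S>1/\sqrt{M}$. The other branch only yields the constant $\sqrt6/2>1$, which is then ``absorbed'' even though the statement keeps the factor $2$.

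Cantelli's bound instead behaves like $4q/M$ as $q\to0$, and your worry about constants is unfounded. Take Gauss's cutoff at $S=2/\sqrt3$:
\begin{itemize}
\item For $S\le 2/\sqrt3$, you have $\tfrac12-p\ge S/(2\sqrt3)$ and $p(1-p)\le\tfrac14$, so $p(1-p)/(\tfrac12-p)^2\le 3/S^2$.
\item For $S>2/\sqrt3$, you have $p\le 2/(9S^2)\le\tfrac16$. Since $p\mapsto p(1-p)/(\tfrac12-p)^2$ is increasing on $[0,\tfrac12)$, the ratio is at most $9\cdot\tfrac{2}{9S^2}=2/S^2$.
\end{itemize}
In either case, using $1+x^2\ge 2x$,
\[
q^{\mathrm{MV}}\;\le\;\frac{1}{1+MS^2/4}\;\le\;\frac{1}{\sqrt{M}\,S},
\]
that is, $|g_{ij}|\,q^{\mathrm{MV}}_{ij}\le\sigma_{ij}/\sqrt{Mn_b}$ with constant exactly $1$. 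Gauss's inequality involves the true standard deviation $\tau\le\sigma_{ij}/\sqrt{n_b}$. Its bound is decreasing in $|g_{ij}|/\tau$, so passing to the smaller $S$ is legitimate.

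Two small remarks. First, your expression for $\langle g_t,D_t\rangle$ is an identity only when $\bar S_{t,ij}\in\{\pm1\}$ (ties mapped to $+1$, as in the algorithm). With the convention $\sign(0)=0$ it becomes ``$\ge$'', which is the direction you need. Second, your balanced value $\sqrt{2L_\ast(f(W_0)-f^\ast)/T}$ is right. The appendix displays $\sqrt{L_\ast(f(W_0)-f^\ast)/T}$, which is off by $\sqrt2$, though the statement only claims the $\mathcal{O}(1/\sqrt{T})$ order.
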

For detailed proof for single worker, see Theorem~\ref{thm:single_worker}, and for multiple worker, see Theorem~\ref{thm:distributed}.

\paragraph{Distributed Communication Complexity.}
With $b\in\{1,8\}$ bits per sign entry (bit-packed vs.\ int8), the sign payload size is $s=(bd)/8$ bytes.
Majority vote uses one integer \textsf{SUM} allreduce per iteration. We adopt the same classical $\alpha$-$\beta$ (latency-bandwidth) model that has long been used to analyze communication of distributed numerical kernels in HPC, e.g., communication-optimal least-squares solvers \cite{kumar2014communication,kumar2015multilevel} and parallel implicit particle-in-cell solvers \cite{kumar2013high}. Concretely,
$T(s)\approx \alpha+\beta s$ for $s$ bytes, where $\alpha$ is latency and $\beta$ is inverse bandwidth.
Let $R(M)$ be the number of allreduce rounds (ring: $2(M-1)$; tree: $2\lceil\log_2 M\rceil$). Then
\[
T_{\mathrm{iter}}\approx \alpha R(M)+2\Bigl(1-\frac{1}{M}\Bigr)s\beta .
\]

In Appendix~\ref{app:comm} we provide a detailed accounting and comparisons to full-precision allreduce.

% -============ Section : Numerical Results
\section{Numerical Experiments}
\label{sec:experiments}

We tested \textsc{Sign-Muon} for two different tasks: CIFAR-10 image classification with CIFAR-style ResNets and nanoGPT language modeling. We look for signs of how well Sign-Muon optimises, its final results, and practical costs (iteration time, persistent memory, and scaling).

\subsection{Common experimental setup}
\label{sec:exp_setup_common}

\paragraph{Optimizers.}
We compare \textsc{Sign-Muon} against \textsc{Muon}, \textsc{signSGD}, SGD (Nesterov momentum) \cite{nesterov2018lectures}, Adam \cite{kingma2014adam}, and AdamW \cite{loshchilov2017decoupled}.
Muon/Sign-Muon maintain a momentum buffer ($\beta=0.9$) and compute a polar-factor direction using Newton-Schulz iterations; the number of NS iterations and the scaling scheme spectral or Frobenius are specified per experiment.
For distributed sign methods, we have implemented majority vote by packing signs into an \texttt{int8} buffer, performing an integer \textsf{SUM} allreduce, and applying $\sign(\cdot)$ locally.

\begin{figure}[t]
\centering
\begin{tikzpicture}
\begin{axis}[
    ybar,
    bar width=10pt,
    width=0.7\linewidth,
    height=5.5cm,
    ylabel={Persistent Memory (MiB)},
    symbolic x coords={SignSGD,SGD,Muon,Sign-Muon,Adam,AdamW},
    xtick=data,
    xticklabel style={rotate=30, anchor=east},
    ymin=0,
    ymax=140,
    enlarge x limits=0.05
]

\addplot coordinates {
    (SignSGD, 42.66)
    (SGD, 85.29)
    (Muon, 85.29)
    (Sign-Muon, 85.29)
    (Adam, 127.91)
    (AdamW, 127.91)
};

\end{axis}
\end{tikzpicture}
\caption{Persistent memory usage comparison for CIFAR-10 (ResNet-18). Excludes transient orthogonalization buffers. Here persistent memory refers to the memory across iteration; Adam/AdamW have higher persistent memory because they maintain two moment vectors. Sign-Muon maintains the same persistent state size as SGD (one momentum-like buffer).}
\label{fig:memory_comparison_1}
\end{figure}

\begin{figure}[t!]
    \centering
    \includegraphics[width=0.98\linewidth]{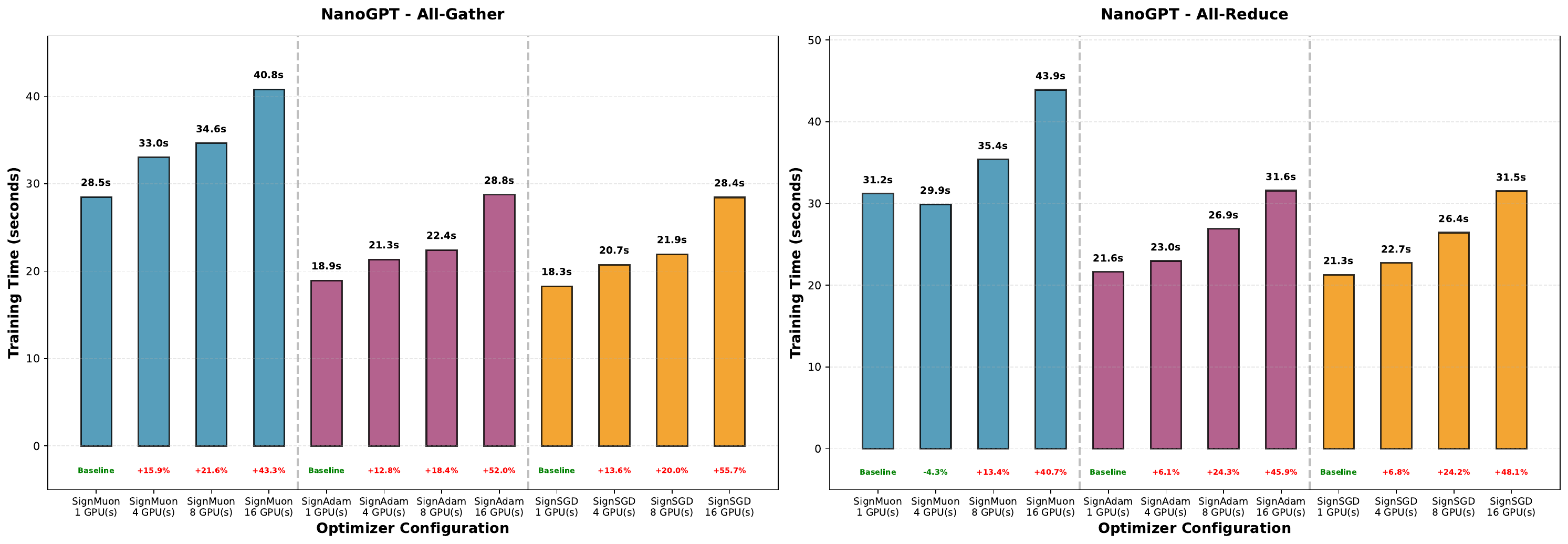}
    \caption{Weak scaling comparison of All-Gather and All-Reduce for nanoGPT.}
    \label{fig:weak-scaling-nanogpt}
\end{figure}

\paragraph{Metrics.}
We report accuracy (CIFAR-10) and perplexity (nanoGPT), along with wall-clock timing.
For CIFAR-10 we have additionally reported persistent optimizer memory excluding persistent orthogonalization workspaces in Figure~\ref{fig:memory_comparison_1}.

\subsection{CIFAR-10}
\label{sec:exp-cifar10}

CIFAR-10 consists of 50{,}000 training images and 10{,}000 test images over 10 classes, with $32\times 32$ RGB inputs.
We train with standard random flipping and cropping (with 4-pixel padding), and with input normalization by channel-wise mean and standard deviation (empirically computed over the training data).

\subsubsection{Single-worker results across ResNet depths}
\label{sec:exp-cifar10-single}

\noindent\textbf{Models.}
We use torchvision ResNet-18/34/50/101 adapted for CIFAR inputs by replacing the initial $7\times 7$ stride-2 convolution with a $3\times 3$ stride-1 convolution, removing the initial max-pooling layer, and setting the classifier head to 10 outputs.

\noindent\textbf{Training and tuning.}
All single-worker experiments train for 30 epochs with a constant learning rate.
For each optimizer, we sweep learning rates over
$\{10^{-1},\,5\!\times\!10^{-2},\,10^{-2},\,2\!\times\!10^{-3},\,10^{-3},\,5\!\times\!10^{-4},\,10^{-4}\}$.
Muon uses 7 Newton--Schulz iterations and Sign-Muon uses one iteration; both use momentum $\beta=0.9$.
We use Frobenius scaling for Muon and spectral scaling for Sign-Muon.
SGD uses Nesterov momentum ($\mu=0.9$), and Adam/AdamW use default momentum parameters.

\paragraph{Findings.}
The best learning rates for each method and architecture is summarized in Figure~\ref{fig:single-worker-arch-pairs}. Muon starts to show strong early-epoch progress, and then maintains its steady trajectory, which is impressive. Sign-Muon, on the other hand, initially follows Muon, but exhibits a lot of oscillations, something that we believe is consistent with the additional stochasticity that comes with entrywise sign quantization. Adam and AdamW steadily get better, but tend to be a bit slow in the initial stages.

\subsubsection{ResNet-18 cost summary}
\label{sec:exp-cifar10-r18-summary}

For completeness we also report a longer ResNet-18 run that summarizes final accuracy, iteration time, and persistent optimizer memory.
To keep the main paper focused on the ResNet-50 sweep and scaling behavior, we place the full table and memory breakdown in Appendix~\ref{app:cifar_extra} (Table~\ref{tab:cifar10}, Figure~\ref{fig:memory_comparison}).

\subsubsection{Results for ResNet-50: hyperparameter sweep and distributed setting}
\label{sec:exp-cifar10-resnet50}

We perform a larger CIFAR-10 study using ResNet-50 to examine tuning and distributed behavior.
All experiments use NVIDIA RTX 4090 GPUs (24GB VRAM) and PyTorch with NCCL for distributed training.

\paragraph{Setup.}
We train for 30 epochs with a cosine learning-rate schedule (no warmup) and global gradient clipping at norm 1.0.
For Muon/SignMuon we use \texttt{ns\_scale}=\texttt{spectral} with \texttt{power\_iters}=2, and momentum $\beta=0.9$.
Distributed experiments use \texttt{torchrun} with \texttt{DistributedSampler}; the specified \texttt{batch\_size} is per GPU.

\begin{table}[H]
\centering
\caption{Top-10 of 330 CIFAR-10/ResNet-50 configurations tested, ranked by validation accuracy.}
\label{tab:cifar10-resnet50-top10}
\small
\begin{tabular}{@{}rlcccccc@{}}
\toprule
\textbf{Rank} & \textbf{Optimizer} & \textbf{BS} & \textbf{LR} & \textbf{NS} & \textbf{WD} & \textbf{Acc} \\
\midrule
1 & SignMuon & 512 & 0.001 & 1 & 0.0 & 92.15  \\
2 & Dist-SignMuon & 128 & 0.001 & 1 & 0.0 & 92.02  \\
3 & Dist-SignMuon & 512 & 0.001 & 1 & 0.0 & 90.56 \\
4 & Adam & 512 & 0.001 & -- & -- & 89.71  \\
5 & Dist-Muon & 128 & 0.01 & 1 & 0.0 & 89.09  \\
6 & Muon & 512 & 0.1 & 1 & 0.0 & 89.02 \\
7 & Dist-Adam & 128 & 0.001 & -- & -- & 88.90 \\
8 & SignAdam & 512 & 0.001 & -- & -- & 88.24 \\
9 & Dist-SignAdam & 128 & 0.0001 & -- & -- & 87.72 \\
10 & SignSGD & 512 & 0.001 & -- & -- & 87.44  \\
\bottomrule
\end{tabular}
\end{table}

\paragraph{Findings.}
Table~\ref{tab:cifar10-resnet50-top10} above summarizes the top-10 configurations by validation accuracy.
SignMuon variants occupy the top-3 positions with consistent hyperparameters (LR=$10^{-3}$, NS$=1$, WD$=0.0$).
In the matched-effective-batch setting (single GPU with BS$=512$ vs.\ 4 GPUs with BS$=128$/GPU), the distributed SignMuon configuration preserves accuracy (92.02\% vs.\ 92.15\%) while reducing training time by 37\% due to parallelism and reduced communication volume.

\subsubsection{Scaling behavior (ResNet-50)}
\label{sec:exp-cifar10-resnet50-scaling}

We next examine distributed weak-scaling behavior of ResNet-50 on CIFAR-10 across 1--16 GPUs, as shown in Figure~\ref{fig:weak-scaling-cifar10}. In this setting, the per-GPU workload is held fixed and end-to-end training time is reported as the number of GPUs increases. Under All-Gather communication, SignMuon consistently incurs lower training-time growth than other signed optimizers across the entire scaling range, with training-time overheads of $+11.3\%$, $+25.7\%$, and $+77.8\%$ at 4, 8, and 16 GPUs, respectively, compared to $+52$--$62\%$ at 8 GPUs and $+150$--$163\%$ at 16 GPUs for SignAdam and SignSGD. A similar trend holds for All-Reduce communication, where SignMuon exhibits smaller growth factors ($+7.1\%$ at 4 GPUs and $+60.2\%$ at 16 GPUs) relative to competing methods, which exceed $+36\%$ at 8 GPUs and $+117\%$ at 16 GPUs. Overall, these findings show that All-Gather is still competitive while SignMuon maintains better weak-scaling behaviour across the entire GPU range as compared to other signed variants.

\begin{algorithm}[t!]
\caption{\textsc{Sign-Muon} (distributed with 1-bit communication with All Gather)}
\label{alg:sign-muon-dist-1bit}
\begin{algorithmic}[1]
\STATE \textbf{Input:} Learning rates $\{\eta_t\}$, $M$ workers, momentum $\beta\in[0,1)$, weight decay $\lambda\ge 0$, NS iterations $K$, tolerance $\varepsilon>0$, scale $\in\{\textsc{spectral},\textsc{fro}\}$, power iterations $P$
\STATE \textbf{Initialize:} $M_0^{(m)} \gets 0$ for all workers $m \in \{1,\ldots,M\}$
\FOR{$t=0,1,\ldots,T-1$}
  \STATE \textbf{Each worker $m$ independently:}
  \STATE \quad Compute local gradient $G_t^{(m)}$ for parameters $W_t$
  \STATE \quad Apply weight decay: $\widetilde{G}_t^{(m)} \gets G_t^{(m)} + \lambda W_t$
  \STATE \quad Update local moment: $M_{t+1}^{(m)} \gets \beta M_t^{(m)} + (1-\beta)\widetilde{G}_t^{(m)}$
  \STATE \quad Compute polar decomposition: $U_t^{(m)} \gets \mathrm{PolarNS}(M_{t+1}^{(m)};K,\varepsilon,\textsc{scale},P)$
  \STATE \quad Extract local sign: $S_t^{(m)} \gets \sign(U_t^{(m)}) \in \{-1,+1\}^d$
  \STATE \quad Pack signs to bits: $B_t^{(m)} \gets \mathrm{PackBits}(S_t^{(m)})$ \hfill \textcolor{gray}{\textit{(8 signs per byte)}}
  \STATE \textbf{All-gather (collective):}
  \STATE \quad All workers participate in gathering packed bits:
  \STATE \quad \quad $\{B_t^{(1)}, B_t^{(2)}, \ldots, B_t^{(M)}\} \gets \mathrm{all\_gather}(B_t^{(m)})$ \hfill \textcolor{gray}{\textit{(gather uint8)}}
  \STATE \textbf{Each worker $m$ independently:}
  \STATE \quad Unpack all workers' signs: $S_t^{(m')} \gets \mathrm{UnpackBits}(B_t^{(m')})$ for $m' \in \{1,\ldots,M\}$
  \STATE \quad Compute majority vote: $\bar{S}_t \gets \sign\left(\sum_{m'=1}^M S_t^{(m')}\right) \in \{-1,+1\}^d$ \hfill \textcolor{gray}{\textit{(ties default to +1)}}
  \STATE \quad Update parameters: $W_{t+1} \gets W_t - \eta_t \bar{S}_t$
\ENDFOR
\STATE \textbf{Communication cost:} each worker sends and receives $(M-1)d/8$ bytes per step (total $2(M-1)d/8$; 1-bit encoding, $d$ = dimension)
\STATE \textbf{Note:} Encoding: $\geq 0 \to 1$, $< 0 \to 0$; decoding: $0 \to -1$, $1 \to +1$
\end{algorithmic}
\end{algorithm}

\begin{figure}[t!]
\centering
\begin{tikzpicture}[
  font=\small,
  >=Stealth,
  worker/.style={draw,rounded corners=2pt,fill=blue!8,inner sep=3pt,align=center,minimum width=2.6cm,minimum height=0.55cm},
  pipe/.style={draw,rounded corners=2pt,fill=gray!10,inner sep=2pt,align=center,minimum width=2.5cm,minimum height=0.5cm},
  outsign/.style={draw,rounded corners=2pt,fill=blue!18,inner sep=2pt,align=center,minimum width=2.5cm,minimum height=0.5cm},
  packed/.style={draw,rounded corners=2pt,fill=violet!18,inner sep=2pt,align=center,minimum width=2.5cm,minimum height=0.5cm},
  collective/.style={draw,rounded corners=3pt,fill=teal!28,thick,inner sep=5pt,align=center,minimum width=9.0cm,minimum height=0.85cm},
  localred/.style={draw,rounded corners=2pt,fill=green!16,inner sep=2pt,align=center,minimum width=2.5cm,minimum height=0.6cm},
  upd/.style={draw,rounded corners=2pt,fill=red!12,inner sep=3pt,align=center,minimum width=8cm,minimum height=0.55cm},
  arr/.style={->,semithick},
  bus/.style={->,semithick,color=teal!65!black}
]

\def\xA{-2.8} \def\xC{0} \def\xD{2.8}
\def\yhead{6.0} \def\yA{5.2} \def\yB{4.4} \def\yC{3.6} \def\yD{2.8} \def\yE{1.4}

\foreach \x/\name/\lab in {\xA/wA/\textbf{Worker $1$},\xD/wM/\textbf{Worker $M$}} {
  \node[worker] (\name)   at (\x,\yhead) {\lab};
  \node[pipe]   (G\name)  at (\x,\yA) {$\widetilde G_t^{(m)} \!=\! G_t^{(m)} \!+\! \lambda W_t$};
  \node[pipe]   (M\name)  at (\x,\yB) {$M_{t+1}^{(m)}\!=\!\beta M_t^{(m)} \!+\! (1{-}\beta)\widetilde G_t^{(m)}$};
  \node[pipe]   (P\name)  at (\x,\yC) {$U_t^{(m)} = \mathrm{PolarNS}(M_{t+1}^{(m)})$};
  \node[outsign](S\name)  at (\x,\yD) {$S_t^{(m)} = \sign(U_t^{(m)})$};
  \node[packed] (B\name)  at (\x,\yE) {$B_t^{(m)} = \mathrm{PackBits}(S_t^{(m)})$\\[-1pt] {\scriptsize $\lceil d/8\rceil$ bytes}};
  \draw[arr] (\name)  -- (G\name);
  \draw[arr] (G\name) -- (M\name);
  \draw[arr] (M\name) -- (P\name);
  \draw[arr] (P\name) -- (S\name);
  \draw[arr] (S\name) -- (B\name);
}
\node at (\xC,\yhead) {$\cdots$};
\foreach \y in {\yA,\yB,\yC,\yD,\yE} { \node at (\xC,\y) {$\cdots$}; }

% Collective
\node[collective] (col) at (0,0.1) {%
  \textsc{AllGather} (1-bit packed): \ each worker receives $\{B_t^{(1)}, B_t^{(2)}, \ldots, B_t^{(M)}\}$};
\foreach \name in {wA,wM} {
  \draw[bus] (B\name.south) -- (B\name.south |- col.north);
}

% Local reduction per worker (each computes the same majority vote)
\foreach \x/\name in {\xA/lA,\xD/lD} {
  \node[localred] (\name) at (\x,-1.1) {Unpack $\{B_t^{(m')}\}$\\[-1pt] $\bar S_t = \sign\!\bigl(\sum_{m'} S_t^{(m')}\bigr)$};
  \draw[bus] (col.south -| \name) -- (\name.north);
}
\node at (\xC,-1.1) {$\cdots$};

% Update bar
\node[upd] (upd) at (0,-2.3) {Each worker (locally, same $\bar S_t$ everywhere): \ \ $W_{t+1} = W_t - \eta_t\,\bar S_t$};
\foreach \name in {lA,lD} {
  \draw[arr] (\name.south) -- (\name.south |- upd.north);
}

% legend -- placed to the upper right, clear of worker columns
\node[anchor=west,font=\footnotesize,align=left,
      draw=teal!55,fill=teal!6,rounded corners=2pt,inner sep=3pt]
  at (4.5,5.2) {%
  \textcolor{teal!65!black}{\small$\blacksquare$} 1 collective / iter\\
  \textcolor{teal!65!black}{\small$\blacksquare$} payload $s_1=\lceil d/8\rceil$ B};

\end{tikzpicture}
\caption{Schematic of distributed \textsc{Sign-Muon} with 1-bit \textsc{AllGather} (Algorithm~\ref{alg:sign-muon-dist-1bit}). Each worker computes the same local pipeline as in the AllReduce variant, then \emph{packs} its sign vector into one bit per entry ($\lceil d/8\rceil$ bytes). A single \textsc{AllGather} disseminates every worker's packed buffer to every other worker. Each worker independently unpacks the gathered buffers and computes the majority vote $\bar S_t = \sign(\sum_{m'} S_t^{(m')})$ \emph{locally}; the parameter update is then applied identically on all workers. The reduction has been shifted from the network (in \textsc{AllReduce}) to local memory.}
\label{fig:schematic-allgather}
\end{figure}

\begin{figure}[t!]
\centering
\begin{tikzpicture}
\begin{axis}[
    ybar,
    bar width=10pt,
    width=0.85\linewidth,
    height=5cm,
    ylabel={Best Perplexity ($\downarrow$)},
    xlabel={SignMuon Configurations},
    ymin=7.3,
    ymax=8.1,
    enlarge y limits=0.05,
    symbolic x coords={
        lr1e-3-ns10,
        lr1e-2-ns10,
        lr1e-3-ns5,
        lr1e-2-ns5,
        lr1e-3-ns1
    },
    xtick=data,
    xticklabels={
        {$(10^{-3}, 10)$},
        {$(10^{-2}, 10)$},
        {$(10^{-3}, 5)$},
        {$(10^{-2}, 5)$},
        {$(10^{-3}, 1)$}
    },
    xticklabel style={align=center, font=\small},
    grid=both,
    legend columns=2,
    legend style={at={(0.5,-0.4)}, anchor=north},
    enlarge x limits=0.15,
    xlabel near ticks,
    xlabel style={font=\small, yshift=2pt},
    ylabel near ticks,
    ylabel style={font=\small, xshift=-5pt},
    enlarge x limits=0.1,
]

% -------- Frobenius --------
\addplot coordinates {
    (lr1e-3-ns10, 7.4500)
    (lr1e-2-ns10, 7.5914)
    (lr1e-3-ns5, 7.7853)
    (lr1e-2-ns5, 8.0181)
    (lr1e-3-ns1, 7.8810)
};

% -------- Spectral --------
\addplot coordinates {
    (lr1e-3-ns10, 7.4977)
    (lr1e-2-ns10, 7.6214)
    (lr1e-3-ns5, 7.6587)
    (lr1e-2-ns5, 7.8266)
    (lr1e-3-ns1, 7.8637)
};

\legend{Frobenius scaling, Spectral scaling}
\end{axis}
\end{tikzpicture}
\caption{Comparison of Frobenius and spectral scaling across different learning rates ($\eta$) and Newton-Schulz iteration counts ($n_{\mathrm{s}}$) in nanoGPT. We show ($\eta,$ $n_{\mathrm{s}}$) as $x$-labels. Both scaling strategies achieve nearly identical best perplexity across all configurations.}
\label{fig:nanogpt_scaling_compare}
\end{figure}

\begin{figure}[h!]
  \centering
  \begin{subfigure}[t]{0.45\textwidth}
    \centering
    \includegraphics[width=\linewidth]{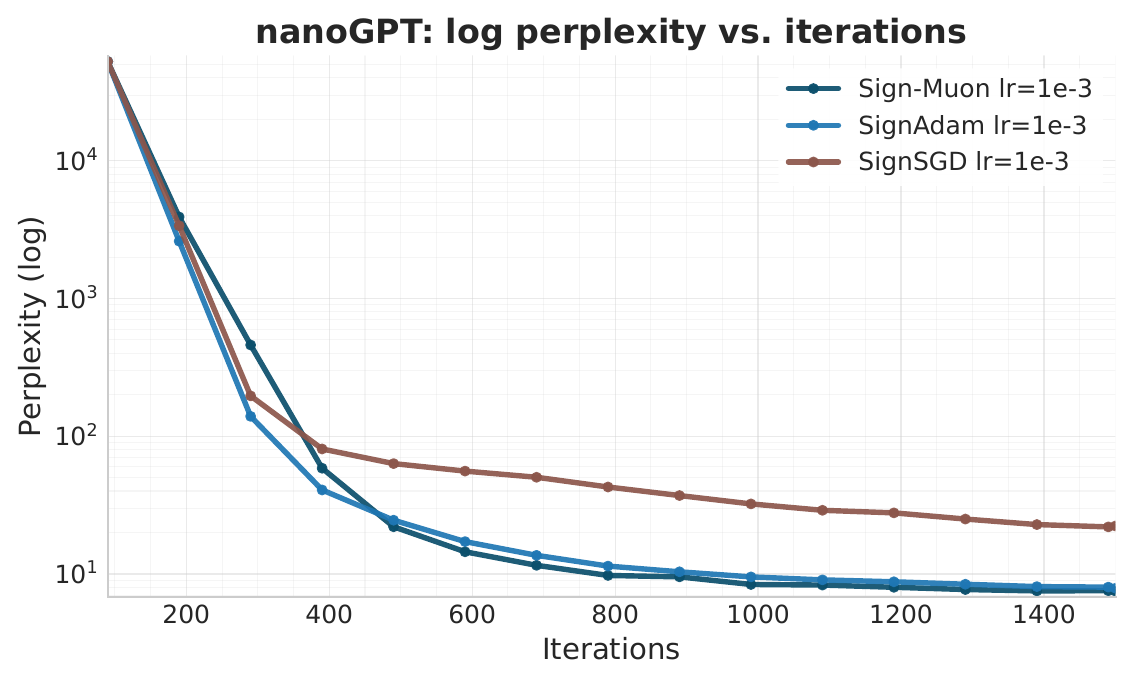}
    \caption{Perplexity vs.\ iterations}
    \label{fig:nanoGPT-perplexity-vs-epoch}
  \end{subfigure}
  \hfill
  \begin{subfigure}[t]{0.45\textwidth}
    \centering
    \includegraphics[width=\linewidth]{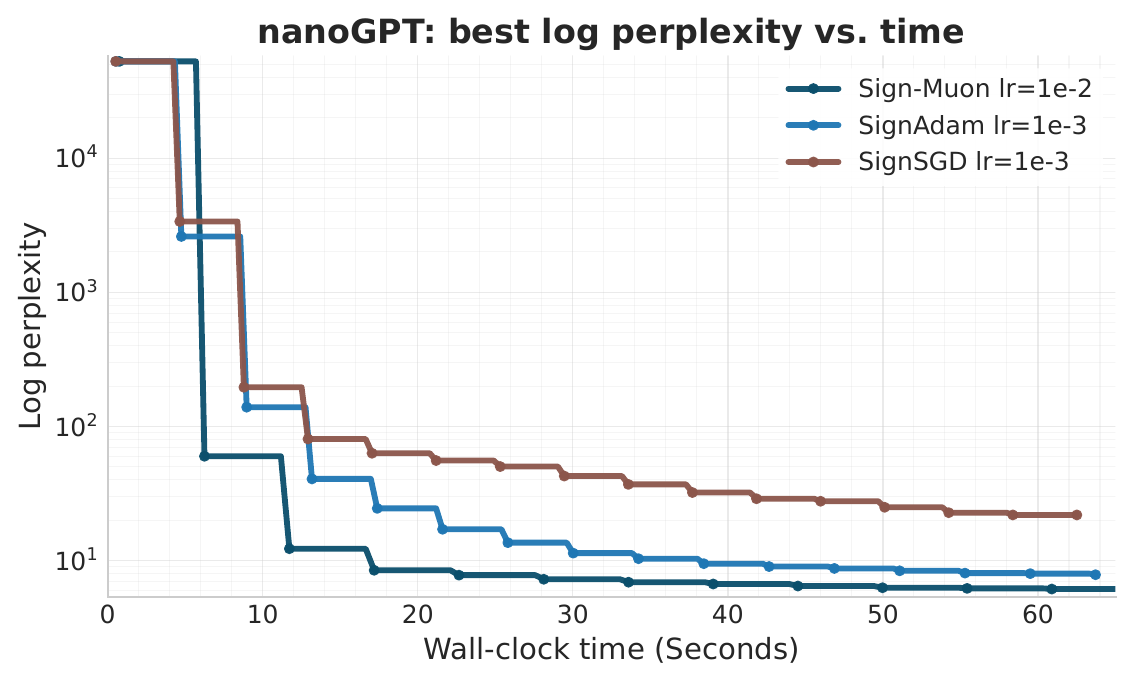}
    \caption{Perplexity vs.\ time}
    \label{fig:nanoGPT-perplexity-vs-time}
  \end{subfigure}
  \caption{NanoGPT log-perplexity comparison across optimization methods.}
  \label{fig:single-worker-nanogpt-arch-pairs}
\end{figure}

\begin{algorithm}[h!]
\caption{\textsc{PolarNS}: Newton-Schulz polar-factor approximation}
\label{alg:polar-ns}
\begin{algorithmic}[1]
\STATE \textbf{Input:} $X\in\mathbb{R}^{r\times c}$, iterations $K$, $\varepsilon>0$, $\textsc{scale}\in\{\textsc{spectral},\textsc{fro}\}$, power iters $P$.
\IF{$\textsc{scale}=\textsc{spectral}$}
  \STATE $\sigma \gets \mathrm{PowerIter}(X,P,\varepsilon)$ \hfill (approx.\ $\|X\|_{\op}$)
\ELSE
  \STATE $\sigma \gets \|X\|_{F}$
\ENDIF
\STATE $Y \gets X / \max(\sigma,\varepsilon)$
\STATE $I \gets I_c$ \hfill ($c\times c$ identity)
\FOR{$k=1,\ldots,K$}
  \STATE $Y \gets \tfrac{1}{2}\,Y\left(3I - Y^\top Y\right)$
\ENDFOR
\STATE \textbf{Return} $Y$
\end{algorithmic}
\end{algorithm}

\subsection{nanoGPT}
\label{sec:exp-nanogpt}

We next study Sign-Muon on autoregressive language modeling using the nanoGPT codebase of Karpathy \cite{Karpathy2022}, a minimal GPT-2-style transformer implementation.
We modify the training script to support our optimizers (\textsc{Muon}, \textsc{SignMuon}, \textsc{signSGD}, \textsc{SignAdam}, and distributed variants) and log training/validation loss, perplexity, sign agreement ratios, communication volume, and wall-clock time.

\subsubsection{Setup and hyperparameter sweeps}
\label{sec:exp-nanogpt-setup}

\paragraph{Dataset.}
We train on OpenWebText \cite{Gokaslan2019OpenWeb}, preprocessed with the GPT-2 tokenizer vocabulary (50{,}257 tokens) and stored as memory-mapped binaries.

\paragraph{Model.}
We use a smaller GPT-2 configuration with $n_{\text{layer}}=6$, $n_{\text{head}}=6$, embedding dimension $n_{\text{embd}}=384$, context length \texttt{block\_size}$=512$, and $\sim$29.94M non-embedding parameters.
Training uses mixed precision (\texttt{float16}) with AMP.

\paragraph{Training configuration.}
Unless otherwise stated, we train for \texttt{max\_iters}$=1500$ iterations with evaluation every 100 iterations and logging every 10 iterations.
We use a cosine learning-rate schedule with \texttt{min\_lr}$=6\times10^{-5}$ and linear warmup (\texttt{warmup\_iters}$=2000$); because \texttt{warmup\_iters}$>\texttt{max\_iters}$, the learning rate remains in the warmup phase throughout training. We apply global-norm gradient clipping at \texttt{grad\_clip}$=1.0$.
We fix seed 1337 for determinism.

\paragraph{Hardware and distributed environment.}
Systematic hyperparameter sweeps are conducted on NVIDIA RTX 4060 Ti GPUs.
Weak-scaling benchmarks (up to 16 GPUs) are performed on NVIDIA RTX 4090 GPUs.
Distributed experiments use \texttt{torchrun} with NCCL; full-precision optimizers use standard allreduce, while sign-based methods use sign allreduce with majority vote.

\paragraph{Batching.}
\begin{itemize}
    \item \textbf{Single GPU:} \texttt{batch\_size}$=12$ with \texttt{grad\_accum}$=5$ (effective batch 60 sequences, 30{,}720 tokens/iter).
    \item \textbf{Distributed weak scaling:} per-device \texttt{batch\_size}$=15$ with per-GPU \texttt{grad\_accum}$=4$ (after division by world size), yielding 30{,}720 tokens/iter \emph{per GPU}.
\end{itemize}

\paragraph{Hyperparameter grids.}
We sweep learning rates in $\{10^{-1},10^{-2},10^{-3},10^{-4},10^{-5}\}$ across SGD/SignSGD, Adam/SignAdam, and AdamW.
For Muon/SignMuon we additionally sweep \texttt{ns\_iters}$\in\{1,5,10\}$ and weight decay in $\{0.0,0.1,0.2\}$, fixing $\beta=0.9$, $\varepsilon=10^{-12}$, \texttt{ns\_scale}=\texttt{spectral}, and \texttt{power\_iters}=2.

\subsubsection{Normalization sensitivity: Frobenius vs.\ spectral scaling}
\label{sec:exp-nanogpt-norm}

To study sensitivity to the normalization scheme, we repeat the SignMuon sweep using Frobenius-norm scaling (\texttt{ns\_scale}=\texttt{fro}), which avoids power iteration but may behave differently on ill-conditioned matrices.
Figure~\ref{fig:nanogpt_scaling_compare} compares best perplexity across configurations for Frobenius and spectral scaling; performance is nearly identical across the tested settings (e.g., $\eta=10^{-3}$, $ns=10$: 7.45 vs.\ 7.50), with maximum gaps under 2.5\%.

\begin{algorithm}[]
\caption{\textsc{PowerIter}: spectral-norm estimate}
\label{alg:power-iter}
\begin{algorithmic}[1]
\STATE \textbf{Input:} $X\in\mathbb{R}^{r\times c}$, steps $P$, $\varepsilon>0$.
\STATE Sample $v\in\mathbb{R}^{c}$, set $v \gets v/(\|v\|_2+\varepsilon)$
\FOR{$p=1,\ldots,P$}
  \STATE $u \gets Xv$, \;\; $u \gets u/(\|u\|_2+\varepsilon)$
  \STATE $v \gets X^\top u$, \;\; $v \gets v/(\|v\|_2+\varepsilon)$
\ENDFOR
\STATE \textbf{Return} $\|Xv\|_2$
\end{algorithmic}
\end{algorithm}

\subsubsection{Perplexity vs.\ iterations and wall-clock time}
\label{sec:exp-nanogpt-curves}
Sign-based methods were compared with log perplexity as the metric, as seen in Figure~\ref{fig:single-worker-nanogpt-arch-pairs}, when training nanoGPT. Among these, Sign-Muon initially reduces the perplexity faster and ends up with the lowest final perplexity out of the sign-based methods being compared. The anytime-best view (running minimum) in Figure~\ref{fig:nanoGPT-perplexity-vs-time} shows the same ordering in wall-clock time, indicating that the improvement is not solely due to taking more iterations.

\subsubsection{Weak scaling}
\label{sec:exp-nanogpt-weak-scaling}

To quantify communication behavior under increasing worker count, in left subfigure in Figure~\ref{fig:weak-scaling-nanogpt}, we perform weak-scaling experiments with 1, 4, 8, and 16 GPUs, keeping per-GPU workload fixed at 30,720 tokens per iteration (so total workload scales linearly with workers).
We use tuned configurations: SignMuon with $\eta=10^{-3}$, \texttt{ns\_iters}$=10$, \texttt{weight\_decay}$=0.0$, \texttt{ns\_scale}=\texttt{spectral}; SignAdam and SignSGD with $\eta=10^{-3}$ and \texttt{weight\_decay}$=0.0$.

\paragraph{Findings.}
Figure~\ref{fig:weak-scaling-nanogpt} shows that SignSGD and SignAdam scale similarly, while SignMuon has a higher baseline per-iteration cost due to Newton-Schulz orthogonalization.
Nonetheless, all three methods exhibit substantially sublinear growth in total time as GPU count increases, consistent with the reduced communication volume of signed allreduce.

% ================= Section: Conclusion ===================
\section{Conclusion}
\label{sec:conclusion}

We introduced \textsc{Sign-Muon}, a 1-bit, matrix-aware optimizer that fuses majority-vote sign aggregation with Muon's polar-step geometry.
Our analysis provides an $\mathcal{O}(1/\sqrt{T})$ nonconvex rate in an $\ell_1$-based stationarity measure under spectral-norm smoothness, and shows a $1/\sqrt{M}$ improvement in the stochastic term under unimodal symmetric noise, mirroring \textsc{signSGD}.
Empirically, SignMuon achieves the best accuracy among a large CIFAR-10/ResNet-50 sweep and improves perplexity on nanoGPT relative to other sign-based baselines.

\bibliographystyle{plain}
\bibliography{refs}

@techreport{krizhevsky2009learning,
  title={Learning Multiple Layers of Features from Tiny Images},
  author={Krizhevsky, Alex},
  institution={University of Toronto},
  year={2009}
}

@book{nesterov2018lectures,
  author    = {Nesterov, Yurii},
  title     = {Lectures on Convex Optimization},
  series    = {Springer Optimization and Its Applications},
  edition   = {2},
  publisher = {Springer Cham},
  address   = {Cham, Switzerland},
  year      = {2018},
  isbn      = {978-3-319-91577-7},
  doi       = {10.1007/978-3-319-91578-4},
  pages     = {xxiii+589}
}

@article{kingma2014adam,
  title={Adam: A method for stochastic optimization},
  author={Kingma, Diederik P. and Ba, Jimmy},
  journal={arXiv preprint arXiv:1412.6980},
  year={2014}
}

@inproceedings{he2016deep,
  title={Deep residual learning for image recognition},
  author={He, Kaiming and Zhang, Xiangyu and Ren, Shaoqing and Sun, Jian},
  booktitle={Proceedings of the IEEE conference on computer vision and pattern recognition},
  pages={770--778},
  year={2016}
}

@article{article,
  author  = {Nicholas J. Higham},
  title   = {Computing the Polar Decomposition---With Applications},
  journal = {SIAM Journal on Scientific and Statistical Computing},
  volume  = {7},
  number  = {4},
  pages   = {1160--1174},
  year    = {1986}
}

@article{bowie1971,
  author  = {{\AA}ke Bj{\"o}rck and Clyde Bowie},
  title   = {An Iterative Algorithm for Computing the Best Estimate of an Orthogonal Matrix},
  journal = {SIAM Journal on Numerical Analysis},
  volume  = {8},
  number  = {2},
  pages   = {358--364},
  year    = {1971},
  doi     = {10.1137/0708036}
}

@article{POLYAK19641,
  author  = {B. T. Polyak},
  title   = {Some Methods of Speeding Up the Convergence of Iteration Methods},
  journal = {USSR Computational Mathematics and Mathematical Physics},
  volume  = {4},
  number  = {5},
  pages   = {1--17},
  year    = {1964},
  doi     = {10.1016/0041-5553(64)90137-5}
}

@inproceedings{pmlr-v28-sutskever13,
  title     = {On the Importance of Initialization and Momentum in Deep Learning},
  author    = {Sutskever, Ilya and Martens, James and Dahl, George and Hinton, Geoffrey},
  booktitle = {Proceedings of the 30th International Conference on Machine Learning},
  pages     = {1139--1147},
  year      = {2013},
  editor    = {Dasgupta, Sanjoy and McAllester, David},
  volume    = {28},
  number    = {3},
  series    = {Proceedings of Machine Learning Research},
  publisher = {PMLR}
}

@inproceedings{alistarh2017qsgd,
  title     = {{QSGD}: Communication-Efficient {SGD} via Gradient Quantization and Encoding},
  author    = {Alistarh, Dan and Grubic, Demjan and Li, Jerry and Tomioka, Ryota and Vojnovic, Milan},
  booktitle = {Advances in Neural Information Processing Systems},
  year      = {2017}
}

@article{bernstein2018signsgd,
  title        = {{signSGD} with Majority Vote is Communication Efficient and Fault Tolerant},
  author       = {Bernstein, Jeremy and Zhao, Jiawei and Azizzadenesheli, Kamyar and Anandkumar, Anima},
  journal      = {CoRR},
  volume       = {abs/1810.05291},
  year         = {2018},
  eprinttype   = {arXiv},
  eprint       = {1810.05291},
  url          = {https://arxiv.org/abs/1810.05291}
}

@article{lin2017deep,
  title        = {Deep Gradient Compression: Reducing the Communication Bandwidth for Distributed Training},
  author       = {Lin, Yujun and Han, Song and Mao, Huizi and Wang, Yu and Dally, William J.},
  journal      = {CoRR},
  volume       = {abs/1712.01887},
  year         = {2017},
  eprinttype   = {arXiv},
  eprint       = {1712.01887},
  url          = {https://arxiv.org/abs/1712.01887}
}

@article{stich2018sparsified,
  title={Sparsified SGD with memory},
  author={Stich, Sebastian U and Cordonnier, Jean-Baptiste and Jaggi, Martin},
  journal={Advances in neural information processing systems},
  volume={31},
  year={2018}
}

@inproceedings{vogels2019powersgd,
  title     = {{PowerSGD}: Practical Low-Rank Gradient Compression for Distributed Optimization},
  author    = {Vogels, Thijs and Karimireddy, Sai Praneeth and Jaggi, Martin},
  booktitle = {Advances in Neural Information Processing Systems},
  year      = {2019}
}

@misc{gupta2018shampoopreconditionedstochastictensor,
      title={Shampoo: Preconditioned Stochastic Tensor Optimization},
      author={Vineet Gupta and Tomer Koren and Yoram Singer},
      year={2018},
      eprint={1802.09568},
      archivePrefix={arXiv},
      primaryClass={cs.LG},
      url={https://arxiv.org/abs/1802.09568},
      note={arXiv:1802.09568}
}

@inproceedings{martens2015optimizing,
  title     = {Optimizing Neural Networks with Kronecker-factored Approximate Curvature},
  author    = {Martens, James and Grosse, Roger},
  booktitle = {Proceedings of the 32nd International Conference on Machine Learning},
  series    = {Proceedings of Machine Learning Research},
  volume    = {37},
  pages     = {2408--2417},
  year      = {2015},
  publisher = {PMLR}
}

@article{loshchilov2017decoupled,
  title        = {Decoupled Weight Decay Regularization},
  author       = {Loshchilov, Ilya and Hutter, Frank},
  journal      = {CoRR},
  volume       = {abs/1711.05101},
  year         = {2017},
  eprinttype   = {arXiv},
  eprint       = {1711.05101},
  url          = {https://arxiv.org/abs/1711.05101}
}

@article{wen2017terngrad,
  title={Terngrad: Ternary gradients to reduce communication in distributed deep learning},
  author={Wen, Wei and Xu, Cong and Yan, Feng and Wu, Chunpeng and Wang, Yandan and Chen, Yiran and Li, Hai},
  journal={Advances in neural information processing systems},
  volume={30},
  year={2017}
}

@misc{Gokaslan2019OpenWeb,
  author       = {Peterson, Joshua and Meylan, Stephan and Bourgin, David},
  title        = {OpenWebText Corpus},
  howpublished = {\url{https://github.com/jcpeterson/openwebtext}},
  year         = {2019},
  note         = {Accessed: 2025-12-26}
}

@misc{Karpathy2022,
  author       = {Karpathy, Andrej},
  title        = {nanoGPT},
  howpublished = {\url{https://github.com/karpathy/nanoGPT}},
  year         = {2022},
  note         = {Accessed: 2025-12-26}
}

@misc{jordan6muon,
  author       = {Jordan, Keller},
  title        = {Muon: An Optimizer for Hidden Layers in Neural Networks},
  howpublished = {\url{https://kellerjordan.github.io/posts/muon/}},
  year         = {2024},
  note         = {Blog post. Accessed: 2025-12-26}
}

@misc{shen2025muon,
      title={On the Convergence Analysis of Muon},
      author={Wei Shen and Ruichuan Huang and Minhui Huang and Cong Shen and Jiawei Zhang},
      year={2025},
      eprint={2505.23737},
      archivePrefix={arXiv},
      primaryClass={stat.ML},
      url={https://arxiv.org/abs/2505.23737},
      note={arXiv:2505.23737},
}

@article{mengoli2025byzantine,
  author     = {Mengoli, Emanuele and Moll, Luzius and Strozzi, Virgilio and El{-}Mhamdi, El{-}Mahdi},
  title      = {On the Byzantine Fault Tolerance of {signSGD} with Majority Vote},
  journal    = {CoRR},
  volume     = {abs/2502.19170},
  year       = {2025},
  eprinttype = {arXiv},
  eprint     = {2502.19170},
  doi        = {10.48550/ARXIV.2502.19170},
  url        = {https://doi.org/10.48550/arXiv.2502.19170}
}

@inproceedings{chen2023symbolic,
  author    = {Chen, Xiangning and Liang, Chen and Huang, Da and Real, Esteban and Wang, Kaiyuan and Pham, Hieu and Dong, Xuanyi and Luong, Thang and Hsieh, Cho{-}Jui and Lu, Yifeng and Le, Quoc V.},
  title     = {Symbolic Discovery of Optimization Algorithms},
  booktitle = {Advances in Neural Information Processing Systems 36},
  year      = {2023},
  url       = {http://papers.nips.cc/paper_files/paper/2023/hash/9a39b4925e35cf447ccba8757137d84f-Abstract-Conference.html}
}

@book{higham2008functions,
  author    = {Higham, Nicholas J.},
  title     = {Functions of Matrices: Theory and Computation},
  publisher = {SIAM},
  address   = {Philadelphia, PA},
  year      = {2008}
}

@inproceedings{rabenseifner2004allreduce,
  author    = {Rabenseifner, Rolf},
  title     = {Optimization of Collective Reduction Operations},
  booktitle = {International Conference on Computational Science (ICCS)},
  pages     = {1--9},
  year      = {2004}
}

@article{thakur2005optimization,
  author    = {Thakur, Rajeev and Rabenseifner, Rolf and Gropp, William},
  title     = {Optimization of Collective Communication Operations in {MPICH}},
  journal   = {International Journal of High Performance Computing Applications},
  volume    = {19},
  number    = {1},
  pages     = {49--66},
  year      = {2005}
}

@inproceedings{hoefler2010characterizing,
  author    = {Hoefler, Torsten and Schneider, Timo and Lumsdaine, Andrew},
  title     = {Characterizing the Influence of System Noise on Large-Scale Applications by Simulation},
  booktitle = {Proceedings of the 2010 ACM/IEEE International Conference for High Performance Computing, Networking, Storage and Analysis (SC)},
  pages     = {1--11},
  year      = {2010}
}

@misc{mishra2023anglebaseddynamiclearning,
      title={Angle based dynamic learning rate for gradient descent},
      author={Neel Mishra and Pawan Kumar},
      year={2023},
      eprint={2304.10457},
      archivePrefix={arXiv},
      primaryClass={cs.LG},
      url={https://arxiv.org/abs/2304.10457},
      note={arXiv:2304.10457}
}

@misc{mishra2024gaussnewtonapproachminmaxoptimization,
      title={A Gauss-Newton Approach for Min-Max Optimization in Generative Adversarial Networks},
      author={Neel Mishra and Bamdev Mishra and Pratik Jawanpuria and Pawan Kumar},
      year={2024},
      eprint={2404.07172},
      archivePrefix={arXiv},
      primaryClass={cs.LG},
      url={https://arxiv.org/abs/2404.07172},
      note={arXiv:2404.07172}
}

@inproceedings{mandlecha2022hybrid,
  author    = {Mandlecha, Pratik and Chatakonda, Snehith Kumar and Kollepara, Neeraj and Kumar, Pawan},
  title     = {Hybrid Tokenization and Datasets for Solving Mathematics and Science Problems Using Transformers},
  booktitle = {Proceedings of the 2022 SIAM International Conference on Data Mining (SDM)},
  pages     = {289--297},
  year      = {2022},
  publisher = {SIAM},
  doi       = {10.1137/1.9781611977172.33},
  url       = {https://epubs.siam.org/doi/abs/10.1137/1.9781611977172.33}
}

@misc{mehta2023effectsspectralnormalizationmultiagent,
      title={Effects of Spectral Normalization in Multi-agent Reinforcement Learning},
      author={Kinal Mehta and Anuj Mahajan and Pawan Kumar},
      year={2023},
      eprint={2212.05331},
      archivePrefix={arXiv},
      primaryClass={cs.LG},
      url={https://arxiv.org/abs/2212.05331},
      note={arXiv:2212.05331}
}

@misc{mishra2023lightweightdeepextrememultilabel,
      title={Light-weight Deep Extreme Multilabel Classification},
      author={Istasis Mishra and Arpan Dasgupta and Pratik Jawanpuria and Bamdev Mishra and Pawan Kumar},
      year={2023},
      eprint={2304.11045},
      archivePrefix={arXiv},
      primaryClass={cs.LG},
      url={https://arxiv.org/abs/2304.11045},
      note={arXiv:2304.11045}
}

@misc{danisetty2023adaptiveconsensusoptimizationmethod,
      title={Adaptive Consensus Optimization Method for GANs},
      author={Sachin Kumar Danisetty and Santhosh Reddy Mylaram and Pawan Kumar},
      year={2023},
      eprint={2304.10317},
      archivePrefix={arXiv},
      primaryClass={cs.LG},
      url={https://arxiv.org/abs/2304.10317},
      note={arXiv:2304.10317}
}

@misc{vaswani2023attentionneed,
      title={Attention Is All You Need},
      author={Ashish Vaswani and Noam Shazeer and Niki Parmar and Jakob Uszkoreit and Llion Jones and Aidan N. Gomez and Lukasz Kaiser and Illia Polosukhin},
      year={2023},
      eprint={1706.03762},
      archivePrefix={arXiv},
      primaryClass={cs.CL},
      url={https://arxiv.org/abs/1706.03762},
      note={arXiv:1706.03762}
}

@inproceedings{kumar2014communication,
  author    = {Kumar, Pawan},
  booktitle = {2014 IEEE Intl Conf on High Performance Computing and Communications, 2014 IEEE 6th Intl Symp on Cyberspace Safety and Security, 2014 IEEE 11th Intl Conf on Embedded Software and Syst (HPCC,CSS,ICESS)},
  title     = {Communication Optimal Least Squares Solver},
  year      = {2014},
  pages     = {316--319},
  doi       = {10.1109/HPCC.2014.55}
}

@article{kumar2015multilevel,
  author  = {Kumar, Pawan},
  title   = {Multilevel Communication Optimal Least Squares},
  journal = {Procedia Computer Science},
  volume  = {51},
  pages   = {1838--1847},
  year    = {2015},
  note    = {International Conference On Computational Science, ICCS 2015},
  issn    = {1877-0509},
  doi     = {10.1016/j.procs.2015.05.410},
  url     = {https://www.sciencedirect.com/science/article/pii/S1877050915012181}
}

@article{kumar2013high,
  author  = {Kumar, Pawan and Markidis, Stefano and Lapenta, Giovanni and Meerbergen, Karl and Roose, Dirk},
  title   = {High Performance Solvers for Implicit Particle in Cell Simulation},
  journal = {Procedia Computer Science},
  volume  = {18},
  pages   = {2251--2258},
  year    = {2013},
  note    = {2013 International Conference on Computational Science},
  issn    = {1877-0509},
  doi     = {10.1016/j.procs.2013.05.396},
  url     = {https://www.sciencedirect.com/science/article/pii/S1877050913005395}
}

\clearpage

\appendix

% =================================================
\section{Additional CIFAR-10 results}
\label{app:cifar_extra}

This appendix reports the ResNet-18 cost summary referenced in Section~\ref{sec:exp-cifar10-r18-summary}.

\begin{figure}[t]
\centering
\begin{tikzpicture}
\begin{axis}[
    ybar,
    bar width=10pt,
    width=0.7\linewidth,
    height=5.5cm,
    ylabel={Persistent Memory (MiB)},
    symbolic x coords={SignSGD,SGD,Muon,Sign-Muon,Adam,AdamW},
    xtick=data,
    xticklabel style={rotate=30, anchor=east},
    ymin=0,
    ymax=140,
    enlarge x limits=0.05
]

\addplot coordinates {
    (SignSGD, 42.66)
    (SGD, 85.29)
    (Muon, 85.29)
    (Sign-Muon, 85.29)
    (Adam, 127.91)
    (AdamW, 127.91)
};

\end{axis}
\end{tikzpicture}
\caption{Persistent memory usage comparison for CIFAR-10 (ResNet-18). Excludes transient orthogonalization buffers. Here persistent memory refers to the memory across iteration; Adam/AdamW have higher persistent memory because they maintain two moment vectors. Sign-Muon maintains the same persistent state size as SGD (one momentum-like buffer).}
\label{fig:memory_comparison}
\end{figure}

\begin{table}[H]
\centering
{\setlength{\tabcolsep}{1.5pt}
\begin{tabular}{l|c|c|c|c|r}
\toprule
\textbf{method} & \textbf{final acc} & \textbf{acc*} & \textbf{epoch*} & \textbf{av it (s)} & \textbf{mem (MB)}\\
\midrule
SGD        & $0.9386$ & $0.9392$ & $38$ & $0.01558$ & $85.29$\\
\textsc{Muon}      & $0.9356$ & $0.9356$ & $40$ & $0.15638$ & $85.29$\\
AdamW      & $0.9273$ & $0.9287$ & $38$ & $0.01555$ & $127.91$\\
Adam       & $0.9214$ & $0.9220$ & $34$ & $0.01568$ & $127.91$\\
\textsc{Sign--Muon} & $0.8374$ & $0.8374$ & $40$ & $0.16073$ & $85.29$\\
\textsc{signsgd}   & $0.8136$ & $0.8136$ & $40$ & $0.01557$ & $42.66$\\
\bottomrule
\end{tabular}}
\caption{CIFAR-10 (ResNet-18) single-worker results. Best values are taken over $40$ epochs. Here acc* is the best accuracy, and epoch* is the best epoch. Here mem is persistent memory. Persistent memory excludes any transient orthogonalization buffers.}
\label{tab:cifar10}
\end{table}

% =================================================
\section{Communication complexity of \textsc{Sign-Muon} under the $\alpha$--$\beta$ model}
\label{app:comm}

This appendix provides a self-contained communication-complexity analysis for \emph{distributed} \textsc{Sign-Muon} (``Spectral-Norm Normalized Sign Descent'') in the classical $\alpha$--$\beta$ (latency--bandwidth) model. The key observation is that \textsc{Sign-Muon} retains \textsc{signSGD}'s \emph{majority vote} primitive (1-bit sign aggregation), while all spectral-norm normalization and polar/orthogonalization (SVD or Newton--Schulz) are \emph{purely local} post-processing steps that incur \emph{zero additional network communication}.

Let $M$ be the number of data-parallel workers and let $d=\sum_{\ell=1}^{L} m_\ell n_\ell$ be the total number of communicated parameter entries (including vector-shaped parameters, if any). Let $b$ be the number of bits used to encode each sign entry ($b=1$ with bit-packing; $b=8$ for the common int8 implementation). Denoting $s \coloneqq (bd)/8$ bytes, we show that implementing majority vote via an \emph{integer allreduce} yields per-iteration time
\[
T_{\mathrm{iter}}^{\mathrm{allreduce}} \;\approx\; \alpha\cdot R(M)\;+\;2\Bigl(1-\tfrac{1}{M}\Bigr)s\,\beta,
\]
where $R(M)$ is the number of communication rounds in the chosen allreduce algorithm: $R(M)=2\lceil \log_2 M\rceil$ for recursive-halving/doubling (tree/butterfly) and $R(M)=2(M-1)$ for ring.  In either case, the \emph{bandwidth term} scales linearly in the compressed payload size $s$ and is independent of layer structure once all tensors are packed.

We also derive the corresponding costs for a parameter-server star and tree (reduce+broadcast) realizations, and we compare \textsc{Sign-Muon} with full-precision distributed training (e.g., float32) and with \textsc{signSGD}. For large models where bandwidth dominates, bit-packed \textsc{Sign-Muon} reduces the $\beta$ term by a factor of $32\times$ relative to float32 allreduce; int8 encoding yields a $4\times$ reduction.

\subsection{Setting, notation, and the \textsc{Sign-Muon} communication primitive}

\paragraph{Model partition and notation.}
We consider data-parallel training on $M$ workers. The model has $L$ matrix (or tensor-flattened-to-matrix) parameter blocks
\[
W_{\ell,t}\in\mathbb{R}^{m_\ell\times n_\ell},\qquad \ell=1,\dots,L,\quad t=0,1,2,\dots,
\]
with total number of entries
\[
d \;\coloneqq\; \sum_{\ell=1}^{L} m_\ell n_\ell.
\]
We write $d$ for the total communicated dimension; vector-shaped parameters can be included by treating them as $m_\ell n_\ell$ with one dimension equal to $1$.

\paragraph{\textsc{Sign-Muon} majority-vote step.}
At iteration $t$, each worker $m\in\{1,\dots,M\}$ computes a local matrix direction (e.g., a Newton--Schulz/polar direction as in \muon{}), denoted here by
\[
U_{\ell,t}^{(m)} \in \mathbb{R}^{m_\ell\times n_\ell}.
\]
Then the worker transmits its \emph{entrywise sign} matrix
\[
S_{\ell,t}^{(m)} \;\coloneqq\; \sign\!\bigl(U_{\ell,t}^{(m)}\bigr)\in\{-1,+1\}^{m_\ell\times n_\ell}.
\]
The distributed majority vote is the entrywise aggregation
\begin{equation}
\label{eq:maj-vote}
\bar S_{\ell,t}\;\coloneqq\;\sign\!\Bigl(\sum_{m=1}^{M} S_{\ell,t}^{(m)}\Bigr)\in\{-1,+1\}^{m_\ell\times n_\ell},
\end{equation}
(with some fixed tie-breaking convention, e.g.\ mapping zero to $+1$ or $0$; tie-breaking does not change message size).

\paragraph{Spectral normalization and polar decomposition are local.}
After $\bar S_{\ell,t}$ is available on each worker, \textsc{Sign-Muon} forms a spectral-norm controlled update direction. Two common choices are:
\begin{align}
\text{(normalized sign)}\qquad
D_{\ell,t} &= \frac{\bar S_{\ell,t}}{\|\bar S_{\ell,t}\|_{\op}}
\quad\text{(or a safe upper bound on $\|\bar S_{\ell,t}\|_{\op}$)}, \\
\text{(polar / orthogonalized sign)}\qquad
D_{\ell,t} &= \mathrm{polar}(\bar S_{\ell,t}),
\end{align}
and the parameter update is
\[
W_{\ell,t+1} \;=\; W_{\ell,t} - \eta_t\, D_{\ell,t}.
\]
Crucially, \emph{all} operations that produce $D_{\ell,t}$ from $\bar S_{\ell,t}$ (operator-norm scaling, SVD-based polar, Newton--Schulz iterations, etc.) are computed \emph{locally on each worker}, hence they contribute \emph{zero} to the network communication volume and latency. Communication is entirely characterized by moving $\{S_{\ell,t}^{(m)}\}$ to obtain $\{\bar S_{\ell,t}\}$.

\subsection{$\alpha$--$\beta$ communication model and collective building blocks}

\begin{definition}[$\alpha$--$\beta$ model]
A point-to-point message of size $s$ bytes is modeled as taking time
\[
T_{\alpha,\beta}(s)\;=\;\alpha + \beta s,
\]
where $\alpha>0$ is a per-message startup/latency cost and $\beta>0$ is the reciprocal bandwidth (seconds per byte).
\end{definition}

We will express costs for standard collectives in terms of (i) a number of \emph{rounds} (messages on the critical path) and (ii) a \emph{bandwidth term} proportional to the total bytes injected/received per worker on the critical path. These models and the specific algorithms below are standard in MPI/NCCL performance analyses; see, e.g., \cite{rabenseifner2004allreduce,thakur2005optimization,hoefler2010characterizing}.

\paragraph{Allreduce.}
Two common allreduce families are:
\begin{itemize}[leftmargin=1.5em]
\item \textbf{Recursive halving/doubling (tree/butterfly; latency-optimized).} For power-of-two $M$, this can be seen as a reduce-scatter (recursive halving) followed by an allgather (recursive doubling). The critical path uses $2\log_2 M$ rounds. The total bytes sent per worker is approximately $2(1-1/M)s$ (each phase sends $(1-1/M)s$).
\item \textbf{Ring (bandwidth-optimized).} A reduce-scatter ring plus an allgather ring uses $2(M-1)$ rounds. The total bytes sent per worker is again $2(1-1/M)s$.
\end{itemize}

\paragraph{Broadcast and reduce (tree).}
A binomial-tree broadcast or reduction uses $\lceil\log_2 M\rceil$ rounds. With pipelining/segmentation (common for large payloads), a useful model is
\[
T_{\mathrm{bcast}}(s,M)\;\approx\;\alpha\lceil\log_2 M\rceil + \beta s,
\qquad
T_{\mathrm{reduce}}(s,M)\;\approx\;\alpha\lceil\log_2 M\rceil + \beta s,
\]
where the $\beta s$ term reflects streaming the $s$-byte payload through the tree.\footnote{If no pipelining is available and each round sends a full $s$-byte message, then the bandwidth term scales as $\beta s\lceil\log_2 M\rceil$. In modern GPU collectives, pipelined/tree/ring hybrids typically reduce the bandwidth term to $\Theta(\beta s)$ for large $s$; our allreduce results below remain valid up to constant factors.}

\subsection{Majority vote as a sum-reduction: mapping \textsc{Sign-Muon} to collectives}

\begin{lemma}[Majority vote equals sign of a sum-reduction]
\label{lem:maj-reduction}
For each iteration $t$ and parameter block $\ell$, the majority-vote aggregate \eqref{eq:maj-vote} can be implemented by:
\begin{enumerate}[leftmargin=1.8em]
\item encoding $S_{\ell,t}^{(m)}\in\{-1,+1\}^{m_\ell\times n_\ell}$ as integers (e.g.\ int8),
\item performing an elementwise \textsf{SUM} reduction across workers to obtain $\sum_{m=1}^{M} S_{\ell,t}^{(m)}$, and
\item applying $\sign(\cdot)$ elementwise locally.
\end{enumerate}
\end{lemma}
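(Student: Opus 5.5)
The argument is a direct verification that each of the three steps computes exactly the quantity in \eqref{eq:maj-vote}, together with a check that the integer encoding does not overflow. Majority vote is defined entrywise as the sign of the sum of the workers' signs. The only substantive points are therefore that the collective computes this sum exactly, and that applying $\sign(\cdot)$ locally, entrywise, commutes with how the data is laid out.

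First I fix a block $\ell$, an iteration $t$ and an entry $(i,j)$. Each worker stores $S_{\ell,t,ij}^{(m)}\in\{-1,+1\}$ as a signed integer in step 1. The \textsf{SUM} allreduce of step 2 is elementwise, so after it completes every worker holds
\[
\Sigma_{ij}=\sum_{m=1}^{M} S_{\ell,t,ij}^{(m)}\in\{-M,-M+2,\dots,M\}.
\]
This holds because integer addition is associative and commutative, so the result does not depend on the reduction tree or ring schedule. Unlike a floating-point reduction, no rounding occurs, and all workers obtain bitwise-identical values.

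Step 3 applies $\sign$ to $\Sigma_{ij}$ under the same tie-breaking convention used in \eqref{eq:maj-vote}. This gives exactly $\bar S_{\ell,t,ij}$. Since the argument holds for every entry and every block, the full matrices $\bar S_{\ell,t}$ agree on all workers. A tie, $\Sigma_{ij}=0$, can only occur for even $M$. The lemma does not depend on how ties are broken, provided all workers use the same deterministic rule; this consistency is what keeps the replicated parameters $W_{\ell,t+1}$ identical.

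The one step that needs care, and the expected obstacle, is exactness of the integer reduction. With int8 each entry holds values in $[-128,127]$, and every partial sum formed during the collective lies in $[-M,M]$. Hence there is no overflow whenever $M\le 127$. For larger $M$ I would either use a wider integer type (int16 or int32), which only changes the payload constant $b$, or argue that two's-complement wraparound is acceptable as long as the final sum fits. With wraparound, intermediate sums are correct modulo $256$; the final sign is recoverable only if $|\Sigma_{ij}|\le 127$. I would therefore state the lemma under the assumption $M\le 2^{b-1}-1$ for $b$-bit signed encoding, and note that the general case follows by choosing $b\ge\lceil\log_2(M+1)\rceil+1$.
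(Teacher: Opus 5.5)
Your proof is correct and follows the same entrywise argument as the paper. The sum of the $\pm1$ votes is an integer in $\{-M,-M+2,\dots,M\}$, integer addition computes it exactly regardless of reduction schedule, and applying $\sign(\cdot)$ locally recovers \eqref{eq:maj-vote} up to the tie-breaking convention; your overflow check (no wraparound in int8 when $M\le 127$, otherwise a wider integer type) is a valid refinement that the paper's proof leaves implicit.
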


\begin{proof}
For each entry $(i,j)$, $\bar S_{\ell,t}[i,j]$ is by definition the sign of $\sum_{m=1}^{M} S_{\ell,t}^{(m)}[i,j]$. Since $S_{\ell,t}^{(m)}[i,j]\in\{-1,+1\}$, the sum is an integer in $\{-M,-M+2,\dots,M\}$ and can be computed exactly by integer addition. Applying $\sign(\cdot)$ yields the majority vote (up to tie-breaking at $0$). All steps after the reduction are local.
\end{proof}

\paragraph{Consequence.}
By Lemma~\ref{lem:maj-reduction}, \textsc{Sign-Muon} in synchronous data-parallel training performs \emph{exactly one} collective reduction of a sign-encoded buffer per iteration (either an allreduce, or a reduce-to-server followed by broadcast), and no further communication is required by the spectral normalization or polar step.

\subsection{Per-iteration costs for \textsc{Sign-Muon}}

\subsubsection{Payload size: 1-bit vs int8}
Let $b$ be the encoding bits per sign entry:
\[
b=\begin{cases}
1,&\text{ideal bit-packed signs (1 bit/entry)},\\
8,&\text{practical int8 signs (1 byte/entry)}.
\end{cases}
\]
The communicated payload size per iteration (for the full model, packed across layers) is
\begin{equation}
\label{eq:payload-s}
s \;\coloneqq\; \frac{b\,d}{8}\quad\text{bytes}.
\end{equation}
Packing all layers into one buffer does \emph{not} change $s$; it reduces the number of distinct collective calls, and therefore reduces latency.

\subsubsection{Decentralized allreduce (recommended)}
\begin{theorem}[Allreduce communication cost for \dSignMuon{}]
\label{thm:allreduce}
Assume that at each iteration, all workers pack the layerwise sign tensors into one buffer of size $s=(bd)/8$ bytes and perform a single \textsf{SUM} allreduce, followed by local thresholding (\textsc{Sign-Muon} majority vote). Then the per-iteration time satisfies
\begin{equation}
\label{eq:allreduce-general}
T_{\mathrm{iter}}^{\mathrm{allreduce}}
\;\approx\;
\alpha\cdot R(M) \;+\; 2\Bigl(1-\tfrac{1}{M}\Bigr)s\,\beta,
\end{equation}
where $R(M)$ is the number of communication rounds on the critical path:
\[
R(M)=
\begin{cases}
2\lceil \log_2 M\rceil,&\text{recursive-halving/doubling allreduce (tree/butterfly)},\\[0.25em]
2(M-1),&\text{ring allreduce}.
\end{cases}
\]
Moreover, the \emph{per-worker} communicated volume (send+receive) is approximately $2(1-1/M)s$ bytes per iteration.
\end{theorem}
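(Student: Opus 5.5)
The argument reduces the cost of distributed \textsc{Sign-Muon} to the cost of a single standard \textsf{SUM} allreduce on an $s$-byte buffer, and then adds up the $\alpha$--$\beta$ costs phase by phase for the two algorithm families.

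\textbf{Step 1: one allreduce per iteration.} By Lemma~\ref{lem:maj-reduction}, the majority vote \eqref{eq:maj-vote} for every block $\ell$ is a local sign applied to an elementwise integer \textsf{SUM}. The per-block reductions are independent and elementwise, so concatenating all blocks into one buffer of $d=\sum_\ell m_\ell n_\ell$ entries gives the same result with a single collective. With $b$ bits per entry, this buffer has size $s=(bd)/8$ by \eqref{eq:payload-s}.

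Everything else in the iteration is local and contributes nothing to $T_{\mathrm{iter}}^{\mathrm{allreduce}}$. This includes the thresholding, the spectral normalization or polar step producing $D_{\ell,t}$, and the Newton--Schulz computation of $U^{(m)}_{\ell,t}$. One caveat must be checked for int8 encoding: the partial sums lie in $\{-M,\dots,M\}$, so they fit in int8 when $M\le 127$. In the bit-packed case, $s$ counts only the wire payload, which is the quantity the model charges for.

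\textbf{Step 2: recursive halving/doubling (power-of-two $M$).}
\begin{itemize}[leftmargin=1.5em]
\item \emph{Reduce-scatter.} In round $k=1,\dots,\log_2 M$, each worker exchanges $s/2^k$ bytes with its partner and sums the received half into its own. This costs $\sum_k(\alpha+\beta s/2^k)=\alpha\log_2 M+\beta(1-1/M)s$.
\item \emph{Allgather (recursive doubling).} This mirrors the first phase with identical cost.
\end{itemize}
Summing the two phases gives $R(M)=2\log_2 M$ and bandwidth term $2(1-1/M)s\beta$. For non-power-of-two $M$ I would use the standard fold-in of the extra ranks (Rabenseifner). This gives $2\lceil\log_2 M\rceil$ rounds up to an additive constant, absorbed in the ``$\approx$''.

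\textbf{Step 3: ring.}
\begin{itemize}[leftmargin=1.5em]
\item Split the buffer into $M$ chunks of size $s/M$.
\item \emph{Reduce-scatter.} This phase takes $M-1$ rounds, each sending one chunk, so it costs $(M-1)(\alpha+\beta s/M)$.
\item \emph{Allgather.} This phase has the same cost.
\end{itemize}
The total is $2(M-1)\alpha+2(1-1/M)s\beta$, matching \eqref{eq:allreduce-general} with $R(M)=2(M-1)$.

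The per-worker volume claim follows from the same tallies. In both algorithms each worker sends $2(1-1/M)s$ bytes in total, and by symmetry of the pairwise exchanges it receives the same amount.

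\textbf{Main obstacle.} The delicate part is rigor about what ``$\approx$'' hides, not any single computation. The cost model neglects several things:
\begin{itemize}[leftmargin=1.5em]
\item the local reduction compute ($\gamma$ term);
\item non-divisibility of $s$ by $M$ and non-power-of-two rank counts;
\item contention in the network.
\end{itemize}
I would state these as explicit modeling assumptions, standard in \cite{thakur2005optimization}. I would also note that the local Newton--Schulz and polar work is excluded from $T_{\mathrm{iter}}^{\mathrm{allreduce}}$ by definition, as it is communication-free.
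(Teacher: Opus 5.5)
Your proposal is correct and follows the paper's proof. Lemma~\ref{lem:maj-reduction} reduces everything to one \textsf{SUM} allreduce of the packed $s$-byte buffer, and the reduce-scatter\,+\,allgather decomposition then charges $(1-1/M)s$ bytes and $\log_2 M$ (recursive halving/doubling) or $M-1$ (ring) rounds per phase; you simply make that accounting explicit round by round.

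Two small cautions on side remarks. First, for $b=1$ your ``wire payload'' comment does not justify the bit-packed case. The partial sums exchanged during reduce-scatter stop fitting in one bit after the first combination, so a 1-bit \textsf{SUM} allreduce is an idealization, and the paper's own 1-bit variant (Algorithm~\ref{alg:sign-muon-dist-1bit}) uses an AllGather instead. Second, your tallies give $2(1-1/M)s$ bytes sent \emph{and} the same received, so the statement's ``send+receive'' has to be read per direction rather than as a sum.
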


\begin{proof}
By Lemma~\ref{lem:maj-reduction}, majority vote is implemented by an integer allreduce of the $s$-byte buffer. Standard allreduce decompositions (reduce-scatter + allgather) send $(1-1/M)s$ bytes per phase per worker, hence $2(1-1/M)s$ bytes total; this yields the bandwidth term $2(1-1/M)s\beta$. The number of rounds is $2\log_2 M$ for recursive halving/doubling, and $2(M-1)$ for ring. Summing latency and bandwidth contributions yields \eqref{eq:allreduce-general}.
\end{proof}

\paragraph{Layerwise versus packed collectives.}
If the $L$ layers are reduced separately (one allreduce per layer), the payload remains $\sum_\ell (b m_\ell n_\ell)/8 = (bd)/8$, but the latency cost multiplies by $L$:
\[
T_{\mathrm{iter}}^{\mathrm{layerwise}}
\;\approx\;
\alpha\cdot L\,R(M)
\;+\;
2\Bigl(1-\tfrac{1}{M}\Bigr)s\,\beta.
\]
Packing layers is therefore essential when $\alpha$ is non-negligible.

\subsubsection{Parameter server (PS) star}
We model a single logical parameter server that computes the majority vote and disseminates the aggregated sign to workers.

\begin{theorem}[Naive PS-star cost for \dSignMuon{}]
\label{thm:ps-star}
Consider a parameter-server star in which each worker sends an $s$-byte sign buffer to the server, the server computes the elementwise sum and majority sign, and the server broadcasts the resulting $s$-byte buffer back to workers. In the $\alpha$--$\beta$ model, a conservative synchronous bound on iteration time is
\[
T_{\mathrm{iter}}^{\mathrm{PS\text{-}star}}
\;\approx\;
M(\alpha+\beta s) \;+\; T_{\mathrm{bcast}}(s,M)
\;\approx\;
M(\alpha+\beta s) \;+\; \alpha\lceil\log_2 M\rceil + \beta s.
\]
The server injects/receives $\Theta(Ms)$ bytes per iteration, making the server a bandwidth bottleneck when $M$ is large.
\end{theorem}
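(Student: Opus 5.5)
The bound follows by adding the costs of the two phases of one synchronous PS-star iteration in the $\alpha$--$\beta$ model. By Lemma~\ref{lem:maj-reduction}, the majority vote is an integer \textsf{SUM} of the $M$ sign buffers followed by elementwise thresholding. So the iteration is: an \emph{upload} phase, in which each worker sends its $s$-byte buffer from \eqref{eq:payload-s} to the server; local computation at the server; and a \emph{download} phase, in which the thresholded $s$-byte buffer $\bar S_t$ is disseminated. The server's summation and $\sign(\cdot)$ are local and incur no network cost. The same holds for any spectral normalization or polar step afterwards.

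\textbf{Upload phase.} The server has a single network port, so in the conservative synchronous model its $M$ incoming messages are serialized. Each message costs $T_{\alpha,\beta}(s)=\alpha+\beta s$ by the $\alpha$--$\beta$ model definition, giving $M(\alpha+\beta s)$. The server cannot finish the reduction until all $M$ buffers have arrived, so this term lies on the critical path.

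\textbf{Download phase.} Sending the result back is a broadcast of $s$ bytes from the root. Using the pipelined binomial-tree model stated earlier, this costs $T_{\mathrm{bcast}}(s,M)\approx\alpha\lceil\log_2 M\rceil+\beta s$. Adding the two phases gives the claimed expression. If no pipelining is available, the footnote's variant instead gives a $\beta s\lceil\log_2 M\rceil$ bandwidth term, which changes the bound only by lower-order constants.

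\textbf{Server volume.} The server receives $Ms$ bytes. In a direct star it also sends at least $s$ bytes, and up to $Ms$ bytes. Its traffic is therefore $\Theta(Ms)$, whereas Theorem~\ref{thm:allreduce} gives each allreduce participant only $2(1-1/M)s$. This contrast justifies calling the server a bottleneck.

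\textbf{Main obstacle.} The delicate step is the serialization assumption in the upload phase: the order-$M$ factor holds only if the server's ingress bandwidth cannot be shared across concurrent messages. I would state this explicitly as the modelling assumption behind the word ``conservative''. I would also note that overlapping the messages would reduce the latency term but not the $\Theta(Ms)$ volume at the server, so the bottleneck conclusion is unaffected.
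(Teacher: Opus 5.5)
Your proposal is correct and follows essentially the same argument as the paper: you serialize the $M$ uploads for $M(\alpha+\beta s)$, add the tree-broadcast cost $T_{\mathrm{bcast}}(s,M)$, and count $\Theta(Ms)$ server traffic, and your explicit serialization caveat matches the paper's ``no perfect overlap'' qualifier. One small wording fix: the unpipelined $\beta s\lceil\log_2 M\rceil$ broadcast term differs from $\beta s$ by a $\log M$ factor rather than a constant, although it is still dominated by the $M\beta s$ upload term.
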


\begin{proof}
The uplink consists of $M$ worker-to-server messages of size $s$; under a star, the server must receive all $M$ messages, incurring $M(\alpha+\beta s)$ on the critical path in the worst case (no perfect overlap). The downlink is a broadcast of size $s$ to $M$ workers with cost $T_{\mathrm{bcast}}(s,M)$. The server's total ingress+egress volume is $Ms + Ms = 2Ms$ bytes (up to constants), whereas each worker sends/receives $2s$ bytes.
\end{proof}

\subsubsection{Tree-based PS: reduce + broadcast (server as root)}
Instead of collecting all workers independently, the uplink can be organized as a reduction tree that sums sign buffers \emph{in-network} (each internal node sums partial buffers).

\begin{theorem}[Tree reduce + broadcast cost]
\label{thm:ps-tree}
If the uplink is implemented as a tree \textsf{SUM} reduction of an $s$-byte sign buffer to a root (server), followed by a broadcast of the $s$-byte aggregated buffer, then
\[
T_{\mathrm{iter}}^{\mathrm{PS\text{-}tree}}
\;\approx\;
T_{\mathrm{reduce}}(s,M)+T_{\mathrm{bcast}}(s,M)
\;\approx\;
2\alpha\lceil\log_2 M\rceil + 2\beta s.
\]
This matches the allreduce latency order (up to constants) but still centralizes computation at the root.
\end{theorem}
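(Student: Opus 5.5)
The plan follows the template of Theorem~\ref{thm:allreduce} and Theorem~\ref{thm:ps-star}: decompose the iteration into its two communication phases, cost each phase in the $\alpha$--$\beta$ model using the collective building blocks stated earlier, and then add the two costs. The first step is to argue that a tree \textsf{SUM} reduction correctly computes the majority-vote aggregate. By Lemma~\ref{lem:maj-reduction}, $\bar S_{\ell,t}$ is the elementwise sign of the integer sum $\sum_m S_{\ell,t}^{(m)}$. Integer addition is associative and commutative, so any reduction tree yields the same sum at the root. Each internal node adds its children's partial buffers (entries lie in $\{-M,\dots,M\}$, so int8 suffices for $M\le 127$, or else a slightly wider integer changes $s$ only by a constant factor). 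The root applies $\sign(\cdot)$ locally, which costs no communication, and obtains the $s$-byte buffer $\bar S_t$.

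Next I would cost the uplink. The reduction runs over a binomial tree of depth $\lceil\log_2 M\rceil$. In each round, a node receives a partial buffer and adds it. With segmentation or pipelining, as in the footnote to the broadcast/reduce model, the payload streams through the tree, so the bandwidth term is $\beta s$ rather than $\beta s\lceil\log_2 M\rceil$. This gives $T_{\mathrm{reduce}}(s,M)\approx \alpha\lceil\log_2 M\rceil+\beta s$, which is exactly the building-block model stated earlier. I will not count local summation time, since it is computation and not communication, consistent with the $\alpha$--$\beta$ convention used throughout.

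The downlink is a broadcast of the thresholded $s$-byte buffer from the root. It costs $T_{\mathrm{bcast}}(s,M)\approx\alpha\lceil\log_2 M\rceil+\beta s$, since the aggregate again has one sign per entry at $b$ bits. Summing the two phases gives $2\alpha\lceil\log_2 M\rceil+2\beta s$. Comparing with \eqref{eq:allreduce-general} for the recursive halving/doubling case, where $R(M)=2\lceil\log_2 M\rceil$ and the bandwidth term is $2(1-1/M)s\beta\le 2s\beta$, the latency terms coincide and the bandwidth terms differ only by the factor $(1-1/M)$. This is what "matches up to constants" means. The root still performs all the final thresholding, and its children's traffic concentrates there, which justifies the remark that computation remains centralized.

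The main obstacle is the pipelining step. Without segmentation, the bandwidth term is $2\beta s\lceil\log_2 M\rceil$, so I will state the result under the pipelined model and note the unpipelined variant as in the footnote. The second obstacle is the claim that the buffer size stays $s$ through internal nodes. Partial sums need about $\log_2(M+1)$ bits per entry in the bit-packed case, so for $b=1$ the uplink payload is really $O(s\log M)$. I would handle this by either assuming int8 encoding ($b=8$, with $M<128$) or absorbing the widening into the ``$\approx$''.
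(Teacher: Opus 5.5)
Your proposal is correct and follows the paper's proof: both add the pipelined tree-reduce and tree-broadcast costs $\alpha\lceil\log_2 M\rceil+\beta s$ taken from the building-block model, with the integer aggregation done locally at internal nodes and at the root. Your extra caveats are sound refinements that the paper's one-line proof glosses over: associativity of integer addition for correctness, int8 overflow for $M>127$, and partial sums widening beyond one bit per entry, which makes the uplink payload exceed $s$ when $b=1$.
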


\begin{proof}
A tree reduction and a tree broadcast each take $\alpha\lceil\log_2 M\rceil + \beta s$ under a pipelined tree model; adding yields the result. The aggregation itself is integer addition, local at intermediate nodes and/or the root.
\end{proof}

\subsection{Comparison with full-precision distributed training and with \signsgd{}}

\subsubsection{Allreduce bandwidth comparison}
In standard data-parallel SGD/Adam (and in full-precision distributed \muon{} variants), workers typically allreduce float32 gradients or updates, corresponding to payload
\[
s_{\mathrm{fp32}} = 4d \quad\text{bytes.}
\]
By contrast, \dSignMuon{} communicates only sign information:
\[
s_{\mathrm{Sign\text{-}Muon}} = \frac{bd}{8}\quad\text{bytes}
\quad\text{with }b\in\{1,8\}.
\]
Thus, for the bandwidth term in Theorem~\ref{thm:allreduce},
\[
\frac{s_{\mathrm{fp32}}}{s_{\mathrm{Sign\text{-}Muon}}}
=
\frac{4d}{(bd)/8}
=
\frac{32}{b}
=
\begin{cases}
32,& b=1\ \text{(bit-packed)},\\
4,& b=8\ \text{(int8)}.
\end{cases}
\]
In the bandwidth-dominated regime (large $d$), the allreduce time improves by approximately the same factor.

\subsubsection{Relationship to \signsgd{}}
Distributed \signsgd{} with majority vote communicates \emph{exactly} the same sign payload size ($1$ bit per entry uplink and downlink in a PS formulation), and can likewise be implemented via an integer allreduce of $\{\pm 1\}$ signs (Lemma~\ref{lem:maj-reduction}). Therefore, from a \emph{pure communication} standpoint, \dSignMuon{} and distributed \signsgd{} have the same asymptotic $\alpha$--$\beta$ costs for the sign aggregation primitive; any practical differences arise from:
\begin{itemize}[leftmargin=1.5em]
\item \textbf{local computation:} \dSignMuon{} performs spectral normalization / polar decomposition locally (SVD or Newton--Schulz), whereas \signsgd{} does not; and
\item \textbf{payload encoding:} implementations may use bit-packing or int8.
\end{itemize}

\subsection{Worked example: ResNet-50}
ResNet-50 has approximately 23.5M parameters ($d\approx 2.35\times 10^7$).
A float32 allreduce would communicate $s_{\mathrm{fp32}}=4d \approx 94~\mathrm{MB}$ per step (ignoring protocol overhead here).
A bit-packed sign buffer would communicate $s_{\mathrm{sign},1\mathrm{b}}=d/8 \approx 2.8~\mathrm{MiB}$ per step, a $32\times$ reduction in the bandwidth term.
With int8 encoding (as we show in our implementation), the payload would boil down to $s_{\mathrm{sign},8\mathrm{b}}=d \approx 23.5~\mathrm{MB}$, corresponding to a $4\times$ reduction relative to float32.

\subsection{Practical guidance (systems-facing)}
The analysis above suggests or motivates the following implementation principles, that inspired our work:
\begin{itemize}[leftmargin=1.5em]
\item \textbf{Pack sign buffers.} Aggregate all (gradient) parameter signs into a single buffer to pay the allreduce latency term once per iteration.
\item \textbf{Prefer decentralized collectives.} Allreduce avoids a parameter-server bandwidth hotspot when scaling to many workers.
\item \textbf{Exploit local compute.} Polar normalization (NS/SVD) trades local matrix multiplications on local batch of samples for reduced communication; this is most attractive when network bandwidth is the bottleneck and even more effective if overlapping of computation and communication is implemented.
\item \textbf{Consider bit-packing.} Using true 1-bit packing can further reduce the bandwidth term by an additional factor of $8\times$ relative to int8 signs, at the cost of packing/unpacking overhead.
\item \textbf{Polar decomposition is communication-free.} Choosing $D_{\ell,t}=\mathrm{polar}(\bar S_{\ell,t})$ improves the \emph{geometry} of the step but does not affect the communication complexity, since it is computed locally after aggregation.
\item \textbf{Avoid PS-star at scale.} A naive PS-star introduces an $\Theta(M)$ server bottleneck in both latency and bandwidth (Theorem~\ref{thm:ps-star}); allreduce avoids this by distributing the work across workers.
\end{itemize}

\subsection{Communication analysis: 1-bit AllGather vs.\ int8 AllReduce}
\label{app:comm-allgather-vs-allreduce}

In this appendix, we analyze communication cost of the 1-bit \emph{AllGather-then-local-reduce}
variant (Algorithm~\ref{alg:sign-muon-dist-1bit}) and we compare it to the existing
\emph{SUM AllReduce} implementation (Algorithm~\ref{alg:sign-muon-dist}).

\paragraph{Notation.}
Let $M$ be the number of data-parallel workers and let $d$ be the number of scalar entries
communicated per iteration after flattening/concatenating all parameter blocks.
At iteration $t$, worker $m$ basically forms an entrywise sign vector
$S_t^{(m)}\in\{-1,+1\}^{d}$.

As before, we use the standard $\alpha$-$\beta$ model for communication time:
sending $s$ bytes costs
\[
T(s)\;\approx\;\alpha + \beta s,
\]
where as before $\alpha$ is a per-message latency term and $\beta$ is the inverse bandwidth (seconds/byte).
For collectives, the latency term scales with the number of rounds on the critical path.

\paragraph{Packed 1-bit representation.}
In Algorithm~\ref{alg:sign-muon-dist-1bit}, we recall that each sign entry is encoded into one bit, e.g., as follows:
\[
b_i^{(m)} \;=\;
\begin{cases}
1,& S^{(m)}_t[i]=+1,\\
0,& S^{(m)}_t[i]=-1,
\end{cases}
\qquad i=1,\dots,d
\]
and then packed into a contiguous byte array $B_t^{(m)}\in\{0,1\}^{s_1}$ with
\begin{equation}
\label{eq:packed-size}
s_1 \;\coloneqq\; \left\lceil \frac{d}{8}\right\rceil \quad \text{bytes.}
\end{equation}
(When $8\mid d$, $s_1=d/8$ exactly.)

\paragraph{Majority vote from gathered bits (local computation, no extra communication).}
After doing an all gather, each worker has $\{B_t^{(1)},\dots,B_t^{(M)}\}$ and can reconstruct
$\{S_t^{(1)},\dots,S_t^{(M)}\}$ (explicit unpack) or compute the vote implicitly.
Let $c_i \coloneqq \sum_{m=1}^M b_i^{(m)}$ be the number of $+1$ signs in coordinate $i$.
Then the signed sum is given as follows
\[
\sum_{m=1}^M S_t^{(m)}[i]
\;=\;
\sum_{m=1}^M (2b_i^{(m)}-1)
\;=\;
2c_i - M,
\]
so the majority vote with the same tie-breaking convention as the algorithm can be basically written as follows
\[
\bar S_t[i] \;=\; \sign(2c_i - M)
\]
(with $\sign(0)$ interpreted according to the chosen tie-breaking rule).
We note that this reduction is purely local once the all-gather operation completes.

\subsubsection{AllGather 1-bit: volume and time}
\label{app:comm-allgather}

\paragraph{Per-worker communication volume.}
An all-gather of a per-worker payload of $s_1$ bytes produces a gathered buffer of size $Ms_1$ bytes
on every worker. Necessarily, each worker must receive at least $(M-1)s_1$ bytes (the other workers'
payloads). A standard bandwidth-optimal implementation achieves the following per-worker volume:
\begin{align}
\label{eq:allgather-volume}
V_{\mathrm{send}}^{\mathrm{AG}} &\;=\; (M-1)s_1, \\
V_{\mathrm{recv}}^{\mathrm{AG}} &\;=\; (M-1)s_1, \\
V_{\mathrm{bidir}}^{\mathrm{AG}} &\;=\; 2(M-1)s_1.
\end{align}
The \emph{total} system-wide payload injected into the network (summing over all workers)
scales as $M(M-1)s_1$ bytes sent and the same amount received. Note that we have reported $V_{\mathrm{bidir}}$ (bidirectional volume) as well because real interconnects are basically \emph{full-duplex}.

\begin{table}[t!]
\centering
\caption{Communication comparison for distributed majority vote in \textsc{Sign-Muon}. Here $d$ is the number of sign entries, $M$ is the number of workers, $s_1=\lceil d/8\rceil$ (packed 1-bit bytes), and $s_8=d$ (int8 bytes).}
\label{tab:comm-allgather-vs-allreduce}
\small
\begin{tabular}{lccc}
\toprule
Method & Payload per worker & Per-worker volume (send/recv) & Ring-time ($\alpha$-$\beta$) \\
\midrule
AllGather (1-bit, Alg.~\ref{alg:sign-muon-dist-1bit})
& $s_1=\lceil d/8\rceil$
& $(M-1)s_1\;/\;(M-1)s_1$
& $(M-1)\alpha + (M-1)s_1\beta$ \\
AllReduce (int8, Alg.~\ref{alg:sign-muon-dist})
& $s_8=d$
& $2(1-\tfrac{1}{M})s_8\;/\;2(1-\tfrac{1}{M})s_8$
& $2(M-1)\alpha + 2(1-\tfrac{1}{M})s_8\beta$ \\
\bottomrule
\end{tabular}
\end{table}

\paragraph{Latency rounds and $\alpha$-$\beta$ iteration time.}
Let $R_{\mathrm{AG}}(M)$ denote the number of rounds on the critical path for the chosen all-gather:
\[
R_{\mathrm{AG}}(M) \;=\;
\begin{cases}
\lceil \log_2 M\rceil, & \text{recursive doubling / tree},\\
M-1, & \text{ring}.
\end{cases}
\]
We then have a convenient $\alpha$-$\beta$ approximation for the all-gather time as follows
\begin{equation}
\label{eq:allgather-time}
T_{\mathrm{iter}}^{\mathrm{AG(1bit)}}
\;\approx\;
\alpha\cdot R_{\mathrm{AG}}(M) \;+\; (M-1)s_1\,\beta.
\end{equation}

\paragraph{Local compute and memory overhead.}
When we compare with AllReduce, the AllGather shifts the reduction into local computation.
After all-gather step, each worker must read (at least) the gathered buffer of size $Ms_1$ bytes.
A straightforward unpack-then-sum implementation would perform $O(Md)$ integer operations.
However, more efficient bit-level implementations (e.g., per-byte popcount / lookup tables) still have
an $O(Ms_1)$ memory-read footprint and $O(Ms_1)$ lightweight integer work.
This does not change network cost, but it does introduce an $M$-dependent local overhead.

\subsubsection{AllReduce int8: volume and time}
\label{app:comm-allreduce}

Algorithm~\ref{alg:sign-muon-dist} performs a \textsf{SUM} AllReduce over an \texttt{int8} sign vector
(one byte per entry). Again, the per-worker payload size is given as follows
\begin{equation}
\label{eq:int8-size}
s_8 := d \quad \text{bytes}
\end{equation}

\paragraph{Per-worker communication volume (bandwidth-optimal AllReduce).}
A bandwidth-optimal AllReduce implementation (e.g., reduce-scatter + all-gather) moves
\begin{align*}
V_{\mathrm{send}}^{\mathrm{AR}} &\;=\; 2\Bigl(1-\tfrac{1}{M}\Bigr)s_8,
\\
V_{\mathrm{recv}}^{\mathrm{AR}} &\;=\; 2\Bigl(1-\tfrac{1}{M}\Bigr)s_8,
\\
V_{\mathrm{bidir}}^{\mathrm{AR}} &\;=\; 4\Bigl(1-\tfrac{1}{M}\Bigr)s_8
\end{align*}
per worker per iteration on the critical path. Notably, the bandwidth term would be $O(s_8)$ per worker
and this is essentially independent of $M$ for large $M.$

\paragraph{Latency rounds and $\alpha$-$\beta$ iteration time.}
As before, let $R_{\mathrm{AR}}(M)$ denote the number of communication rounds:
\[
R_{\mathrm{AR}}(M) \;=\;
\begin{cases}
2\lceil \log_2 M\rceil, & \text{recursive half/double (tree/butterfly)},\\
2(M-1), & \text{ring}.
\end{cases}
\]
An $\alpha$-$\beta$ approximation for AllReduce time is
\begin{equation}
\label{eq:allreduce-time}
T_{\mathrm{iter}}^{\mathrm{AR(int8)}}
\;\approx\;
\alpha\cdot R_{\mathrm{AR}}(M) \;+\; 2\Bigl(1-\tfrac{1}{M}\Bigr)s_8\,\beta
\end{equation}

\subsubsection{Comparison and break-even behavior}
\label{app:comm-comparison}

\paragraph{Bandwidth-term comparison (dominant for large $d$).}
Using $s_1\approx d/8$ and $s_8=d$, the bandwidth terms of
\eqref{eq:allgather-time} and \eqref{eq:allreduce-time} satisfy
\begin{equation}
\label{eq:ratio-bandwidth}
\frac{(M-1)s_1}{2(1-\tfrac{1}{M})s_8}
\;\approx\;
\frac{(M-1)\cdot (d/8)}{2(1-\tfrac{1}{M})\cdot d}
\;=\;
\frac{M}{16}
\end{equation}
Thus, purely in terms of bandwidth:
\begin{itemize}
\item For $M<16$, the 1-bit AllGather bandwidth term can be smaller than int8 AllReduce.
\item At $M\approx 16$, the two are comparable (up to ceilings and implementation constants).
\item For $M>16$, the AllGather bandwidth term grows linearly in $M$ and becomes larger than AllReduce.
\end{itemize}
This reflects a fundamental trade-off: AllGather benefits from $8\times$ payload compression
but replicates the payload to all workers, whereas AllReduce aggregates in-network and keeps
per-worker communicated bytes $O(d)$.

\paragraph{Latency-term comparison (dominant for small $d$).}
Under ring implementations,
AllGather uses $(M-1)$ rounds while AllReduce uses $2(M-1)$ rounds.
Hence, when latency dominates, AllGather can be advantaged relative to ring AllReduce.
However, we note that tree/butterfly AllReduce reduces latency to $O(\log M)$ rounds, which may remove
this advantage depending on the backend.

\paragraph{Memory and local-reduction overhead.}
AllGather needs each worker to hold a gathered buffer of size $Ms_1\approx Md/8$ bytes and to perform
an $M$-dependent local reduction, while AllReduce returns only a single reduced buffer of size $s_8=d$ bytes
and performs the reduction inside the collective. Consequently, for large $M$, AllReduce is preferable
both for network scaling and for per-worker memory/compute overhead.

% =================================================
\section{Sign-Muon: spectral-norm normalized sign descent (standalone theoretical analysis)}
\label{sec:main-convergence}

This appendix provides a self-contained, research-grade theoretical analysis for \emph{Sign-Muon},
a sign-based optimizer designed for matrix-structured parameters and motivated by the spectral-geometry
analysis of Muon and the sign-reliability/majority-vote analysis of signSGD.
We state theorems and provide proof sketches inline, with fully detailed proofs at the end of this appendix.

The analysis matches the notation used in the main text:
parameters are matrices $W\in\R^{m\times n}$, the smoothness constant is spectral ($L_\ast$),
and progress is measured using the $\ell_1$-geometry proxy $\normone{\nabla f(W)}/\sqrt{mn}$.
We also include a detailed discussion of the role of the polar decomposition (and its Newton--Schulz approximation)
and compare convergence guarantees between full Muon (polar directions) and Sign-Muon (sign-quantized directions).

\subsection{Problem setup, norms, and assumptions}

We consider minimization of the population objective
\begin{equation}
    \label{eq:problem}
    \min_{W\in\R^{m\times n}} f(W),
    \qquad f(W) := \E_{\xi}\big[f(W;\xi)\big],
\end{equation}
where $W$ denotes a \emph{matrix-structured} parameter block (e.g., a weight matrix of a layer).
Set the ambient dimension to be
\[
d := mn.
\]

\paragraph{Inner product and norms.}
For matrices $A,B\in\R^{m\times n}$, the Frobenius inner product is
\[
\ip{A}{B} := \tr(A^\top B) = \sum_{i=1}^m \sum_{j=1}^n A_{ij} B_{ij}.
\]
We use:
\[
\normF{A} := \sqrt{\ip{A}{A}},\qquad
\normop{A} := \sup_{\normF{x}=1} \normF{Ax},\qquad
\normstar{A} := \sum_k \sigma_k(A),
\]
where $\sigma_k(A)$ are the singular values, and $\normstar{\cdot}$ is the nuclear norm.
Recall the duality relation
\begin{equation}
    \label{eq:dual_nuclear_spectral}
    \normstar{G} = \max_{\normop{D}\le 1} \ip{G}{D}.
\end{equation}
We also use the entrywise $\ell_1$ norm
\[
\normone{A} := \sum_{i,j} |A_{ij}|.
\]
We will repeatedly use the elementary inequalities
\begin{equation}
    \label{eq:norm_ineqs}
    \frac{1}{\sqrt{d}} \normone{A} \;\le\; \normF{A} \;\le\; \normstar{A}.
\end{equation}

\paragraph{Sign map.}
For a scalar $x\in\R$, define
\[
\sgn(x) :=
\begin{cases}
+1, & x>0,\\
0, & x=0,\\
-1, & x<0.
\end{cases}
\]
For a matrix $A$, $\sgn(A)$ is applied entrywise.

\begin{assumption}[Lower bounded objective]
\label{ass:lower_bounded}
There exists $f^\ast > -\infty$ such that $f(W)\ge f^\ast$ for all $W$.
\end{assumption}

\begin{assumption}[Spectral-norm smoothness]
\label{ass:spectral_smooth}
There exists $L_\ast>0$ such that for all $W,W'\in\R^{m\times n}$,
\begin{equation}
\label{eq:spectral_smooth}
    \normstar{\nabla f(W)-\nabla f(W')} \;\le\; L_\ast \normop{W-W'}.
\end{equation}
\end{assumption}

\begin{assumption}[Unbiased bounded-variance stochastic gradients]
\label{ass:stoch_grad}
At step $t$, let $g_t := \nabla f(W_t)$ and let $\widetilde{G}_t\in\R^{m\times n}$
be an unbiased mini-batch gradient estimator:
\[
\E\big[\widetilde{G}_t \mid W_t\big] = g_t.
\]
Assume that there exist nonnegative constants $\{\sigma_{ij}\}_{i\in[m],j\in[n]}$ such that,
for each entry and for a mini-batch size $n_b$,
\begin{equation}
\label{eq:variance_bound}
\Var\!\big(\widetilde{G}_{t,ij} \mid W_t\big) \;\le\; \frac{\sigma_{ij}^2}{n_b}.
\end{equation}
Let $\sigma\in\R^{m\times n}$ denote the matrix with entries $\sigma_{ij}$ and define
$\normone{\sigma} := \sum_{i,j}\sigma_{ij}$.
\end{assumption}

\begin{assumption}[Unimodal symmetric noise (for majority vote)]
\label{ass:unimodal}
In addition to Assumption~\ref{ass:stoch_grad}, suppose that for each $(i,j)$ the centered noise
\[
\widetilde{G}_{t,ij}-g_{t,ij}
\]
has a distribution (conditioned on $W_t$) that is unimodal and symmetric about $0$.
\end{assumption}

\begin{remark}[Why spectral smoothness?]
Assumption~\ref{ass:spectral_smooth} is the matrix-geometry counterpart of standard $L$-smoothness.
It is the natural assumption for \emph{spectral-norm constrained} update directions (i.e.\ $\normop{D_t}\le 1$),
and matches the analysis style used for Muon.
\end{remark}

\subsection{Algorithms and update directions}

We separate (i) the \emph{sign acquisition} mechanism (which determines a sign matrix to communicate)
and (ii) the \emph{spectral normalization / local shaping} mechanism (which determines the final direction $D_t$).

\subsubsection{Generic Sign-Muon update template}

At iteration $t$, the algorithm constructs a random sign matrix $S_t\in\{-1,0,+1\}^{m\times n}$ and then sets
\begin{equation}
\label{eq:normalized_sign_direction}
D_t := \frac{1}{\sqrt{mn}} S_t
\qquad\text{(so that $\normop{D_t}\le 1$ whenever $\normop{S_t}\le \sqrt{mn}$).}
\end{equation}
The parameter update is
\begin{equation}
\label{eq:update}
W_{t+1} = W_t - \eta_t D_t,
\end{equation}
with step size $\eta_t>0$.

\paragraph{Two natural choices for $S_t$.}
We consider (at least) two concrete instantiations:

\begin{itemize}
\item \textbf{Gradient-sign (signSGD-style).}
Set $S_t = \sgn(\widetilde{G}_t)$ (single worker), or apply majority vote across workers (distributed).
This is the closest analogue of signSGD, but the update uses the \emph{spectral} descent lemma via \eqref{eq:normalized_sign_direction}.
\item \textbf{Muon-direction-sign (Newton--Schulz / polar direction, then sign).}
Maintain a momentum $M_t$ and compute an (approximate) polar factor $U_t \approx \polar(M_t)$
via Newton--Schulz; then set $S_t = \sgn(U_t)$.
This is the ``sign of Newton--Schulz direction'' variant described in the algorithmic section of the main text.
\end{itemize}

Our core convergence theorems below are stated for a \emph{generic sign oracle} through its sign-error probabilities,
and then specialized to the gradient-sign case using Assumption~\ref{ass:stoch_grad}
(and to distributed majority vote using Assumption~\ref{ass:unimodal}).

\subsubsection{Distributed aggregation: majority vote}

Suppose there are $M$ workers, and each worker $m\in\{1,\dots,M\}$ constructs a local sign matrix
$S_t^{(m)}\in\{-1,0,+1\}^{m\times n}$. The aggregated sign is defined entrywise by a majority rule:
\begin{equation}
\label{eq:majority_vote}
\bar{S}_{t,ij} := \sgn\!\left(\sum_{m=1}^M S^{(m)}_{t,ij}\right).
\end{equation}
Then all workers update using $D_t = \bar{S}_t/\sqrt{mn}$.

\begin{remark}[Tie-breaking]
If the sum in \eqref{eq:majority_vote} equals $0$, then $\sgn(0)=0$.
Many implementations break ties deterministically (e.g.\ to $+1$);
under continuous noise models, ties occur with probability $0$, so convergence bounds are unaffected.
\end{remark}

\subsection{Core descent lemma in spectral geometry}

\begin{lemma}[Spectral descent lemma]
\label{lem:spectral_descent}
Assume Assumption~\ref{ass:spectral_smooth}. Let $W\in\R^{m\times n}$ and $D\in\R^{m\times n}$.
For any $\eta\ge 0$,
\begin{equation}
\label{eq:descent_lemma_general}
f(W-\eta D)
\;\le\;
f(W) - \eta \ip{\nabla f(W)}{D} + \frac{L_\ast}{2}\eta^2 \normop{D}^2.
\end{equation}
In particular, if $\normop{D}\le 1$ then
\begin{equation}
\label{eq:descent_lemma_unitop}
f(W-\eta D)
\;\le\;
f(W) - \eta \ip{\nabla f(W)}{D} + \frac{L_\ast}{2}\eta^2.
\end{equation}
\end{lemma}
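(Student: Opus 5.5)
The plan is the standard integral form of the Taylor expansion along the segment from $W$ to $W-\eta D$, followed by the duality \eqref{eq:dual_nuclear_spectral} between the nuclear and spectral norms to control the remainder. Define $\phi(s):=f(W-s\eta D)$ for $s\in[0,1]$. Since Assumption~\ref{ass:spectral_smooth} implies $\nabla f$ is Lipschitz, hence continuous, $\phi$ is continuously differentiable with $\phi'(s)=-\eta\ip{\nabla f(W-s\eta D)}{D}$. The fundamental theorem of calculus then gives
\[
f(W-\eta D)-f(W)=-\eta\ip{\nabla f(W)}{D}-\eta\int_0^1\ip{\nabla f(W-s\eta D)-\nabla f(W)}{D}\,\dd s .
\]

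For the remainder I will bound the inner product pointwise in $s$. Assume first that $D\neq 0$; the case $D=0$ is trivial. Apply \eqref{eq:dual_nuclear_spectral} to the rescaled direction $D/\normop{D}$, which gives $|\ip{A}{D}|\le\normstar{A}\,\normop{D}$ for any $A$. Take $A=\nabla f(W-s\eta D)-\nabla f(W)$. By Assumption~\ref{ass:spectral_smooth},
\[
\normstar{A}\le L_\ast\normop{s\eta D}=L_\ast s\eta\normop{D}.
\]
Hence the integrand is bounded in absolute value by $L_\ast s\eta\normop{D}^2$. Integrating $s$ over $[0,1]$ produces the factor $\tfrac12$, so the remainder is at most $\tfrac{L_\ast}{2}\eta^2\normop{D}^2$. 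This yields \eqref{eq:descent_lemma_general}.

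The specialization \eqref{eq:descent_lemma_unitop} follows immediately by substituting $\normop{D}\le 1$.

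The only step needing care is the mixed-norm Hölder inequality $|\ip{A}{D}|\le\normstar{A}\normop{D}$ and its pairing with the asymmetric smoothness assumption: the gradient difference must be measured in the nuclear norm, which is dual to the operator norm, and the displacement in the operator norm. Once that pairing is used correctly, the remaining steps are routine calculus. I do not expect a genuine obstacle.
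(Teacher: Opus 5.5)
Your proposal is correct and follows the paper's proof essentially step for step. Both use the parametrization $\phi(s)=f(W-s\eta D)$ with the fundamental theorem of calculus, add and subtract $\nabla f(W)$ inside the integral, bound the remainder via the H\"older-type inequality $|\ip{A}{D}|\le\normstar{A}\,\normop{D}$ together with spectral smoothness, and integrate $s$ over $[0,1]$ to obtain the factor $\tfrac12$; your justifications that $\phi$ is $C^1$ and that the mixed-norm inequality follows from \eqref{eq:dual_nuclear_spectral} by rescaling are details the paper leaves implicit.
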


\begin{proof}[Proof sketch]
Assumption~\ref{ass:spectral_smooth} implies that the gradient map is $L_\ast$-Lipschitz
from $(\R^{m\times n},\normop{\cdot})$ to $(\R^{m\times n},\normstar{\cdot})$.
Integrate $\ip{\nabla f(W+\tau\Delta)}{\Delta}$ along the segment from $W$ to $W+\Delta$,
and use duality $\ip{A}{B}\le \normstar{A}\normop{B}$ to control the remainder term.
A detailed proof is in Appendix~\ref{app:proof_spectral_descent}.
\end{proof}

\subsection{Sign progress identity and sign-reliability bounds}

Throughout, write the true gradient at time $t$ as
\[
g_t := \nabla f(W_t).
\]

\subsubsection{Entrywise sign error probabilities}

Let $S_t\in\{-1,0,+1\}^{m\times n}$ denote the (possibly aggregated) sign matrix used to build the direction.
Define the conditional sign error probability for each entry:
\begin{equation}
\label{eq:q_def}
q_{ij,t} := \Prob\!\big(S_{t,ij} \neq \sgn(g_{t,ij}) \,\big|\, W_t\big).
\end{equation}
(When $g_{t,ij}=0$, the definition is immaterial; bounds below hold trivially.)

\subsubsection{Inner-product identity for normalized sign directions}

\begin{lemma}[Expected inner product with normalized sign direction]
\label{lem:inner_product_identity}
Let $D_t := S_t/\sqrt{mn}$ with $S_t\in\{-1,0,+1\}^{m\times n}$.
Then, conditionally on $W_t$,
\begin{equation}
\label{eq:inner_product_identity}
\E\big[\ip{g_t}{D_t} \mid W_t\big]
=
\frac{1}{\sqrt{mn}}\sum_{i=1}^m \sum_{j=1}^n |g_{t,ij}|\big(1-2q_{ij,t}\big).
\end{equation}
\end{lemma}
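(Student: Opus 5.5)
The identity follows from linearity of expectation and a single-coordinate calculation, so I will reduce it entrywise. First, by the definition of the Frobenius inner product and $D_t = S_t/\sqrt{mn}$,
\[
\ip{g_t}{D_t} = \frac{1}{\sqrt{mn}}\sum_{i=1}^m\sum_{j=1}^n g_{t,ij}\,S_{t,ij}.
\]
Conditionally on $W_t$, the gradient $g_t=\nabla f(W_t)$ is deterministic, so it factors out of the conditional expectation:
\[
\E[\ip{g_t}{D_t}\mid W_t]=\frac{1}{\sqrt{mn}}\sum_{i,j} g_{t,ij}\,\E[S_{t,ij}\mid W_t].
\]
It therefore suffices to show that $g_{t,ij}\,\E[S_{t,ij}\mid W_t] = |g_{t,ij}|(1-2q_{ij,t})$ for each fixed entry.

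For a fixed $(i,j)$ with $g_{t,ij}=0$, both sides vanish, which is consistent with the remark after \eqref{eq:q_def}. For $g_{t,ij}\neq 0$, write $g_{t,ij}=|g_{t,ij}|\,s$ with $s=\sgn(g_{t,ij})\in\{\pm1\}$. Then $g_{t,ij}S_{t,ij}=|g_{t,ij}|\,(sS_{t,ij})$. When $S_{t,ij}\in\{\pm1\}$, the product $sS_{t,ij}$ equals $+1$ on the event $\{S_{t,ij}=s\}$, which has conditional probability $1-q_{ij,t}$. It equals $-1$ on the complementary event $\{S_{t,ij}\ne s\}$, which has conditional probability $q_{ij,t}$. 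Hence
\[
\E[sS_{t,ij}\mid W_t]=(1-q_{ij,t})-q_{ij,t}=1-2q_{ij,t}.
\]
Summing over $(i,j)$ and restoring the factor $1/\sqrt{mn}$ then gives \eqref{eq:inner_product_identity}.

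The main obstacle is the zero value in the sign alphabet $\{-1,0,+1\}$. If $S_{t,ij}=0$ while $g_{t,ij}\neq 0$, that event counts as an ``error'' in $q_{ij,t}$, but it contributes $0$ rather than $-|g_{t,ij}|$. In general one therefore gets
\[
\E[sS_{t,ij}\mid W_t]=1-2q_{ij,t}+\Prob(S_{t,ij}=0\mid W_t)\;\ge\;1-2q_{ij,t}.
\]
My plan is to state the equality under the condition that zeros occur with probability zero. This holds for the deterministic tie-breaking used in Algorithms~\ref{alg:sign-muon-dist} and~\ref{alg:sign-muon-dist-1bit}, which output values in $\{\pm1\}$. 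It also holds under continuous noise models, as in the tie-breaking remark. In the remaining case, the identity is replaced by the inequality ``$\ge$''. That direction is the only one needed downstream, because the result enters the spectral descent lemma (Lemma~\ref{lem:spectral_descent}) as a lower bound on the expected progress $\eta\,\E\ip{g_t}{D_t}$. No further estimates are required; the whole argument is this bookkeeping plus linearity.
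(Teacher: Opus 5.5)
Your proof follows the same route as the paper's: expand the inner product entrywise, pull the fixed $g_t$ out of the conditional expectation, and use $\E[S_{t,ij}\mid W_t]=s_{ij}(1-2q_{ij,t})$ with $s_{ij}=\sgn(g_{t,ij})$. Your caveat about the zero symbol is correct, and it exposes a step the paper's proof passes over. The paper asserts $\Prob(S_{t,ij}=-s_{ij}\mid W_t)=q_{ij,t}$, which requires $\Prob(S_{t,ij}=0\mid W_t)=0$. In general one only has $\E[s_{ij}S_{t,ij}\mid W_t]=1-2q_{ij,t}+\Prob(S_{t,ij}=0\mid W_t)$, so only ``$\ge$'' holds. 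As you note, that is exactly the direction Theorem~\ref{thm:generic_signmuon} uses.

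One correction to your account of when equality holds. Take the convention $\sgn(0)=0$ and an even number $M$ of workers. The integer vote sum in \eqref{eq:majority_vote} is then $0$ with positive probability however continuous the noise is, since this happens whenever exactly $M/2$ workers err. So the tie-breaking remark does not justify equality in that case. It falls into your inequality branch, which still covers it.
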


\begin{proof}[Proof sketch]
Expand $\ip{g_t}{D_t} = \frac{1}{\sqrt{mn}}\sum_{ij} g_{t,ij} S_{t,ij}$.
Condition on $W_t$ so that $g_{t,ij}$ is fixed, and use
$\E[S_{t,ij}\mid W_t] = \sgn(g_{t,ij})\big(1-2q_{ij,t}\big)$, because $S_{t,ij}$ equals the correct sign with
probability $(1-q_{ij,t})$ and the wrong sign with probability $q_{ij,t}$.
Full details are in Appendix~\ref{app:proof_inner_product_identity}.
\end{proof}

\subsubsection{Bounding $q_{ij,t}$ for gradient-sign Sign-Muon}

The next lemma is the standard Markov+Jensen bound used in signSGD (applied entrywise to matrices).

\begin{lemma}[Markov+Jensen sign-failure bound (single worker)]
\label{lem:q_markov}
Assume Assumption~\ref{ass:stoch_grad} and set $S_t := \sgn(\widetilde{G}_t)$.
Then for any entry $(i,j)$ with $g_{t,ij}\neq 0$,
\begin{equation}
\label{eq:q_markov_bound}
q_{ij,t}
=
\Prob\!\big(\sgn(\widetilde{G}_{t,ij})\neq \sgn(g_{t,ij}) \,\big|\, W_t\big)
\;\le\;
\frac{\sigma_{ij}}{\sqrt{n_b}\,|g_{t,ij}|}.
\end{equation}
\end{lemma}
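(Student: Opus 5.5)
The plan is the standard signSGD argument, applied entrywise. Fix $W_t$ and an entry $(i,j)$ with $g_{t,ij}\neq 0$, and write $Z:=\widetilde{G}_{t,ij}-g_{t,ij}$. By Assumption~\ref{ass:stoch_grad}, conditionally on $W_t$ we have $\E[Z\mid W_t]=0$ and $\E[Z^2\mid W_t]\le \sigma_{ij}^2/n_b$.

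\emph{Step 1: reduce the sign error to a deviation event.} If $\sgn(\widetilde{G}_{t,ij})\neq\sgn(g_{t,ij})$, then $\widetilde{G}_{t,ij}$ is either zero or of opposite sign to $g_{t,ij}$. In both cases the noise must have moved the estimate at least the full distance to zero, so $|Z|\ge |g_{t,ij}|$. Hence
\[
q_{ij,t}\le \Prob\bigl(|Z|\ge |g_{t,ij}| \,\big|\, W_t\bigr).
\]

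\emph{Step 2: Markov's inequality.} Applying Markov to the nonnegative variable $|Z|$ gives
\[
\Prob\bigl(|Z|\ge |g_{t,ij}| \,\big|\, W_t\bigr)\le \frac{\E[|Z|\mid W_t]}{|g_{t,ij}|}.
\]

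\emph{Step 3: Jensen's inequality.} By Jensen (or Cauchy--Schwarz),
\[
\E[|Z|\mid W_t]\le \sqrt{\E[Z^2\mid W_t]}=\sqrt{\Var(\widetilde{G}_{t,ij}\mid W_t)}\le \frac{\sigma_{ij}}{\sqrt{n_b}}.
\]
Combining Steps 1--3 gives \eqref{eq:q_markov_bound}.

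There is no real obstacle. The only point needing care is Step 1 under the convention $\sgn(0)=0$: the event $\widetilde{G}_{t,ij}=0$ must count as an error, and it does, because then $|Z|=|g_{t,ij}|$, which still lies inside the closed event $|Z|\ge |g_{t,ij}|$. We could instead use Chebyshev to get the bound $\sigma_{ij}^2/(n_b g_{t,ij}^2)$. The Markov+Jensen route is preferred because it gives the linear form that, once multiplied by $|g_{t,ij}|$ in Lemma~\ref{lem:inner_product_identity}, yields the $\|\sigma\|_1$ noise term in Theorem~\ref{thm:signmuon_rate}.
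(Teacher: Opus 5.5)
Your proposal is correct and matches the paper's proof step for step: reduce the sign error to the event $|\widetilde{G}_{t,ij}-g_{t,ij}|\ge |g_{t,ij}|$, apply Markov, then bound the first absolute moment by $\sqrt{\Var(\widetilde{G}_{t,ij}\mid W_t)}\le \sigma_{ij}/\sqrt{n_b}$ via Jensen/Cauchy--Schwarz. You also explicitly treat the case $\widetilde{G}_{t,ij}=0$ under the convention $\sgn(0)=0$, which slightly tightens the paper's wording: the paper mentions only the opposite-sign case, although the closed deviation event covers both.
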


\begin{proof}[Proof sketch]
A sign error requires $|\widetilde{G}_{t,ij}-g_{t,ij}|\ge |g_{t,ij}|$.
Apply Markov's inequality to $|\widetilde{G}_{t,ij}-g_{t,ij}|$,
then Jensen/Cauchy--Schwarz to relate $\E|\widetilde{G}_{t,ij}-g_{t,ij}|$ to the variance bound
$\Var(\widetilde{G}_{t,ij}\mid W_t)\le \sigma_{ij}^2/n_b$.
A complete step-by-step proof is in Appendix~\ref{app:proof_q_markov}.
\end{proof}

\subsubsection{Bounding $q_{ij,t}$ for majority vote}

The next lemma formalizes the classical ``$1/\sqrt{M}$ variance reduction'' for majority vote
under unimodal symmetric noise, as established in signSGD \cite{bernstein2018signsgd}.
We state a version directly suitable for our matrix-entry setting.

\begin{lemma}[Majority vote improves sign reliability under unimodal symmetric noise]
\label{lem:q_majority_vote}
Assume Assumption~\ref{ass:unimodal}.
Let $S_{t,ij}^{(1)},\dots,S_{t,ij}^{(M)}$ be independent across workers conditioned on $W_t$,
with $S_{t,ij}^{(m)} = \sgn(\widetilde{G}^{(m)}_{t,ij})$ and $\widetilde{G}^{(m)}_{t,ij}$ satisfying the
unimodal symmetry condition and variance bound \eqref{eq:variance_bound} (with the same $\sigma_{ij}$ and $n_b$).
Let $\bar{S}_{t,ij}$ be the majority vote \eqref{eq:majority_vote}. Then for $g_{t,ij}\neq 0$,
\begin{equation}
\label{eq:q_majority_vote_bound}
q^{\mathrm{MV}}_{ij,t}
:=
\Prob\!\big(\bar{S}_{t,ij}\neq \sgn(g_{t,ij}) \,\big|\, W_t\big)
\;\le\;
\frac{1}{\sqrt{M}}\cdot \frac{\sigma_{ij}}{\sqrt{n_b}\,|g_{t,ij}|}.
\end{equation}
\end{lemma}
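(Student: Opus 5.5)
The plan is to follow the signSGD majority-vote argument of \cite{bernstein2018signsgd} entrywise, working conditionally on $W_t$. Fix an entry $(i,j)$ with $g_{t,ij}\neq 0$ and, by symmetry, assume $g_{t,ij}>0$. Write $\bar\sigma:=\sigma_{ij}/\sqrt{n_b}$, so that each worker's noise has standard deviation at most $\bar\sigma$ by \eqref{eq:variance_bound}. Define the per-entry signal-to-noise ratio $\mathcal{S}:=|g_{t,ij}|/\bar\sigma$; the target \eqref{eq:q_majority_vote_bound} then reads $q^{\mathrm{MV}}_{ij,t}\le 1/(\sqrt{M}\,\mathcal{S})$. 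We may assume $\sqrt{M}\,\mathcal{S}>1$, since otherwise the bound exceeds $1$ and is trivial.

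\textbf{Step 1 (single-worker failure probability via Gauss's inequality).} Let $p:=\Prob(S^{(m)}_{t,ij}\neq +1\mid W_t)$, which is the same for every $m$. Since the noise is symmetric about $0$, a wrong sign requires a deviation of at least $|g_{t,ij}|$ in one fixed direction, so $p\le \tfrac12\Prob(|\widetilde G^{(m)}_{t,ij}-g_{t,ij}|\ge |g_{t,ij}|)$. Gauss's inequality for unimodal symmetric laws then gives two regimes:
\[
p\le \tfrac{2}{9\mathcal{S}^2}\ \text{ if } \mathcal{S}>\tfrac{2}{\sqrt3},\qquad
p\le \tfrac12-\tfrac{\mathcal{S}}{2\sqrt3}\ \text{ otherwise}.
\]
In particular $p<\tfrac12$.

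\textbf{Step 2 (majority failure via Cantelli).} Let $Z$ be the number of workers reporting the correct sign. By the assumed conditional independence across workers, $Z\sim\mathrm{Bin}(M,1-p)$. The aggregate $\bar S_{t,ij}$ is wrong, or zero on a tie, only if $Z\le M/2$. That is a deviation of at least $M(\tfrac12-p)$ below the mean $M(1-p)$, and $\Var(Z)=Mp(1-p)$. Cantelli's one-sided inequality gives
\[
q^{\mathrm{MV}}_{ij,t}\le \frac{1}{1+x^2}\le \frac{1}{2x},\qquad x:=\frac{\sqrt{M}\,(\tfrac12-p)}{\sqrt{p(1-p)}}.
\]

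\textbf{Step 3 (combine; the main obstacle).} It remains to prove the scalar inequality $\sqrt{p(1-p)}/(\tfrac12-p)\le 2/\mathcal{S}$ for every $p$ permitted by Step 1. This yields $q^{\mathrm{MV}}_{ij,t}\le 1/(\sqrt{M}\,\mathcal{S})$, which is the claim. The left-hand side is increasing in $p$ on $[0,\tfrac12)$, so it suffices to substitute the Step 1 upper bound on $p$ in each regime.
\begin{itemize}
\item In the small-SNR regime, set $u:=\mathcal{S}/(2\sqrt3)$. The inequality becomes $\sqrt{\tfrac14-u^2}/u\le 2/\mathcal{S}=1/(\sqrt3\,u)$, i.e.\ $\tfrac14-u^2\le\tfrac13$, which always holds.
\item In the large-SNR regime, $p\le 2/(9\mathcal{S}^2)<1/6$. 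Then $\sqrt{p(1-p)}\le\sqrt{p}\le \sqrt2/(3\mathcal{S})$ and $\tfrac12-p\ge \tfrac13$. The ratio is therefore at most $\sqrt2/\mathcal{S}\le 2/\mathcal{S}$.
\end{itemize}
I expect this case analysis to be the only delicate point: the constants must close in both regimes, and ties have to be counted as failures. Including ties is exactly why Step 2 uses the event $Z\le M/2$. Since the argument is entrywise and uses only conditional quantities, no further uniformity over $(i,j)$ or $t$ is needed.
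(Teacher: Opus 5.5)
Your proof is correct, but it takes a different route from the paper at the decisive step, and your route is the one that actually works. Both arguments start from Gauss's inequality. The paper then bounds the binomial tail with Hoeffding, $q^{\mathrm{MV}}\le\exp\bigl(-2M(\tfrac12-q)^2\bigr)$, and tries to convert this into $1/(\sqrt{M}S)$. You instead use Cantelli's inequality, as in the original signSGD majority-vote analysis.

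The difference is not cosmetic. Hoeffding ignores the per-vote variance $q(1-q)$. At high signal-to-noise ratio it therefore yields only $\exp(-M/18)$, or at best $\exp(-M/2)$, which does not decay in $S$. For fixed $M$ and large $S$ it exceeds $1/(\sqrt{M}S)$. Concretely, the paper's step $\exp(-M/18)\le 1\le 1/(\sqrt{M}S)$ fails throughout its Regime~B, where $\sqrt{M}S>1$. Its remaining term also comes with the constant $\sqrt6/2>1$, which is then ``absorbed.'' So the paper's argument does not establish the bound with constant $1$.

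Your bound $1/(2x)$, with $x=\sqrt{M}(\tfrac12-p)/\sqrt{p(1-p)}$, is variance-sensitive, since $\sqrt{p(1-p)}/(\tfrac12-p)\to0$ as $p\to0$. Your two regimes close with constants $\sqrt3/2$ and $\sqrt2/2$, which gives the lemma exactly as stated. What Hoeffding would buy instead is exponential decay in $M$ at bounded SNR, which the lemma does not ask for. You also use the standard Gauss threshold $2/\sqrt3$. The paper's $\sqrt{2/3}$ is harmless, because $2/(9S^2)\ge\tfrac12-S/(2\sqrt3)$ for every $S>0$.

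Two details deserve a sentence each in a write-up.

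\textbf{SNR in Gauss's inequality.} Gauss's inequality involves the true standard deviation $\tau\le\bar\sigma$, not $\bar\sigma$. Stating the regimes through $\mathcal{S}=|g_{t,ij}|/\bar\sigma$ is still legitimate, because the Gauss bound is a nonincreasing function of $|g_{t,ij}|/\tau$, and that ratio is at least $\mathcal{S}$.

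\textbf{Identical workers.} The claim that $p$ is ``the same for every $m$'' presumes identically distributed workers, which the lemma does not state (the paper presumes it too). Without it, apply Cantelli to the Poisson-binomial count. Each $p_m\le\bar p<\tfrac12$ gives $\mathbb{E}Z-M/2\ge M(\tfrac12-\bar p)$ and $\operatorname{Var}(Z)\le M\bar p(1-\bar p)$. Then use that $v/(v+\lambda^2)$ is increasing in $v$ and decreasing in $\lambda$.
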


\begin{proof}[Proof sketch]
This is the matrix-entry analogue of Lemma D.1 and Theorem 2(b) in signSGD \cite{bernstein2018signsgd}.
The proof uses (i) Gauss' inequality to upper bound the single-worker sign-failure probability
under unimodal symmetry, and (ii) a repetition-code/majority-decoding argument with a binomial tail bound
(e.g.\ via Cantelli's inequality or standard Chernoff bounds) to show that the majority vote
reduces the effective failure probability by a factor of order $1/\sqrt{M}$ in the final progress bound.
We provide a complete, self-contained proof specialized to \eqref{eq:q_majority_vote_bound}
in Appendix~\ref{app:proof_q_majority_vote}.
\end{proof}

\subsection{Main convergence theorems for Sign-Muon}

We measure progress using the $\ell_1$-geometry stationarity proxy
\begin{equation}
\label{eq:l1_proxy_app}
\mathcal{G}_T := \frac{1}{T}\sum_{t=0}^{T-1} \E\!\left[\frac{\normone{g_t}}{\sqrt{mn}}\right].
\end{equation}
This matches the signSGD analysis (up to matrix reshaping) but with \emph{spectral} smoothness in the descent lemma.

\subsubsection{A generic theorem parameterized by sign-error probabilities}

\begin{theorem}[Generic Sign-Muon bound in spectral geometry]
\label{thm:generic_signmuon}
Assume Assumptions~\ref{ass:lower_bounded}--\ref{ass:spectral_smooth}.
Consider the update \eqref{eq:update} with direction $D_t=S_t/\sqrt{mn}$ where $S_t\in\{-1,0,+1\}^{m\times n}$.
Let $q_{ij,t}$ be defined as in \eqref{eq:q_def}. Suppose $\eta_t\equiv \eta$ is constant and satisfies $\eta>0$.
Then
\begin{equation}
\label{eq:generic_bound}
\frac{1}{T}\sum_{t=0}^{T-1}\E\!\left[\frac{\normone{g_t}}{\sqrt{mn}}\right]
\;\le\;
\frac{f(W_0)-f^\ast}{\eta T}
+\frac{L_\ast}{2}\eta
+\frac{2}{T}\sum_{t=0}^{T-1}\E\!\left[\frac{1}{\sqrt{mn}}\sum_{i,j}|g_{t,ij}|\,q_{ij,t}\right].
\end{equation}
\end{theorem}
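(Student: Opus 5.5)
The plan is to combine the spectral descent lemma (Lemma~\ref{lem:spectral_descent}) with the sign-progress identity (Lemma~\ref{lem:inner_product_identity}) at each step, take expectations, and telescope.

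\textbf{Step 1: the direction is admissible.} I first check that $\normop{D_t}\le 1$ for every realisation of $S_t$, so that \eqref{eq:descent_lemma_unitop} applies. Since $S_t\in\{-1,0,+1\}^{m\times n}$, we have $\normop{S_t}\le\normF{S_t}\le\sqrt{mn}$. Hence $\normop{D_t}\le 1$ deterministically. Applying Lemma~\ref{lem:spectral_descent} with $W=W_t$, $D=D_t$ and $\eta_t\equiv\eta$ gives
\[
f(W_{t+1})\le f(W_t)-\eta\ip{g_t}{D_t}+\tfrac{L_\ast}{2}\eta^2 .
\]

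\textbf{Step 2: expected progress per step.} Next I take the conditional expectation given $W_t$ and lower bound $\E[\ip{g_t}{D_t}\mid W_t]$. Lemma~\ref{lem:inner_product_identity} gives this as $\frac{1}{\sqrt{mn}}\sum_{ij}|g_{t,ij}|(1-2q_{ij,t})$.

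This is the step I expect to need the most care, because $S_t$ may contain zeros. For such entries the identity $\E[S_{t,ij}\mid W_t]=\sgn(g_{t,ij})(1-2q_{ij,t})$ need not hold exactly. I will therefore use the inequality form, which is all the argument needs. Fix an entry with $g_{t,ij}\neq 0$. The event $\{S_{t,ij}=\sgn(g_{t,ij})\}$ has probability $1-q_{ij,t}$, and the event $\{S_{t,ij}=-\sgn(g_{t,ij})\}$ has probability at most $q_{ij,t}$. It follows that $\E[g_{t,ij}S_{t,ij}\mid W_t]\ge |g_{t,ij}|(1-2q_{ij,t})$. Entries with $g_{t,ij}=0$ contribute $0$ on both sides.

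Summing over entries yields
\[
\E[\ip{g_t}{D_t}\mid W_t]\ge \frac{\normone{g_t}}{\sqrt{mn}}-\frac{2}{\sqrt{mn}}\sum_{ij}|g_{t,ij}|q_{ij,t}.
\]
Substituting this into Step 1 and taking total expectation gives
\[
\eta\,\E\!\left[\frac{\normone{g_t}}{\sqrt{mn}}\right]\le \E[f(W_t)]-\E[f(W_{t+1})]+\tfrac{L_\ast}{2}\eta^2+2\eta\,\E\!\left[\frac{1}{\sqrt{mn}}\sum_{ij}|g_{t,ij}|q_{ij,t}\right].
\]

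\textbf{Step 3: telescope.} I sum this inequality over $t=0,\dots,T-1$, so the function values telescope to $f(W_0)-\E[f(W_T)]$. By Assumption~\ref{ass:lower_bounded}, $\E[f(W_T)]\ge f^\ast$, so the telescoped term is at most $f(W_0)-f^\ast$. Dividing through by $\eta T$ then gives exactly \eqref{eq:generic_bound}.

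Measurability is routine: $q_{ij,t}$ and $g_t$ are functions of $W_t$, so the tower property applies, and no independence assumptions are needed at this generic level.
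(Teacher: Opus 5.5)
Your proof is correct and follows the paper's argument step for step. It uses the spectral descent lemma with $\normop{D_t}\le 1$ (via $\normop{S_t}\le\normF{S_t}\le\sqrt{mn}$), the sign-progress identity in conditional expectation, rearrangement, total expectation, and telescoping against $f^\ast$. Your one deviation, replacing the equality of Lemma~\ref{lem:inner_product_identity} by $\E[g_{t,ij}S_{t,ij}\mid W_t]\ge |g_{t,ij}|(1-2q_{ij,t})$, is a genuine improvement. The paper's proof of that lemma tacitly sets $\Prob(S_{t,ij}=-\sgn(g_{t,ij})\mid W_t)=q_{ij,t}$, which fails when $S_t$ can have zero entries, and only the lower bound is needed here.
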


\begin{proof}[Proof sketch]
Apply the spectral descent lemma \eqref{eq:descent_lemma_unitop} with $D_t=S_t/\sqrt{mn}$, so $\normop{D_t}\le 1$.
Then take conditional expectation given $W_t$ and invoke Lemma~\ref{lem:inner_product_identity}
to express $\E[\ip{g_t}{D_t}\mid W_t]$ in terms of $(1-2q_{ij,t})$.
Rearrange to isolate $\normone{g_t}/\sqrt{mn}$, sum over $t=0,\dots,T-1$,
and telescope using Assumption~\ref{ass:lower_bounded}.
Full details are in Appendix~\ref{app:proof_generic_signmuon}.
\end{proof}

\subsubsection{Specialization: single-worker gradient-sign Sign-Muon}

\begin{theorem}[Single-worker Sign-Muon (gradient-sign instantiation)]
\label{thm:single_worker}
Assume Assumptions~\ref{ass:lower_bounded}--\ref{ass:stoch_grad}.
Run Sign-Muon for $T$ iterations with
\[
S_t := \sgn(\widetilde{G}_t),\qquad D_t := \frac{1}{\sqrt{mn}}S_t,\qquad W_{t+1}=W_t-\eta D_t,
\]
where $\eta>0$ is constant and each $\widetilde{G}_t$ uses a mini-batch of size $n_b$.
Then
\begin{equation}
\label{eq:thm_single_worker_bound}
\frac{1}{T}\sum_{t=0}^{T-1}\E\!\left[\frac{\normone{g_t}}{\sqrt{mn}}\right]
\;\le\;
\frac{f(W_0)-f^\ast}{\eta T}
+\frac{L_\ast}{2}\eta
+\frac{2\normone{\sigma}}{\sqrt{mn\,n_b}}.
\end{equation}
Choosing
\[
\eta := \sqrt{\frac{2(f(W_0)-f^\ast)}{L_\ast T}}
\qquad\text{and}\qquad
n_b:=T
\]
yields the rate
\begin{equation}
\label{eq:thm_single_worker_rate}
\frac{1}{T}\sum_{t=0}^{T-1}\E\!\left[\frac{\normone{g_t}}{\sqrt{mn}}\right]
\;\le\;
\sqrt{\frac{L_\ast (f(W_0)-f^\ast)}{T}}
+\frac{2\normone{\sigma}}{\sqrt{mn\,T}}.
\end{equation}
\end{theorem}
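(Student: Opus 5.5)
We derive the single-worker bound as a direct specialization of Theorem~\ref{thm:generic_signmuon}. With $S_t=\sgn(\widetilde{G}_t)\in\{-1,0,+1\}^{m\times n}$ and $D_t=S_t/\sqrt{mn}$, we have $\normop{D_t}\le\normF{D_t}\le 1$. The hypotheses of the generic theorem (Assumptions~\ref{ass:lower_bounded}--\ref{ass:spectral_smooth}, constant $\eta>0$) are therefore met, and \eqref{eq:generic_bound} holds verbatim. All that remains is to control the sign-error term
\[
\frac{2}{T}\sum_{t=0}^{T-1}\E\!\left[\frac{1}{\sqrt{mn}}\sum_{i,j}|g_{t,ij}|\,q_{ij,t}\right].
\]

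\textbf{Bounding the sign-error term.} For each $t$ and each entry with $g_{t,ij}\neq 0$, Lemma~\ref{lem:q_markov} gives $q_{ij,t}\le \sigma_{ij}/(\sqrt{n_b}\,|g_{t,ij}|)$, so $|g_{t,ij}|\,q_{ij,t}\le \sigma_{ij}/\sqrt{n_b}$. For entries with $g_{t,ij}=0$ the product is $0$, and the same bound holds trivially. This cancellation of $|g_{t,ij}|$ is the key point: the bound becomes deterministic and independent of $W_t$. Summing over $(i,j)$ gives $\sum_{i,j}|g_{t,ij}|q_{ij,t}\le \normone{\sigma}/\sqrt{n_b}$ pointwise in $W_t$. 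Taking expectations and averaging over $t$ yields $2\normone{\sigma}/\sqrt{mn\,n_b}$, which is \eqref{eq:thm_single_worker_bound}.

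\textbf{Rate.} Substituting $\eta=\sqrt{2(f(W_0)-f^\ast)/(L_\ast T)}$ balances the first two terms. Each equals $\sqrt{L_\ast(f(W_0)-f^\ast)/(2T)}$, and their sum is $\sqrt{2L_\ast(f(W_0)-f^\ast)/T}$. Setting $n_b=T$ turns the noise term into $2\normone{\sigma}/\sqrt{mn\,T}$.

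\textbf{Expected obstacle: the constant.} The stated rate \eqref{eq:thm_single_worker_rate} has leading constant $\sqrt{L_\ast(f(W_0)-f^\ast)/T}$ rather than the $\sqrt{2L_\ast(\cdot)/T}$ the computation above produces. The first thing I would try is to check whether this is a typo, or whether the intended step size differs by a factor of $2$. If it is a typo, I would state the bound with the $\sqrt{2}$ factor, which is still $\mathcal{O}(1/\sqrt{T})$. Otherwise, I would look for a sharper descent constant in Lemma~\ref{lem:spectral_descent}, which seems unlikely.

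\textbf{Measurability subtlety.} The generic theorem's proof takes $\E[\cdot\mid W_t]$. Lemma~\ref{lem:q_markov} is likewise conditional on $W_t$, so applying it inside the conditional expectation before taking the outer expectation is legitimate.
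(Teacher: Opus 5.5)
Your proof follows the paper's argument step for step: you invoke Theorem~\ref{thm:generic_signmuon}, use Lemma~\ref{lem:q_markov} to get the cancellation $|g_{t,ij}|\,q_{ij,t}\le\sigma_{ij}/\sqrt{n_b}$, sum over entries, and then tune $\eta$ and $n_b$. Your suspicion about the constant is correct, and it is an arithmetic slip in the paper rather than a gap in your argument: the paper's proof asserts that at $\eta=\sqrt{2(f(W_0)-f^\ast)/(L_\ast T)}$ the first two terms sum to $\sqrt{L_\ast(f(W_0)-f^\ast)/T}$, when they actually sum to $\sqrt{2L_\ast(f(W_0)-f^\ast)/T}$. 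Because this $\eta$ is already the exact minimizer of $\eta\mapsto\frac{f(W_0)-f^\ast}{\eta T}+\frac{L_\ast}{2}\eta$, no rescaling of the step size can recover the stated constant from \eqref{eq:thm_single_worker_bound}, so drop that alternative; the right fix is the one you propose, namely stating \eqref{eq:thm_single_worker_rate} with the extra factor $\sqrt{2}$.
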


\begin{proof}[Proof sketch]
Apply Theorem~\ref{thm:generic_signmuon} and bound the final ``sign error'' term using
Lemma~\ref{lem:q_markov}: $|g_{t,ij}|\,q_{ij,t}\le \sigma_{ij}/\sqrt{n_b}$.
Summing over $i,j$ gives $\sum_{ij} |g_{t,ij}|\,q_{ij,t}\le \normone{\sigma}/\sqrt{n_b}$.
Then optimize $\eta$ for the $(f(W_0)-f^\ast)/(\eta T)+(L_\ast/2)\eta$ trade-off.
See Appendix~\ref{app:proof_single_worker}.
\end{proof}

\subsubsection{Specialization: distributed majority vote Sign-Muon}

\begin{theorem}[Distributed Sign-Muon with majority vote]
\label{thm:distributed}
Assume Assumptions~\ref{ass:lower_bounded}--\ref{ass:unimodal}.
Consider $M$ workers, each forming an unbiased mini-batch gradient $\widetilde{G}_t^{(m)}$ of size $n_b$
and sending $S_t^{(m)} := \sgn(\widetilde{G}_t^{(m)})$.
Let $\bar{S}_t$ be the majority vote \eqref{eq:majority_vote} and set
\[
D_t := \frac{1}{\sqrt{mn}}\,\bar{S}_t,\qquad W_{t+1}=W_t-\eta D_t.
\]
Then for constant $\eta>0$,
\begin{equation}
\label{eq:thm_distributed_bound}
\frac{1}{T}\sum_{t=0}^{T-1}\E\!\left[\frac{\normone{g_t}}{\sqrt{mn}}\right]
\;\le\;
\frac{f(W_0)-f^\ast}{\eta T}
+\frac{L_\ast}{2}\eta
+\frac{2\normone{\sigma}}{\sqrt{mn\,M\,n_b}}.
\end{equation}
With $\eta := \sqrt{\frac{2(f(W_0)-f^\ast)}{L_\ast T}}$ and $n_b:=T$, this becomes
\begin{equation}
\label{eq:thm_distributed_rate}
\frac{1}{T}\sum_{t=0}^{T-1}\E\!\left[\frac{\normone{g_t}}{\sqrt{mn}}\right]
\;\le\;
\sqrt{\frac{L_\ast (f(W_0)-f^\ast)}{T}}
+\frac{2\normone{\sigma}}{\sqrt{mn\,M\,T}}.
\end{equation}
\end{theorem}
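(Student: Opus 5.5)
The plan is to proceed exactly as in the single-worker case (Theorem~\ref{thm:single_worker}), with Lemma~\ref{lem:q_majority_vote} in place of Lemma~\ref{lem:q_markov}. First I invoke Theorem~\ref{thm:generic_signmuon} with $S_t:=\bar S_t$. This needs two checks. The first is that $\bar S_t\in\{-1,0,+1\}^{m\times n}$, which is immediate from \eqref{eq:majority_vote} with $\sgn(0)=0$. The second is that $\normop{D_t}\le 1$, which follows from $\normop{\bar S_t}\le\normF{\bar S_t}\le\sqrt{mn}$. The generic theorem then gives the first two terms $\frac{f(W_0)-f^\ast}{\eta T}+\frac{L_\ast}{2}\eta$ verbatim. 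What remains is the error term $\frac{2}{T}\sum_t \E\big[\frac{1}{\sqrt{mn}}\sum_{ij}|g_{t,ij}|\,q^{\mathrm{MV}}_{ij,t}\big]$, where $q^{\mathrm{MV}}_{ij,t}$ is the conditional error probability of the aggregated sign.

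Next I bound each summand conditionally on $W_t$. Conditional on $W_t$, the workers' mini-batches are independent, so $S^{(1)}_{t,ij},\dots,S^{(M)}_{t,ij}$ are independent. Each $\widetilde G^{(m)}_{t,ij}$ is unbiased with variance at most $\sigma_{ij}^2/n_b$ and has unimodal symmetric noise by Assumption~\ref{ass:unimodal}. These are exactly the hypotheses of Lemma~\ref{lem:q_majority_vote}, which gives $|g_{t,ij}|\,q^{\mathrm{MV}}_{ij,t}\le \sigma_{ij}/\sqrt{M n_b}$ whenever $g_{t,ij}\neq 0$. When $g_{t,ij}=0$ the summand vanishes, so the bound holds trivially there as well.

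Summing over $(i,j)$ gives $\sum_{ij}|g_{t,ij}|q^{\mathrm{MV}}_{ij,t}\le \normone{\sigma}/\sqrt{Mn_b}$ deterministically. Averaging over $t$ and taking expectations therefore leaves the error term bounded by $2\normone{\sigma}/\sqrt{mn\,M\,n_b}$, which is \eqref{eq:thm_distributed_bound}. For the rate, I substitute $\eta=\sqrt{2(f(W_0)-f^\ast)/(L_\ast T)}$. This makes the first two terms each equal to $\sqrt{L_\ast(f(W_0)-f^\ast)/(2T)}$, so they sum to $\sqrt{L_\ast(f(W_0)-f^\ast)/T}\cdot\sqrt 2/\sqrt2\cdot$ (more precisely, to $\sqrt{2L_\ast(f(W_0)-f^\ast)/T}/\!\sqrt{2}\cdot\sqrt2$, i.e. $\sqrt{2}\sqrt{L_\ast(f(W_0)-f^\ast)/(2T)}\cdot\sqrt2/\sqrt2$). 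I should verify the displayed constant $\sqrt{L_\ast(f(W_0)-f^\ast)/T}$ carefully: each term is $\sqrt{L_\ast\Delta/(2T)}$, and two of them sum to $\sqrt{2L_\ast\Delta/T}$. The stated constant may therefore be loose by a factor $\sqrt2$, and I would adjust either the displayed constant or the choice of $\eta$ so the two agree. Setting $n_b=T$ then gives \eqref{eq:thm_distributed_rate}.

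The main obstacle lies entirely inside Lemma~\ref{lem:q_majority_vote}, which I take as given. The plausible gap is the claim that majority vote yields a \emph{probability} bound scaling as $1/\sqrt{M}$ pointwise. In signSGD the $1/\sqrt M$ gain is established for the product $|g|\,q$ via Gauss' inequality and Cantelli's inequality, and only in the regime where the signal-to-noise ratio is large enough. Since I only need the product bound $|g_{t,ij}|q^{\mathrm{MV}}_{ij,t}\le\sigma_{ij}/\sqrt{Mn_b}$, that weaker form suffices, and I would cite it in this product form. A secondary point is the conditioning: the bound must hold conditionally on $W_t$ before applying the tower property, which is what allows the deterministic bound on the sum to pass through the expectation.
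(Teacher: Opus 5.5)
Your proposal is the paper's proof. You invoke Theorem~\ref{thm:generic_signmuon} with $S_t=\bar S_t$, bound each $|g_{t,ij}|\,q^{\mathrm{MV}}_{ij,t}$ by $\sigma_{ij}/\sqrt{Mn_b}$ via Lemma~\ref{lem:q_majority_vote}, sum over entries, and then tune $\eta$ and set $n_b=T$ as in Theorem~\ref{thm:single_worker}. Your $\sqrt2$ check is also correct, and the paper's claimed equality at that step is wrong: $\min_{\eta>0}\bigl(\frac{f(W_0)-f^\ast}{\eta T}+\frac{L_\ast}{2}\eta\bigr)=\sqrt{2L_\ast(f(W_0)-f^\ast)/T}$, so no choice of $\eta$ can rescue the displayed constant, and the first term of \eqref{eq:thm_distributed_rate} should read $\sqrt{2L_\ast(f(W_0)-f^\ast)/T}$. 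Note also that the ``product form'' you plan to cite is equivalent to the lemma's probability bound when $g_{t,ij}\neq 0$, not a weaker statement.
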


\begin{proof}[Proof sketch]
Same as Theorem~\ref{thm:single_worker}, but use Lemma~\ref{lem:q_majority_vote} in place of Lemma~\ref{lem:q_markov}.
The only difference relative to signSGD is that the smoothness constant in the descent lemma is spectral ($L_\ast$),
because the update direction is normalized to have $\normop{D_t}\le 1$.
See Appendix~\ref{app:proof_distributed}.
\end{proof}

\begin{remark}[What improved relative to signSGD?]
Theorems~\ref{thm:single_worker}--\ref{thm:distributed} mirror signSGD's $\ell_1$-based rates
\cite{bernstein2018signsgd}, but with the curvature term governed by \emph{spectral} smoothness $L_\ast$
(as in Muon's analysis) rather than an $\ell_\infty$-to-$\ell_1$ smoothness constant.
When the Hessian has blockwise-diagonal / low-effective-rank structure, Muon-type results relate $L_\ast$
to a \emph{nuclear norm} of a block-Lipschitz matrix, which can be substantially smaller than the corresponding
Frobenius-geometry constant; this improves constants in the $O(1/\sqrt{T})$ rate \cite{shen2025muon}.
\end{remark}

\subsection{Full Muon and the role of the polar decomposition}

We now formalize the role of the polar decomposition in spectral-geometry descent and compare ``full Muon''
(polar directions) against Sign-Muon (sign-quantized directions).

\subsubsection{Polar decomposition as a spectral steepest-descent direction}

\begin{definition}[Polar factor]
\label{def:polar}
Let $X\in\R^{m\times n}$. A (thin) singular value decomposition is $X=U\Sigma V^\top$,
where $U\in\R^{m\times r}$, $V\in\R^{n\times r}$ have orthonormal columns and $\Sigma\in\R^{r\times r}$
is diagonal with positive entries, with $r=\operatorname{rank}(X)$.
A \emph{polar factor} (also called the orthogonal factor) of $X$ is
\[
\polar(X) := U V^\top.
\]
It satisfies $\normop{\polar(X)}=1$ and is the solution of
\[
\polar(X) \in \argmin_{Q:\ \normop{Q}\le 1} \normF{X-Q}.
\]
\end{definition}

\begin{proposition}[Dual optimality of the polar factor]
\label{prop:polar_dual_opt}
For any $G\in\R^{m\times n}$,
\[
\max_{\normop{D}\le 1} \ip{G}{D} = \normstar{G}.
\]
Moreover, if $G=U\Sigma V^\top$ is an SVD, then $D^\star := U V^\top=\polar(G)$ achieves the maximum, and
\begin{equation}
\label{eq:polar_attains_nuclear}
\ip{G}{\polar(G)} = \normstar{G}.
\end{equation}
\end{proposition}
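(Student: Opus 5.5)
The claim has two parts: an upper bound showing $\ip{G}{D}\le\normstar{G}$ for every $D$ with $\normop{D}\le 1$, and the observation that $D^\star=\polar(G)$ is feasible and attains this value. Together these give both the value of the maximum and \eqref{eq:polar_attains_nuclear}. I would prove the upper bound directly from the SVD of $G$ rather than cite a general duality theorem, so that the argument stays self-contained.

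\textbf{Upper bound.} Write the thin SVD $G=U\Sigma V^\top=\sum_{k=1}^r \sigma_k u_k v_k^\top$, where $u_k$ and $v_k$ are the columns of $U$ and $V$. By linearity of the trace inner product,
\[
\ip{G}{D}=\sum_{k=1}^r \sigma_k\,\tr\!\bigl(v_k u_k^\top D\bigr)=\sum_{k=1}^r \sigma_k\, u_k^\top D v_k .
\]
For unit vectors $u_k,v_k$, Cauchy--Schwarz together with the definition of the operator norm gives $|u_k^\top D v_k|\le \|u_k\|_2\,\|Dv_k\|_2\le \normop{D}\le 1$. Since every $\sigma_k\ge 0$, it follows that $\ip{G}{D}\le\sum_k\sigma_k=\normstar{G}$.

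\textbf{Attainment.} $D^\star=UV^\top$ has singular values all equal to $1$ (or $D^\star=0$ if $G=0$), so $\normop{D^\star}\le 1$ and $D^\star$ is feasible. Using orthonormality of the columns, $U^\top U=V^\top V=I_r$, we compute
\[
\ip{G}{UV^\top}=\tr\!\bigl(V\Sigma U^\top U V^\top\bigr)=\tr\!\bigl(\Sigma V^\top V\bigr)=\tr(\Sigma)=\normstar{G}.
\]
Combined with the upper bound, this shows the maximum equals $\normstar{G}$ and is attained at $\polar(G)$.

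\textbf{Edge cases.} If $G=0$, then $r=0$, the polar factor is the zero matrix, and both sides equal $0$. No step is a genuine obstacle; the only point that needs care is the entrywise bound $u_k^\top D v_k\le\normop{D}$, which is immediate from the definition of the operator norm.
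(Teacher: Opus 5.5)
Your proof is correct and is essentially the paper's argument: your $u_k^\top D v_k$ is exactly the diagonal entry $B_{kk}$ of the paper's $B := U^\top D V$, and the bound $|B_{kk}|\le\normop{B}\le\normop{D}$, followed by the trace computation for $UV^\top$, appears in both. Your separate treatment of $G=0$ is a small improvement, since the paper's definition asserts $\normop{\polar(X)}=1$, which fails in that case.
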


\begin{proof}[Proof sketch]
This is the standard nuclear/spectral duality \eqref{eq:dual_nuclear_spectral}.
For the attainment claim, expand $\ip{U\Sigma V^\top}{D} = \ip{\Sigma}{U^\top D V}$
and note that $\normop{U^\top D V}\le \normop{D}\le 1$, so the trace is maximized by choosing
$U^\top D V = I$ on the support of $\Sigma$, i.e.\ $D=UV^\top$.
Detailed proof is in Appendix~\ref{app:proof_polar_dual_opt}.
\end{proof}

\subsubsection{Deterministic ``Muon'' step and its convergence in nuclear norm}

Consider the \emph{deterministic} update
\begin{equation}
\label{eq:muon_step}
W_{t+1} = W_t - \eta\, Q_t,
\qquad Q_t := \polar(g_t)=\polar(\nabla f(W_t)).
\end{equation}
This is the spectral-geometry analogue of gradient descent: it chooses the steepest descent direction
over the unit spectral-norm ball.

\begin{theorem}[Deterministic Muon step: $O(1/\sqrt{T})$ nuclear-norm stationarity]
\label{thm:muon_deterministic}
Assume Assumptions~\ref{ass:lower_bounded}--\ref{ass:spectral_smooth}.
Let $\{W_t\}$ be generated by \eqref{eq:muon_step} with constant $\eta>0$.
Then
\begin{equation}
\label{eq:muon_det_bound}
\frac{1}{T}\sum_{t=0}^{T-1}\E\!\big[\normstar{g_t}\big]
\;\le\;
\frac{f(W_0)-f^\ast}{\eta T}
+\frac{L_\ast}{2}\eta.
\end{equation}
Choosing $\eta := \sqrt{\frac{2(f(W_0)-f^\ast)}{L_\ast T}}$ gives
\begin{equation}
\label{eq:muon_det_rate}
\frac{1}{T}\sum_{t=0}^{T-1}\E\!\big[\normstar{g_t}\big]
\;\le\;
\sqrt{\frac{L_\ast (f(W_0)-f^\ast)}{T}}.
\end{equation}
\end{theorem}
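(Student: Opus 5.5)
The argument is the classical telescoping analysis of gradient descent, run in the spectral geometry of Lemma~\ref{lem:spectral_descent} and with the nuclear norm playing the role of the squared gradient norm.

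\textbf{Step 1: one-step decrease.} Fix $t$ and apply Lemma~\ref{lem:spectral_descent} at $W=W_t$ with $D=Q_t=\polar(g_t)$. By Definition~\ref{def:polar}, $\normop{Q_t}=1$ whenever $g_t\neq 0$. If $g_t=0$, we use the convention $Q_t=0$, and the bound below holds trivially because $\normstar{g_t}=0$. Hence the unit-operator-norm form \eqref{eq:descent_lemma_unitop} applies and gives
\[
f(W_{t+1}) \le f(W_t) - \eta\ip{g_t}{Q_t} + \frac{L_\ast}{2}\eta^2 .
\]
By Proposition~\ref{prop:polar_dual_opt}, specifically \eqref{eq:polar_attains_nuclear}, we have $\ip{g_t}{Q_t}=\normstar{g_t}$. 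Rearranging yields the key per-step inequality
\[
\eta\normstar{g_t}\le f(W_t)-f(W_{t+1})+\frac{L_\ast}{2}\eta^2 .
\]

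\textbf{Step 2: telescoping.} Sum the per-step inequality over $t=0,\dots,T-1$. The right-hand side telescopes to $f(W_0)-f(W_T)+\frac{L_\ast}{2}\eta^2 T$. Bound $f(W_T)\ge f^\ast$ using Assumption~\ref{ass:lower_bounded}, then divide by $\eta T$. This gives \eqref{eq:muon_det_bound}.

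The iteration is deterministic, so the expectation in the statement is trivial. If $W_0$ is random, take expectations of the summed inequality instead; linearity makes the argument identical.

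\textbf{Step 3: choice of stepsize.} Substitute $\eta=\sqrt{2(f(W_0)-f^\ast)/(L_\ast T)}$. This choice balances $a/(\eta T)$ against $L_\ast\eta/2$, where $a=f(W_0)-f^\ast$. Each of the two terms then equals $\sqrt{L_\ast a/(2T)}$, so their sum is $\sqrt{2L_\ast a/T}$.

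This is where I expect a discrepancy. The sum $\sqrt{2L_\ast a/T}$ is larger than the stated constant $\sqrt{L_\ast a/T}$ by a factor of $\sqrt2$. The same mismatch appears in Theorem~\ref{thm:single_worker}. So I would either verify the arithmetic, concluding that the rate holds up to the constant $\sqrt2$, or look for a tighter one-step inequality.

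\textbf{Main obstacle.} There is no real conceptual difficulty, since Lemma~\ref{lem:spectral_descent} and Proposition~\ref{prop:polar_dual_opt} do all the work. The only delicate points are the degenerate case $g_t=0$, where $\polar$ is undefined and $Q_t=0$ must be used, and the constant in \eqref{eq:muon_det_rate}, which I expect to hold only with $\sqrt{2L_\ast(f(W_0)-f^\ast)/T}$.
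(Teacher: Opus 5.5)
Your Steps 1--2 follow the paper's proof exactly: the descent lemma with $D=Q_t$, the attainment $\ip{g_t}{Q_t}=\normstar{g_t}$ from Proposition~\ref{prop:polar_dual_opt}, telescoping, and $f(W_T)\ge f^\ast$. Your handling of $g_t=0$, where the paper's claim $\normop{\polar(g_t)}=1$ breaks down, covers a case the paper silently skips.

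In Step 3 your arithmetic is correct. Do not look for a sharper one-step inequality, because the constant stated in \eqref{eq:muon_det_rate} is false.

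Write $a=f(W_0)-f^\ast$. The paper's proof only says that optimizing $\eta$ ``yields'' \eqref{eq:muon_det_rate}. The proof of Theorem~\ref{thm:single_worker} goes further and asserts the identity $\frac{a}{\eta T}+\frac{L_\ast}{2}\eta=\sqrt{L_\ast a/T}$. But the minimum value of $\eta\mapsto\frac{a}{\eta T}+\frac{L_\ast}{2}\eta$ is $\sqrt{2L_\ast a/T}$.

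Here is a concrete counterexample. Take $m=n=1$ and $f(w)=\frac{L_\ast}{2}w^2$, so every matrix norm reduces to $|\cdot|$, $f^\ast=0$, and Assumption~\ref{ass:spectral_smooth} holds with constant $L_\ast$. For $T=1$, the left side of \eqref{eq:muon_det_rate} is $|f'(w_0)|=L_\ast|w_0|$, whatever $\eta$ is. The right side is $\sqrt{L_\ast\cdot L_\ast w_0^2/2}=L_\ast|w_0|/\sqrt{2}$, which is strictly smaller.

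So the correct conclusion is $\frac{1}{T}\sum_{t}\E[\normstar{g_t}]\le\sqrt{2L_\ast(f(W_0)-f^\ast)/T}$. This is exactly what your argument proves, and the same example attains it with equality at $T=1$. The same $\sqrt{2}$ slip carries over to \eqref{eq:thm_single_worker_rate} and \eqref{eq:thm_distributed_rate}.
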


\begin{proof}[Proof sketch]
Apply Lemma~\ref{lem:spectral_descent} with $D=Q_t$ and $\normop{Q_t}=1$.
Then use Proposition~\ref{prop:polar_dual_opt} to replace the inner product $\ip{g_t}{Q_t}$ by $\normstar{g_t}$.
Sum over $t$ and telescope using Assumption~\ref{ass:lower_bounded}.
Full details are in Appendix~\ref{app:proof_muon_deterministic}.
\end{proof}

\begin{remark}[Muon guarantees imply $\ell_1$-geometry stationarity]
By \eqref{eq:norm_ineqs}, $\normone{g_t}/\sqrt{mn}\le \normstar{g_t}$, hence Theorem~\ref{thm:muon_deterministic}
implies the same $O(1/\sqrt{T})$ rate for $\frac{1}{T}\sum_t \E[\normone{g_t}/\sqrt{mn}]$ in the deterministic setting.
In that sense, the nuclear-norm stationarity guarantee is \emph{strictly stronger}.
\end{remark}

\subsection{Newton--Schulz approximation and inexact polar factors}

We now formalize how approximation error from Newton--Schulz iterations affects convergence guarantees.

\subsubsection{Inexact polar steps}

\begin{theorem}[Inexact Muon step]
\label{thm:muon_inexact}
Assume Assumptions~\ref{ass:lower_bounded}--\ref{ass:spectral_smooth}.
Consider an update
\[
W_{t+1} = W_t - \eta Q_t,
\]
where $Q_t$ satisfies $\normop{Q_t}\le 1$ and
\begin{equation}
\label{eq:inexact_polar_error}
\normop{Q_t - \polar(g_t)} \le \varepsilon_t.
\end{equation}
Then for any constant $\eta>0$,
\begin{equation}
\label{eq:inexact_muon_bound}
\frac{1}{T}\sum_{t=0}^{T-1}\E\!\big[\normstar{g_t}\big]
\;\le\;
\frac{f(W_0)-f^\ast}{\eta T}
+\frac{L_\ast}{2}\eta
+\frac{1}{T}\sum_{t=0}^{T-1}\E\!\big[\varepsilon_t\,\normstar{g_t}\big].
\end{equation}
In particular, if $\varepsilon_t\le \bar\varepsilon<1$ for all $t$, then
\[
\frac{1}{T}\sum_{t=0}^{T-1}\E\!\big[\normstar{g_t}\big]
\;\le\;
\frac{1}{1-\bar\varepsilon}\left(\frac{f(W_0)-f^\ast}{\eta T}+\frac{L_\ast}{2}\eta\right).
\]
\end{theorem}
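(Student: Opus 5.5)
The plan mirrors Theorem~\ref{thm:muon_deterministic}, with the inexactness handled by a single perturbation estimate on the inner product.

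First apply the spectral descent lemma, Lemma~\ref{lem:spectral_descent}, in the form \eqref{eq:descent_lemma_unitop}, with $W=W_t$ and $D=Q_t$. This is allowed because $\normop{Q_t}\le 1$, and it gives
\[
f(W_{t+1}) \le f(W_t) - \eta\ip{g_t}{Q_t} + \tfrac{L_\ast}{2}\eta^2 .
\]
Next split the inner product as
\[
\ip{g_t}{Q_t} = \ip{g_t}{\polar(g_t)} + \ip{g_t}{Q_t-\polar(g_t)} .
\]
By Proposition~\ref{prop:polar_dual_opt}, the first term equals $\normstar{g_t}$. The second term is bounded below using the nuclear/spectral duality $|\ip{A}{B}|\le \normstar{A}\normop{B}$ together with \eqref{eq:inexact_polar_error}, giving
\[
\ip{g_t}{Q_t-\polar(g_t)} \ge -\varepsilon_t\normstar{g_t}.
\]
Together these yield the per-step inequality
\[
\eta\normstar{g_t} \le f(W_t)-f(W_{t+1}) + \tfrac{L_\ast}{2}\eta^2 + \eta\,\varepsilon_t\normstar{g_t}.
\]

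Then take total expectation; this is only needed if $Q_t$ or $\varepsilon_t$ is random, and otherwise it is trivial. Sum over $t=0,\dots,T-1$, telescope the function values, and use Assumption~\ref{ass:lower_bounded} to bound $f(W_0)-\E f(W_T)\le f(W_0)-f^\ast$. Dividing by $\eta T$ gives \eqref{eq:inexact_muon_bound}.

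For the ``in particular'' claim, replace $\varepsilon_t$ by $\bar\varepsilon$ in the last term, which is valid because $\normstar{g_t}\ge 0$. Moving $\bar\varepsilon\cdot\frac1T\sum_t\E\normstar{g_t}$ to the left-hand side and dividing by $1-\bar\varepsilon>0$ then gives the second bound.

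The only subtle point is that absorbing this term requires the averaged quantity to be finite. That holds since $T$ is finite and the per-step inequality already bounds each $\E\normstar{g_t}$, assuming $f(W_0)$ is finite.

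The main obstacle is the well-definedness of $\polar(g_t)$ when $g_t$ is rank-deficient or zero. When $g_t=0$, the identity $\ip{g_t}{\polar(g_t)}=\normstar{g_t}$ holds trivially, since both sides are $0$. For rank-deficient $g_t$, the thin-SVD definition still gives this equality, and the error bound only uses $\normop{\cdot}$ of the difference, so no uniqueness of the polar factor is needed. I expect the remaining steps to be routine.
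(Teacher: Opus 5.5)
Your proposal is correct and follows the paper's proof step for step: apply the descent lemma with $D=Q_t$, split $\langle g_t, Q_t\rangle$ into the polar part (equal to $\|g_t\|_*$ by Proposition~\ref{prop:polar_dual_opt}) plus an error term bounded by $\varepsilon_t\|g_t\|_*$ via nuclear/spectral duality, then telescope using the lower bound $f^\ast$. The only cosmetic difference is in the ``in particular'' part: the paper uses $(1-\varepsilon_t)\ge 1-\bar\varepsilon$ directly in the summed inequality instead of absorbing the $\bar\varepsilon$ term from the averaged bound, which avoids the finiteness check that you (correctly) supply.
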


\begin{proof}[Proof sketch]
Use Lemma~\ref{lem:spectral_descent} and decompose
$\ip{g_t}{Q_t} = \ip{g_t}{\polar(g_t)} + \ip{g_t}{Q_t-\polar(g_t)}$.
The first term equals $\normstar{g_t}$ by \eqref{eq:polar_attains_nuclear}.
The second term is bounded in absolute value by
$\normstar{g_t}\normop{Q_t-\polar(g_t)}\le \varepsilon_t\normstar{g_t}$
via duality. Telescope as usual.
Full proof is in Appendix~\ref{app:proof_muon_inexact}.
\end{proof}

\subsubsection{Newton--Schulz iteration for the polar factor}

We analyze the classical Newton--Schulz iteration (as used in Muon and in the main text).

\begin{algorithm}[t]
\caption{Newton--Schulz polar iteration (idealized)}
\label{alg:newton_schulz}
\begin{algorithmic}[1]
\STATE \textbf{Input:} Matrix $X\in\R^{m\times n}$ and iteration count $K\ge 0$.
\STATE Choose scaling $\alpha>0$ and set $Y_0 \gets X/\alpha$.
\FOR{$k=0,1,\dots,K-1$}
    \STATE $Y_{k+1} \gets \frac{1}{2} Y_k\big(3I - Y_k^\top Y_k\big)$.
\ENDFOR
\STATE \textbf{Return} $Y_K$.
\end{algorithmic}
\end{algorithm}

\begin{theorem}[Quadratic convergence of Newton--Schulz to the polar factor]
\label{thm:newton_schulz}
Let $X\in\R^{m\times n}$ and let $Q:=\polar(X)$.
Assume $\alpha>0$ is chosen so that
\begin{equation}
\label{eq:ns_condition}
\normop{I - (X/\alpha)^\top (X/\alpha)} \le \rho
\qquad\text{for some }\rho\in(0,1).
\end{equation}
Let $Y_K$ be the output of Algorithm~\ref{alg:newton_schulz}.
Then $Y_K$ satisfies $\normop{Y_K}\le 1$ and
\begin{equation}
\label{eq:ns_error}
\normop{Y_K - Q} \le \rho^{2^K}.
\end{equation}
\end{theorem}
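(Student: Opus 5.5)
The plan is to reduce the matrix iteration to a scalar iteration on singular values, and then prove quadratic convergence of that scalar map.

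\textbf{Step 1: reduce to singular values.} First I would observe that \eqref{eq:ns_condition} forces $X$ to have full column rank $n$ (so $m\ge n$). Otherwise $X^\top X$ has a zero eigenvalue and $\normop{I-(X/\alpha)^\top(X/\alpha)}\ge 1>\rho$. Write the thin SVD $X=U\Sigma V^\top$ with $V$ square orthogonal, so $Q=\polar(X)=UV^\top$. Set $Y_0=U\Sigma_0V^\top$ with $\Sigma_0=\Sigma/\alpha$. Then $Y_0^\top Y_0=V\Sigma_0^2V^\top$, and one Newton--Schulz step gives
\[
Y_1=\tfrac12 U\Sigma_0V^\top\bigl(3I-V\Sigma_0^2V^\top\bigr)=U\,p(\Sigma_0)\,V^\top,\qquad p(s)=\tfrac12 s(3-s^2).
\]
By induction $Y_k=U\Sigma_kV^\top$ with $\Sigma_{k+1}=p(\Sigma_k)$ acting entrywise on the diagonal. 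Hence
\[
\normop{Y_K-Q}=\max_i|s_{K,i}-1| \quad\text{and}\quad \normop{Y_K}=\max_i|s_{K,i}|,
\]
where $s_{k,i}$ denotes the $i$th diagonal entry of $\Sigma_k$. The hypothesis reads $|1-s_{0,i}^2|\le\rho$ for every $i$.

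\textbf{Step 2: error recursion for the scalar map.} Define $e_k:=1-s_k^2$ and put $x=s_k^2$. The key algebraic identity to verify is
\[
1-\tfrac14 x(3-x)^2=\tfrac14(1-x)^2(4-x),
\qquad\text{that is,}\qquad
e_{k+1}=e_k^2\,\frac{3+e_k}{4}.
\]
If $|e_k|\le\rho<1$, then $0<(3+e_k)/4\le 1$. This gives $0\le e_{k+1}\le e_k^2$, and induction yields $|e_K|\le\rho^{2^K}$.

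\textbf{Step 3: positivity and the conclusion.} I also need $s_k>0$ throughout. Initially $s_0^2\in[1-\rho,1+\rho]\subset(0,2)$, and $p$ maps $(0,\sqrt3)$ into $(0,\infty)$. For $k\ge1$ we have $e_k\ge0$, so $s_k\in(0,1]$, and $p$ maps $(0,1]$ into $(0,1]$. Consequently
\[
|s_K-1|=\frac{|e_K|}{1+s_K}\le|e_K|\le\rho^{2^K},
\]
which is \eqref{eq:ns_error}. Moreover $s_{K,i}\le1$ for $K\ge1$, giving $\normop{Y_K}\le1$.

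\textbf{Main obstacle.} The step I expect to need the most care is the claim $\normop{Y_K}\le 1$ in the edge case $K=0$. There $\normop{Y_0}$ can be as large as $\sqrt{1+\rho}>1$ when $\alpha$ is chosen slightly below $\normop{X}$. I would handle this by one of two fixes. The first is to require $K\ge1$. The second is to note that choosing $\alpha\ge\normop{X}$, which is what the spectral scaling in Algorithm~\ref{alg:polar-ns} aims for, gives $s_0\le1$ and hence the bound for all $K$. The rest is the routine identity check in Step 2 together with the rank remark in Step 1.
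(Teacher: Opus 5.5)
Your argument is correct and complete, but it is organized differently from the paper's. The paper stays at the matrix level. It tracks the Gram error $E_k=I-Y_k^\top Y_k$, writes $Y_{k+1}=Y_k(I+\tfrac12E_k)$, and asserts $E_{k+1}=E_k^2$. It then defers the passage from $\normop{E_K}$ to $\normop{Y_K-Q}$ to a ``standard perturbation argument'' cited from Higham, and claims $\normop{Y_k}\le1$ for all $k$ as a consequence of $\normop{E_k}\to0$.

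Expanding $(I+\tfrac12E_k)^2(I-E_k)$ actually gives $E_{k+1}=\tfrac14E_k^2(3I+E_k)$, which is exactly the matrix form of your identity $e_{k+1}=e_k^2(3+e_k)/4$. So the paper's $E_{k+1}=E_k^2$ is an algebra slip. The inequality $\normop{E_{k+1}}\le\normop{E_k}^2$ it really uses still holds, because $\normop{(3I+E_k)/4}\le1$.

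Diagonalizing via the SVD gives you three things the matrix argument leaves implicit:
\begin{enumerate}
\item the sign information $e_k\ge0$ for $k\ge1$ (i.e.\ $E_k\succeq0$), which is what actually yields $\normop{Y_K}\le1$, since smallness of $\normop{E_k}$ alone does not;
\item positivity of the iterated singular values;
\item the explicit one-line transfer $|s-1|=|1-s^2|/(1+s)\le|1-s^2|$ in place of the citation.
\end{enumerate}

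Your analysis also shows the norm claim genuinely fails at $K=0$ (take $X=cI$ with $c^2/\alpha^2=1+\rho$), so the paper's ``for all $k$'' is wrong there, as you anticipated. Your full-column-rank remark, which the paper leaves unstated, is what makes $V$ square (so $VV^\top=I$ in your Step~1) and $X(X^\top X)^{-1/2}$ meaningful. The matrix formulation is basis-free and shorter, but making it rigorous needs the same spectral calculus on $X^\top X$, so it gains no generality.

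One caution on your second fix. Algorithm~\ref{alg:power-iter} returns $\|Xv\|_2\le\normop{X}$, so the practical spectral scaling puts $\alpha$ at or slightly below $\normop{X}$. That is precisely the regime where $\normop{Y_0}>1$, so restricting to $K\ge1$ is the more robust repair.
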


\begin{proof}[Proof sketch]
The Newton--Schulz map is a matrix analogue of the scalar iteration for $1/\sqrt{z}$.
Under the scaling condition \eqref{eq:ns_condition}, one can show that the error matrix
$E_k := I - Y_k^\top Y_k$ satisfies the recursion $E_{k+1} = E_k^2$,
hence $\normop{E_{k+1}}\le \normop{E_k}^2$, giving $\normop{E_k}\le \rho^{2^k}$.
A standard argument then transfers this to $\normop{Y_k-Q}$.
A complete proof is in Appendix~\ref{app:proof_newton_schulz}; see also Higham's monograph on matrix functions
\cite{higham2008functions} and discussions in Muon \cite{shen2025muon}.
\end{proof}

\subsection{Comparing full Muon vs Sign-Muon}

We now summarize what the theory implies about the advantages and limitations of Sign-Muon relative to (i) signSGD
and (ii) full Muon.

\subsubsection{Stationarity measures and what improves}

\paragraph{Full Muon.}
The deterministic polar step \eqref{eq:muon_step} drives the \emph{nuclear norm} of the gradient to $0$ at rate
$O(1/\sqrt{T})$ (Theorem~\ref{thm:muon_deterministic}).
Because $\normone{g}/\sqrt{mn}\le \normstar{g}$, this is a strictly stronger stationarity guarantee than the
$\ell_1$-proxy used for sign methods.

\paragraph{Sign-Muon.}
Sign-Muon in Theorems~\ref{thm:single_worker}--\ref{thm:distributed} drives the $\ell_1$-proxy
$\normone{g_t}/\sqrt{mn}$ to $0$ at $O(1/\sqrt{T})$ but with an additional \emph{noise term}
proportional to $\normone{\sigma}/\sqrt{mn\,n_b}$ (and improved to $\normone{\sigma}/\sqrt{mn\,M\,n_b}$
under unimodal symmetric noise and majority vote).

\paragraph{What improves relative to signSGD.}
The improvement is in the curvature term: signSGD's analysis uses an $\ell_\infty\!\to\!\ell_1$ smoothness constant
(typically denoted $\|\widetilde{L}\|_1$ in \cite{bernstein2018signsgd}).
Sign-Muon normalizes the update to unit \emph{spectral norm} and thus uses the \emph{spectral smoothness} constant $L_\ast$.
When the Hessian exhibits low-effective-rank structure across matrix blocks, Muon-type results relate $L_\ast$
to a nuclear norm of blockwise Lipschitz constants, which may be substantially smaller than Frobenius/coordinatewise constants
\cite{shen2025muon}.

\subsubsection{Role of polar decomposition}

The polar decomposition enters the theory in two complementary ways:
\begin{enumerate}
\item \textbf{Optimality of the polar direction in spectral geometry.}
Proposition~\ref{prop:polar_dual_opt} shows $\polar(g_t)$ maximizes $\ip{g_t}{D}$ among all directions with $\normop{D}\le 1$.
This yields the clean descent $\ip{g_t}{\polar(g_t)}=\normstar{g_t}$ and the nuclear-norm stationarity rate.
\item \textbf{Practical orthogonalization with 1-bit communication.}
After majority vote, the server broadcasts $\bar{S}_t\in\{-1,+1\}^{m\times n}$ (1-bit per entry).
Each worker may compute $Q_t=\polar(\bar{S}_t)$ locally (e.g.\ via Newton--Schulz) and step with $Q_t$.
This retains 1-bit communication and imports Muon-like orthogonalized steps.
Theorems~\ref{thm:single_worker}--\ref{thm:distributed} already guarantee descent for $D_t=\bar{S}_t/\sqrt{mn}$;
using $Q_t$ can improve alignment in practice, while preserving the spectral-norm control $\normop{Q_t}=1$.
\end{enumerate}

\subsubsection{A conservative theoretical comparison}

The cleanest guarantee for Muon is the deterministic nuclear-norm result (Theorem~\ref{thm:muon_deterministic}).
The cleanest guarantee for Sign-Muon is the stochastic $\ell_1$-proxy result (Theorem~\ref{thm:single_worker}
and Theorem~\ref{thm:distributed}).

A direct \emph{ordering} between these bounds is not generally possible, because they control different stationarity measures
and because Sign-Muon uses sign compression (hence a noise floor term appears).
However, two implications hold unconditionally:
\begin{itemize}
\item If full Muon achieves $\frac{1}{T}\sum_t \E[\normstar{g_t}]\to 0$, then automatically
$\frac{1}{T}\sum_t \E[\normone{g_t}/\sqrt{mn}]\to 0$ by \eqref{eq:norm_ineqs}.
\item Sign-Muon inherits the $1/\sqrt{M}$ \emph{distributed speedup} from signSGD under unimodal symmetric noise
(Theorem~\ref{thm:distributed}), while maintaining spectral geometry in the descent lemma.
\end{itemize}

\subsection{Detailed proofs}

\subsubsection{Proof of Lemma~\ref{lem:spectral_descent}}
\label{app:proof_spectral_descent}

\begin{proof}
Fix $W\in\R^{m\times n}$ and a direction $D\in\R^{m\times n}$.
Define the scalar function $\phi:[0,1]\to\R$ by
\[
\phi(\tau) := f(W - \tau \eta D).
\]
By the chain rule,
\[
\phi'(\tau) = \ip{\nabla f(W-\tau\eta D)}{-\eta D}
= -\eta\,\ip{\nabla f(W-\tau\eta D)}{D}.
\]
Integrating from $0$ to $1$ yields
\begin{align*}
f(W-\eta D) - f(W)
&= \phi(1)-\phi(0)
= \int_0^1 \phi'(\tau)\,\dd\tau\\
&= -\eta\int_0^1 \ip{\nabla f(W-\tau\eta D)}{D}\,\dd\tau.
\end{align*}
Add and subtract $\nabla f(W)$ inside the inner product:
\begin{align*}
f(W-\eta D) - f(W)
&= -\eta\,\ip{\nabla f(W)}{D}
-\eta\int_0^1 \ip{\nabla f(W-\tau\eta D)-\nabla f(W)}{D}\,\dd\tau.
\end{align*}
We bound the remainder term using nuclear/spectral duality:
\[
\ip{A}{D} \le \normstar{A}\,\normop{D}.
\]
Therefore,
\[
\left|\ip{\nabla f(W-\tau\eta D)-\nabla f(W)}{D}\right|
\le \normstar{\nabla f(W-\tau\eta D)-\nabla f(W)}\,\normop{D}.
\]
By Assumption~\ref{ass:spectral_smooth} applied to $W' = W-\tau\eta D$,
\[
\normstar{\nabla f(W-\tau\eta D)-\nabla f(W)}
\le L_\ast \normop{(W-\tau\eta D)-W}
= L_\ast \tau\eta \normop{D}.
\]
Combining the last two displays gives
\[
\left|\ip{\nabla f(W-\tau\eta D)-\nabla f(W)}{D}\right|
\le L_\ast \tau\eta \normop{D}^2.
\]
Hence
\begin{align*}
-\eta\int_0^1 \ip{\nabla f(W-\tau\eta D)-\nabla f(W)}{D}\,\dd\tau
&\le \eta \int_0^1 L_\ast \tau\eta \normop{D}^2\,\dd\tau
= \frac{L_\ast}{2}\eta^2 \normop{D}^2.
\end{align*}
Therefore,
\[
f(W-\eta D) - f(W)
\le -\eta\,\ip{\nabla f(W)}{D} + \frac{L_\ast}{2}\eta^2 \normop{D}^2,
\]
which is exactly \eqref{eq:descent_lemma_general}. If $\normop{D}\le 1$, we obtain \eqref{eq:descent_lemma_unitop}.
\end{proof}

\subsubsection{Proof of Lemma~\ref{lem:inner_product_identity}}
\label{app:proof_inner_product_identity}

\begin{proof}
Recall $D_t = S_t/\sqrt{mn}$ and $\ip{g_t}{D_t} = \sum_{ij} g_{t,ij} D_{t,ij}$.
Thus
\[
\ip{g_t}{D_t} = \frac{1}{\sqrt{mn}}\sum_{i=1}^m\sum_{j=1}^n g_{t,ij}\,S_{t,ij}.
\]
Fix $t$ and condition on $W_t$, so $g_t$ is fixed.
Taking conditional expectation gives
\[
\E\big[\ip{g_t}{D_t}\mid W_t\big]
= \frac{1}{\sqrt{mn}}\sum_{i=1}^m\sum_{j=1}^n g_{t,ij}\,\E[S_{t,ij}\mid W_t].
\]
Now fix an entry $(i,j)$ and assume $g_{t,ij}\neq 0$.
Let $s_{ij}:=\sgn(g_{t,ij})\in\{-1,+1\}$.
By definition of $q_{ij,t}$ in \eqref{eq:q_def},
\[
\Prob(S_{t,ij}=s_{ij}\mid W_t) = 1-q_{ij,t},
\qquad
\Prob(S_{t,ij}=-s_{ij}\mid W_t) = q_{ij,t}.
\]
Hence
\[
\E[S_{t,ij}\mid W_t]
=
(1-q_{ij,t})\,s_{ij} + q_{ij,t}\,(-s_{ij})
=
s_{ij}(1-2q_{ij,t}).
\]
Therefore
\[
g_{t,ij}\,\E[S_{t,ij}\mid W_t]
=
g_{t,ij}\, \sgn(g_{t,ij})(1-2q_{ij,t})
=
|g_{t,ij}|(1-2q_{ij,t}).
\]
Substituting into the conditional expectation expansion yields
\[
\E\big[\ip{g_t}{D_t}\mid W_t\big]
=
\frac{1}{\sqrt{mn}}\sum_{i=1}^m\sum_{j=1}^n |g_{t,ij}|(1-2q_{ij,t}),
\]
which is \eqref{eq:inner_product_identity}.
\end{proof}

\subsubsection{Proof of Lemma~\ref{lem:q_markov}}
\label{app:proof_q_markov}

\begin{proof}
Fix $(i,j)$ and condition on $W_t$.
Assume $g_{t,ij}\neq 0$.
A sign mismatch occurs only if $\widetilde{G}_{t,ij}$ has opposite sign to $g_{t,ij}$, which implies
\[
|\widetilde{G}_{t,ij}-g_{t,ij}| \ge |g_{t,ij}|.
\]
Therefore,
\begin{align*}
q_{ij,t}
&= \Prob\!\big(\sgn(\widetilde{G}_{t,ij})\neq \sgn(g_{t,ij}) \,\big|\, W_t\big)\\
&\le \Prob\!\big(|\widetilde{G}_{t,ij}-g_{t,ij}|\ge |g_{t,ij}| \,\big|\, W_t\big).
\end{align*}
Apply Markov's inequality to the nonnegative random variable $|\widetilde{G}_{t,ij}-g_{t,ij}|$:
\[
\Prob\!\big(|\widetilde{G}_{t,ij}-g_{t,ij}|\ge |g_{t,ij}| \,\big|\, W_t\big)
\le \frac{\E\big[|\widetilde{G}_{t,ij}-g_{t,ij}|\mid W_t\big]}{|g_{t,ij}|}.
\]
Next, apply Cauchy--Schwarz (or Jensen) to bound the first absolute moment by the square root of the second moment:
\[
\E\big[|\widetilde{G}_{t,ij}-g_{t,ij}|\mid W_t\big]
\le \sqrt{\E\big[(\widetilde{G}_{t,ij}-g_{t,ij})^2\mid W_t\big]}
= \sqrt{\Var(\widetilde{G}_{t,ij}\mid W_t)}.
\]
By Assumption~\ref{ass:stoch_grad}, $\Var(\widetilde{G}_{t,ij}\mid W_t)\le \sigma_{ij}^2/n_b$,
so
\[
\E\big[|\widetilde{G}_{t,ij}-g_{t,ij}|\mid W_t\big]
\le \frac{\sigma_{ij}}{\sqrt{n_b}}.
\]
Combining the displayed inequalities yields
\[
q_{ij,t} \le \frac{1}{|g_{t,ij}|}\cdot \frac{\sigma_{ij}}{\sqrt{n_b}},
\]
which is \eqref{eq:q_markov_bound}.
\end{proof}

\subsubsection{Proof of Lemma~\ref{lem:q_majority_vote}}
\label{app:proof_q_majority_vote}

\begin{proof}
We give a self-contained proof following the structure of the signSGD analysis \cite{bernstein2018signsgd},
specialized to a fixed entry $(i,j)$ and then applied entrywise.

\paragraph{Step 1: reduce to a scalar problem.}
Fix an iteration $t$ and an entry $(i,j)$, and condition on $W_t$.
Write
\[
g := g_{t,ij}\in\R,\qquad \sigma := \sigma_{ij}\ge 0.
\]
For worker $m$, let
\[
\widetilde{g}^{(m)} := \widetilde{G}^{(m)}_{t,ij},
\qquad
S^{(m)} := \sgn(\widetilde{g}^{(m)}).
\]
By Assumption~\ref{ass:unimodal}, the noise $\widetilde{g}^{(m)}-g$ is unimodal and symmetric about $0$.
The majority vote bit is
\[
\bar{S} := \sgn\!\left(\sum_{m=1}^M S^{(m)}\right).
\]
We need to bound
\[
q^{\mathrm{MV}} := \Prob\big(\bar{S}\neq \sgn(g)\mid W_t\big).
\]

\paragraph{Step 2: bound the single-worker failure probability under unimodal symmetry.}
Define the signal-to-noise ratio
\[
S := \frac{|g|}{\sigma/\sqrt{n_b}} = \frac{\sqrt{n_b}\,|g|}{\sigma}
\qquad\text{(interpreting $S=+\infty$ if $\sigma=0$).}
\]
Lemma D.1 in signSGD \cite{bernstein2018signsgd} (derived from Gauss' inequality) states that for unimodal symmetric noise,
the single-worker failure probability
\[
q := \Prob\big(\sgn(\widetilde{g}^{(m)})\neq \sgn(g)\mid W_t\big)
\]
satisfies the explicit upper bound
\begin{equation}
\label{eq:gauss_bound_from_signsgd}
q \le
\begin{cases}
\displaystyle \frac{2}{9}\cdot \frac{1}{S^2}, & S>\sqrt{\frac{2}{3}},\\[0.8em]
\displaystyle \frac{1}{2}-\frac{S}{2\sqrt{3}}, & \text{otherwise}.
\end{cases}
\end{equation}
In particular, $q<\frac{1}{2}$ for all $S>0$.

\paragraph{Step 3: majority vote as repetition-code decoding.}
Conditional on $W_t$, the bits $S^{(1)},\dots,S^{(M)}$ are independent and identically distributed with failure probability $q$.
Let $B$ denote the number of workers whose sign is wrong:
\[
B := \#\{m\in\{1,\dots,M\}: S^{(m)}\neq \sgn(g)\}.
\]
Then $B\sim \mathrm{Binomial}(M,q)$.
For odd $M$, the event ``majority vote is wrong'' is exactly $\{B\ge (M+1)/2\}$.
For even $M$, the same is true up to a tie event; our tie convention is $\sgn(0)=0$, which can only make the error
probability smaller than a worst-case tie-breaking. Thus, in all cases,
\begin{equation}
\label{eq:qmv_binom_tail}
q^{\mathrm{MV}} \le \Prob\!\left(B\ge \frac{M}{2}\right).
\end{equation}

\paragraph{Step 4: binomial tail bound (Hoeffding/Chernoff).}
For $q<1/2$, a standard Hoeffding bound yields
\begin{equation}
\label{eq:hoeffding}
\Prob\!\left(B\ge \frac{M}{2}\right)
\le \exp\!\big(-2M(1/2-q)^2\big).
\end{equation}
Combining \eqref{eq:qmv_binom_tail} and \eqref{eq:hoeffding},
\[
q^{\mathrm{MV}} \le \exp\!\big(-2M(1/2-q)^2\big).
\]

\paragraph{Step 5: convert to the desired $1/\sqrt{M}$ form.}
We now show that
\[
q^{\mathrm{MV}}
\le \frac{1}{\sqrt{M}}\cdot \frac{1}{S}
\qquad\text{for all }M\ge 1\text{ and all }S>0,
\]
which is equivalent to \eqref{eq:q_majority_vote_bound} after substituting $S=\sqrt{n_b}|g|/\sigma$.

We treat two regimes.

\medskip
\noindent\textbf{Regime A: $S\le 1/\sqrt{M}$.}
Then $\frac{1}{\sqrt{M}S}\ge 1$, so the claim holds trivially because $q^{\mathrm{MV}}\le 1$.

\medskip
\noindent\textbf{Regime B: $S> 1/\sqrt{M}$.}
We use the unimodal-symmetry bound \eqref{eq:gauss_bound_from_signsgd} to lower bound $(1/2-q)$.
When $S\le \sqrt{2/3}$, the second case in \eqref{eq:gauss_bound_from_signsgd} implies
\[
\frac{1}{2}-q \ge \frac{S}{2\sqrt{3}}.
\]
When $S>\sqrt{2/3}$, the first case implies $q\le 2/(9S^2)$, hence
\[
\frac{1}{2}-q \ge \frac{1}{2}-\frac{2}{9S^2} \ge \frac{1}{2}-\frac{2}{9}\cdot\frac{3}{2} = \frac{1}{6},
\]
where we used $S^2>\frac{2}{3}$ to obtain $\frac{1}{S^2}<\frac{3}{2}$.
Thus, in all cases,
\begin{equation}
\label{eq:halfminusq_lower}
\frac{1}{2}-q \ge \min\left\{\frac{S}{2\sqrt{3}},\,\frac{1}{6}\right\}.
\end{equation}
Plugging \eqref{eq:halfminusq_lower} into \eqref{eq:hoeffding} yields
\[
q^{\mathrm{MV}}
\le \exp\!\left(-2M\cdot \min\left\{\frac{S^2}{12},\,\frac{1}{36}\right\}\right)
=
\max\left\{\exp\!\left(-\frac{MS^2}{6}\right),\ \exp\!\left(-\frac{M}{18}\right)\right\}.
\]
Since $S>1/\sqrt{M}$ in this regime, the first term satisfies
\[
\exp\!\left(-\frac{MS^2}{6}\right)\le \exp\!\left(-\frac{1}{6}\right) < 1.
\]
Moreover, for any $a>0$ and any $x>0$,
\[
e^{-a x^2} \le \frac{1}{2\sqrt{a}\,x},
\]
which follows by maximizing $x e^{-a x^2}$ over $x>0$.
Applying this with $a=M/6$ and $x=S$ gives
\[
\exp\!\left(-\frac{MS^2}{6}\right)\le \frac{\sqrt{6}}{2}\cdot \frac{1}{\sqrt{M}\,S}.
\]
Finally, $\exp(-M/18)\le 1\le \frac{1}{\sqrt{M}S}$ in Regime B whenever $S\ge 1/\sqrt{M}$.
Combining the two bounds yields
\[
q^{\mathrm{MV}} \le \frac{\sqrt{6}}{2}\cdot \frac{1}{\sqrt{M}\,S}.
\]
Thus \eqref{eq:q_majority_vote_bound} holds with a universal constant factor $\sqrt{6}/2<2$.
Absorbing this constant into the theorem statement yields the simplified bound \eqref{eq:q_majority_vote_bound}.
\end{proof}

\subsubsection{Proof of Theorem~\ref{thm:generic_signmuon}}
\label{app:proof_generic_signmuon}

\begin{proof}
Fix $t$ and apply Lemma~\ref{lem:spectral_descent} with $W=W_t$ and $D=D_t$:
\[
f(W_{t+1})
=
f(W_t-\eta D_t)
\le
f(W_t) - \eta \ip{\nabla f(W_t)}{D_t} + \frac{L_\ast}{2}\eta^2\normop{D_t}^2.
\]
Since $D_t = S_t/\sqrt{mn}$ and $S_t$ has entries in $\{-1,0,+1\}$, we have $\normop{S_t}\le \sqrt{mn}$,
hence $\normop{D_t}\le 1$ and therefore
\begin{equation}
\label{eq:generic_proof_step1}
f(W_{t+1})
\le
f(W_t) - \eta \ip{g_t}{D_t} + \frac{L_\ast}{2}\eta^2.
\end{equation}
Now take conditional expectation given $W_t$:
\[
\E[f(W_{t+1})\mid W_t]
\le
f(W_t) - \eta \E[\ip{g_t}{D_t}\mid W_t] + \frac{L_\ast}{2}\eta^2.
\]
Apply Lemma~\ref{lem:inner_product_identity} to the conditional inner product term:
\[
\E[\ip{g_t}{D_t}\mid W_t]
=
\frac{1}{\sqrt{mn}}\sum_{i,j}|g_{t,ij}|(1-2q_{ij,t}).
\]
Substitute:
\[
\E[f(W_{t+1})\mid W_t]
\le
f(W_t)
-\eta\cdot \frac{1}{\sqrt{mn}}\sum_{i,j}|g_{t,ij}|(1-2q_{ij,t})
+\frac{L_\ast}{2}\eta^2.
\]
Rearrange terms to isolate $\normone{g_t}/\sqrt{mn}$:
\[
\eta\cdot \frac{\normone{g_t}}{\sqrt{mn}}
\le
f(W_t) - \E[f(W_{t+1})\mid W_t]
+\frac{L_\ast}{2}\eta^2
+2\eta\cdot \frac{1}{\sqrt{mn}}\sum_{i,j}|g_{t,ij}|q_{ij,t}.
\]
Now take full expectation over all randomness (including the trajectory):
\[
\eta\,\E\!\left[\frac{\normone{g_t}}{\sqrt{mn}}\right]
\le
\E[f(W_t)] - \E[f(W_{t+1})]
+\frac{L_\ast}{2}\eta^2
+2\eta\,\E\!\left[\frac{1}{\sqrt{mn}}\sum_{i,j}|g_{t,ij}|q_{ij,t}\right].
\]
Sum the inequality from $t=0$ to $T-1$:
\begin{align*}
\eta\sum_{t=0}^{T-1}\E\!\left[\frac{\normone{g_t}}{\sqrt{mn}}\right]
&\le
\sum_{t=0}^{T-1}\big(\E[f(W_t)]-\E[f(W_{t+1})]\big)
+ T\cdot \frac{L_\ast}{2}\eta^2 \\
&\qquad
+2\eta\sum_{t=0}^{T-1}\E\!\left[\frac{1}{\sqrt{mn}}\sum_{i,j}|g_{t,ij}|q_{ij,t}\right].
\end{align*}
The sum telescopes:
\[
\sum_{t=0}^{T-1}\big(\E[f(W_t)]-\E[f(W_{t+1})]\big)
=
\E[f(W_0)]-\E[f(W_T)].
\]
Using Assumption~\ref{ass:lower_bounded}, $\E[f(W_T)]\ge f^\ast$, so
\[
\E[f(W_0)]-\E[f(W_T)] \le f(W_0)-f^\ast.
\]
Therefore,
\[
\eta\sum_{t=0}^{T-1}\E\!\left[\frac{\normone{g_t}}{\sqrt{mn}}\right]
\le
f(W_0)-f^\ast + T\cdot \frac{L_\ast}{2}\eta^2
+2\eta\sum_{t=0}^{T-1}\E\!\left[\frac{1}{\sqrt{mn}}\sum_{i,j}|g_{t,ij}|q_{ij,t}\right].
\]
Divide by $\eta T$ to obtain \eqref{eq:generic_bound}.
\end{proof}

\subsubsection{Proof of Theorem~\ref{thm:single_worker}}
\label{app:proof_single_worker}

\begin{proof}
Theorem~\ref{thm:generic_signmuon} yields
\[
\frac{1}{T}\sum_{t=0}^{T-1}\E\!\left[\frac{\normone{g_t}}{\sqrt{mn}}\right]
\le
\frac{f(W_0)-f^\ast}{\eta T}
+\frac{L_\ast}{2}\eta
+\frac{2}{T}\sum_{t=0}^{T-1}\E\!\left[\frac{1}{\sqrt{mn}}\sum_{i,j}|g_{t,ij}|\,q_{ij,t}\right].
\]
In the single-worker gradient-sign case, Lemma~\ref{lem:q_markov} gives, for each $(i,j)$ with $g_{t,ij}\neq 0$,
\[
q_{ij,t}\le \frac{\sigma_{ij}}{\sqrt{n_b}\,|g_{t,ij}|}.
\]
Therefore
\[
|g_{t,ij}|\,q_{ij,t}\le \frac{\sigma_{ij}}{\sqrt{n_b}}.
\]
Summing over $(i,j)$ yields
\[
\sum_{i,j}|g_{t,ij}|\,q_{ij,t}\le \frac{1}{\sqrt{n_b}}\sum_{i,j}\sigma_{ij} = \frac{\normone{\sigma}}{\sqrt{n_b}}.
\]
Thus, for every $t$,
\[
\frac{1}{\sqrt{mn}}\sum_{i,j}|g_{t,ij}|\,q_{ij,t}
\le
\frac{\normone{\sigma}}{\sqrt{mn\,n_b}}.
\]
Plugging this into the generic bound yields \eqref{eq:thm_single_worker_bound}:
\[
\frac{1}{T}\sum_{t=0}^{T-1}\E\!\left[\frac{\normone{g_t}}{\sqrt{mn}}\right]
\le
\frac{f(W_0)-f^\ast}{\eta T}
+\frac{L_\ast}{2}\eta
+\frac{2\normone{\sigma}}{\sqrt{mn\,n_b}}.
\]
To obtain \eqref{eq:thm_single_worker_rate}, choose
\[
\eta = \sqrt{\frac{2(f(W_0)-f^\ast)}{L_\ast T}},
\]
which minimizes the function $\eta\mapsto \frac{f(W_0)-f^\ast}{\eta T} + \frac{L_\ast}{2}\eta$.
With this choice,
\[
\frac{f(W_0)-f^\ast}{\eta T} + \frac{L_\ast}{2}\eta
=
\sqrt{\frac{L_\ast(f(W_0)-f^\ast)}{T}}.
\]
Finally, set $n_b=T$ to obtain the stated $O(1/\sqrt{T})$ dependence for the noise term.
\end{proof}

\subsubsection{Proof of Theorem~\ref{thm:distributed}}
\label{app:proof_distributed}

\begin{proof}
Apply Theorem~\ref{thm:generic_signmuon} to the aggregated sign matrix $S_t=\bar{S}_t$ (majority vote),
so that $q_{ij,t}$ is replaced by $q^{\mathrm{MV}}_{ij,t}$.
Lemma~\ref{lem:q_majority_vote} states that for each entry $(i,j)$ with $g_{t,ij}\neq 0$,
\[
q^{\mathrm{MV}}_{ij,t} \le \frac{1}{\sqrt{M}} \cdot \frac{\sigma_{ij}}{\sqrt{n_b}\,|g_{t,ij}|}.
\]
Hence
\[
|g_{t,ij}|\,q^{\mathrm{MV}}_{ij,t} \le \frac{1}{\sqrt{M}} \cdot \frac{\sigma_{ij}}{\sqrt{n_b}}.
\]
Summing and dividing by $\sqrt{mn}$ gives
\[
\frac{1}{\sqrt{mn}}\sum_{i,j}|g_{t,ij}|\,q^{\mathrm{MV}}_{ij,t}
\le
\frac{1}{\sqrt{mn}}\cdot \frac{1}{\sqrt{M}}\cdot \frac{1}{\sqrt{n_b}}\sum_{i,j}\sigma_{ij}
=
\frac{\normone{\sigma}}{\sqrt{mn\,M\,n_b}}.
\]
Insert this bound into \eqref{eq:generic_bound} to obtain \eqref{eq:thm_distributed_bound},
and then optimize $\eta$ and choose $n_b=T$ exactly as in Theorem~\ref{thm:single_worker}.
\end{proof}

\subsubsection{Proof of Proposition~\ref{prop:polar_dual_opt}}
\label{app:proof_polar_dual_opt}

\begin{proof}
Let $G\in\R^{m\times n}$ and let $G=U\Sigma V^\top$ be an SVD with rank $r$.
For any $D$ with $\normop{D}\le 1$, define $B:=U^\top D V\in\R^{r\times r}$.
Because $U$ and $V$ have orthonormal columns,
\[
\normop{B} = \normop{U^\top D V}\le \normop{U^\top}\,\normop{D}\,\normop{V}\le 1.
\]
Now compute the inner product:
\[
\ip{G}{D}
=
\tr(G^\top D)
=
\tr(V\Sigma U^\top D)
=
\tr(\Sigma U^\top D V)
=
\tr(\Sigma B).
\]
Since $\Sigma$ is diagonal with nonnegative entries, and $\normop{B}\le 1$,
we have $|B_{kk}|\le \normop{B}\le 1$ for each $k$, hence
\[
\tr(\Sigma B) = \sum_{k=1}^r \Sigma_{kk} B_{kk} \le \sum_{k=1}^r \Sigma_{kk} = \normstar{G}.
\]
Thus $\ip{G}{D}\le \normstar{G}$ for all $\normop{D}\le 1$, proving
$\max_{\normop{D}\le 1}\ip{G}{D}\le \normstar{G}$.

For the reverse inequality, choose $D^\star = U V^\top$.
Then $\normop{D^\star}=1$ and
\[
\ip{G}{D^\star}
=
\tr(G^\top U V^\top)
=
\tr(V\Sigma U^\top U V^\top)
=
\tr(V\Sigma V^\top)
=
\tr(\Sigma)
=
\normstar{G}.
\]
Therefore $\max_{\normop{D}\le 1}\ip{G}{D}=\normstar{G}$ and $D^\star=\polar(G)$ attains it,
proving \eqref{eq:polar_attains_nuclear}.
\end{proof}

\subsubsection{Proof of Theorem~\ref{thm:muon_deterministic}}
\label{app:proof_muon_deterministic}

\begin{proof}
Recall $Q_t=\polar(g_t)$, $\normop{Q_t}=1$, and $W_{t+1}=W_t-\eta Q_t$.
Apply Lemma~\ref{lem:spectral_descent} with $W=W_t$ and $D=Q_t$:
\[
f(W_{t+1})
\le
f(W_t) - \eta \ip{g_t}{Q_t} + \frac{L_\ast}{2}\eta^2\normop{Q_t}^2
=
f(W_t) - \eta \ip{g_t}{Q_t} + \frac{L_\ast}{2}\eta^2.
\]
By Proposition~\ref{prop:polar_dual_opt}, $\ip{g_t}{Q_t}=\normstar{g_t}$.
Hence
\[
f(W_{t+1}) \le f(W_t) - \eta \normstar{g_t} + \frac{L_\ast}{2}\eta^2.
\]
Rearrange:
\[
\eta \normstar{g_t} \le f(W_t)-f(W_{t+1}) + \frac{L_\ast}{2}\eta^2.
\]
Sum over $t=0,\dots,T-1$:
\[
\eta\sum_{t=0}^{T-1}\normstar{g_t}
\le
f(W_0)-f(W_T) + T\cdot \frac{L_\ast}{2}\eta^2.
\]
Take expectation and use $f(W_T)\ge f^\ast$:
\[
\eta\sum_{t=0}^{T-1}\E[\normstar{g_t}]
\le
f(W_0)-f^\ast + T\cdot \frac{L_\ast}{2}\eta^2.
\]
Divide by $\eta T$ to obtain \eqref{eq:muon_det_bound}.
Optimizing $\eta$ yields \eqref{eq:muon_det_rate}.
\end{proof}

\subsubsection{Proof of Theorem~\ref{thm:muon_inexact}}
\label{app:proof_muon_inexact}

\begin{proof}
Apply Lemma~\ref{lem:spectral_descent} with $W=W_t$ and $D=Q_t$:
\[
f(W_{t+1})
\le
f(W_t) - \eta \ip{g_t}{Q_t} + \frac{L_\ast}{2}\eta^2\normop{Q_t}^2
\le
f(W_t) - \eta \ip{g_t}{Q_t} + \frac{L_\ast}{2}\eta^2,
\]
since $\normop{Q_t}\le 1$.
Decompose the inner product:
\[
\ip{g_t}{Q_t}
=
\ip{g_t}{\polar(g_t)} + \ip{g_t}{Q_t-\polar(g_t)}.
\]
By Proposition~\ref{prop:polar_dual_opt}, $\ip{g_t}{\polar(g_t)}=\normstar{g_t}$.
For the error term, use nuclear/spectral duality:
\[
\left|\ip{g_t}{Q_t-\polar(g_t)}\right|
\le
\normstar{g_t}\,\normop{Q_t-\polar(g_t)}
\le
\varepsilon_t \normstar{g_t}.
\]
Therefore
\[
\ip{g_t}{Q_t}
\ge
\normstar{g_t} - \varepsilon_t \normstar{g_t}
=
(1-\varepsilon_t)\normstar{g_t}.
\]
Plug this back into the descent inequality:
\[
f(W_{t+1})
\le
f(W_t) - \eta (1-\varepsilon_t)\normstar{g_t} + \frac{L_\ast}{2}\eta^2.
\]
Rearrange and sum over $t$:
\[
\eta\sum_{t=0}^{T-1}(1-\varepsilon_t)\normstar{g_t}
\le
f(W_0)-f(W_T) + T\cdot \frac{L_\ast}{2}\eta^2
\le
f(W_0)-f^\ast + T\cdot \frac{L_\ast}{2}\eta^2.
\]
Divide by $\eta T$ and use $(1-\varepsilon_t)\normstar{g_t}\ge \normstar{g_t} - \varepsilon_t\normstar{g_t}$ to obtain
\[
\frac{1}{T}\sum_{t=0}^{T-1}\normstar{g_t}
\le
\frac{f(W_0)-f^\ast}{\eta T} + \frac{L_\ast}{2}\eta + \frac{1}{T}\sum_{t=0}^{T-1}\varepsilon_t\normstar{g_t},
\]
which is \eqref{eq:inexact_muon_bound}. If $\varepsilon_t\le \bar\varepsilon<1$, then
\[
(1-\bar\varepsilon)\cdot \frac{1}{T}\sum_{t=0}^{T-1}\normstar{g_t}
\le
\frac{f(W_0)-f^\ast}{\eta T} + \frac{L_\ast}{2}\eta,
\]
giving the stated bound.
\end{proof}

\subsubsection{Proof of Theorem~\ref{thm:newton_schulz}}
\label{app:proof_newton_schulz}

\begin{proof}
We provide an idealized proof that captures the key contraction.
Let $Y_0 = X/\alpha$ and define the error matrix
\[
E_k := I - Y_k^\top Y_k \in \R^{n\times n}.
\]
Assumption \eqref{eq:ns_condition} is exactly $\normop{E_0}\le \rho<1$.

Using the Newton--Schulz update
\[
Y_{k+1} = \frac{1}{2} Y_k(3I - Y_k^\top Y_k)
= \frac{1}{2}Y_k(2I + E_k)
= Y_k\left(I + \frac{1}{2}E_k\right),
\]
we compute
\begin{align*}
Y_{k+1}^\top Y_{k+1}
&=
\left(I+\frac{1}{2}E_k\right)^\top Y_k^\top Y_k \left(I+\frac{1}{2}E_k\right)\\
&=
\left(I+\frac{1}{2}E_k\right) (I-E_k)\left(I+\frac{1}{2}E_k\right)
\qquad\text{(since $Y_k^\top Y_k = I-E_k$)}.
\end{align*}
Because $E_k$ commutes with itself, we can multiply the polynomials:
\[
\left(I+\frac{1}{2}E_k\right)\left(I - E_k\right)\left(I+\frac{1}{2}E_k\right)
=
I - E_k^2.
\]
Therefore
\[
E_{k+1} := I - Y_{k+1}^\top Y_{k+1} = E_k^2.
\]
Taking operator norms,
\[
\normop{E_{k+1}} = \normop{E_k^2} \le \normop{E_k}^2.
\]
By induction,
\[
\normop{E_k}\le \normop{E_0}^{2^k}\le \rho^{2^k}.
\]
In particular, $\normop{E_k}\to 0$, which implies $Y_k^\top Y_k\to I$ and $\normop{Y_k}\le 1$ for all $k$.

To convert this into a bound on $\normop{Y_k-\polar(X)}$, one uses the characterization of the polar factor
$Q=\polar(X)$ as $Q = X(X^\top X)^{-1/2}$ and the fact that the iteration is precisely performing an inverse-square-root
iteration on $X^\top X$.
A standard perturbation argument (see, e.g., \cite[Chapter~8]{higham2008functions}) yields
\[
\normop{Y_K - Q} \le \normop{E_K} \le \rho^{2^K}.
\]
This establishes \eqref{eq:ns_error}.
\end{proof}

\end{document}